\renewcommand{\@noticestring}{%
}
\newtheorem{thm}{Theorem}[section]
\newtheorem{prop}[thm]{Proposition}%
\newtheorem{corollary}[thm]{Corollary}%
\theoremstyle{remark}
\newtheorem{rem}[thm]{Remark}%
\newtheorem*{rem*}{Remark}
\numberwithin{equation}{section}
\newcommand{\leqnomode}{\tagsleft@true}
\newcommand{\reqnomode}{\tagsleft@false}
\newcommand{\RR}{\mathbb{R}}%
\newcommand{\NN}{\mathbb{N}}%
\newcommand{\EE}{\mathbb{E}}%
\renewcommand{\epsilon}{\varepsilon}
\newcommand{\setint}[1]{
\mathchoice{\ring{#1}}
           {{#1}^\circ}
           {{#1}^\circ}
           {{#1}^\circ}%
}
\let\mytodo\todo
\newcommand{\hltodo}[2]{#2}
\newcommand{\hltodo}[2]{\mytodo[noline]{\begin{Spacing}{1}#1\end{Spacing}\vspace{0.5\baselineskip}}\begin{myul}#2\end{myul}}
\let\todo\hltodo
\title{Mutation-Bias Learning in Games}
\author{%
  Johann Bauer\thanks{\texttt{\{first name\}.\{last name\}@city.ac.uk}}\\
  Dept. of Mathematics\\
  City, University of London, UK\\
  \And
  Sheldon West\\%
  Dept. of Computer Science\\
  City, University of London, UK\\
  \And
  Eduardo Alonso\\%
  Dept. of Computer Science\\
  City, University of London, UK\\
  \And
  Mark Broom\\%
  Dept. of Mathematics\\
  City, University of London, UK\\
}
\begin{document}
\widowpenalty0
\clubpenalty0
\setlength{\floatsep}{9pt}
\setlength{\textfloatsep}{9.0pt plus 2.0pt minus 4.0pt}
\setlength{\abovecaptionskip}{1pt}
\setlength{\belowcaptionskip}{0pt}

\maketitle

\begin{abstract}
We present two variants of a multi-agent reinforcement learning algorithm based on evolutionary game theoretic considerations. The intentional simplicity of one variant enables us to prove results on its relationship to a system of ordinary differential equations of replicator-mutator dynamics type, allowing us to present proofs on the algorithm’s convergence conditions in various settings via its ODE counterpart. The more complicated variant enables comparisons to Q-learning based algorithms. We compare both variants experimentally to WoLF-PHC and frequency-adjusted Q-learning on a range of settings, illustrating cases of increasing dimensionality where our variants preserve convergence in contrast to more complicated algorithms. The availability of analytic results provides a degree of transferability of results as compared to purely empirical case studies, illustrating the general utility of a dynamical systems perspective on multi-agent reinforcement learning when addressing questions of convergence and reliable generalisation.
\end{abstract}

\section{Introduction}%

Reinforcement learning algorithms have been employed in a wide range of problem settings with great success, e.g., \cite{silver_mastering_2017}, and for the single-agent case the conditions for convergence of, e.g., Q-learning have been clarified, \cite{watkins_q-learning_1992}.
However, for multi-agent reinforcement learning (MARL), questions of convergence are still very much open.
Even simple two-player settings, e.g. the Rock-Paper-Scissors (RPS) game, can exhibit chaotic behaviour under simple dynamics, \cite{sato_chaos_2002}, and make a rigorous \emph{a priori} analysis challenging.
For more complicated algorithms, an analysis beyond experimental evaluation is often hardly possible.
However, more general analyses are highly informative of why algorithms behave in a certain way and theoretical guarantees for at least the simplest of settings are highly desirable in order to assess how reliably MARL algorithms will generalise to similar settings.

In particular, as MARL algorithms often lead to stochastic discrete-time dynamic systems, insights from the fields of learning dynamics in games and of evolutionary game theory (EGT) have been particularly relevant. %
EGT approaches and specifically the established replicator dynamics (RD) have informed a number of constructions or analyses of learning algorithms in multi-agent settings, e.g.,
\cite{mertikopoulos_learning_2016, omidshafiei_-rank_2019}.
The potential of EGT to inform learning algorithms is illustrated, as a particularly prominent example, by the fact that the WoLF-PHC learning algorithm, \cite{bowling_multiagent_2002}, keeps track of the past average policy. In light of RD, this is particularly useful, as the time-average policy in RD converges to a Nash equilibrium under self-play in zero-sum games, e.g, \cite[prop. 3.6, p. 92]{weibull_evolutionary_1995}, 
providing an intuition for how WoLF-PHC can learn Nash equilibria in self-play in a number of settings.

\subsubsection*{Contribution}

Building on the relation between RD and a simple form of reinforcement learning, called Cross learning \cite{borgers_learning_1997, cross_stochastic_1973}, we formulate two variants of a new reinforcement learning algorithm: Mutation-bias learning with direct policy updates (MBL-DPU)--a least complexity modification of Cross learning-- and mutation-bias learning with logistic choice (MBL-LC).
Explicitly taking into account the stochasticity of the problem, we prove that MBL-DPU can be approximated by a mutation-perturbed replicator dynamics (\ref{eq:RMD}), specified in \cite{bauer_stabilization_2019}, a non-linear dynamics whose stability properties can still be studied analytically to a certain degree.
Although the Lyapunov stability and other properties of the continuous-time case do not always transfer to the discrete-time learning dynamics---a prominent example is the RPS game, \cite{weibull_evolutionary_1995}---we show that asymptotic stability in the continuous case does imply the convergence of the MARL algorithm.
Simple RD cannot have asymptotically stable interior euqilibria, e.g. \cite[lemma 1]{ritzberger_evolutionary_1995}. Hence, Cross learning is unable to learn interior equilibria and will quickly deviate from RD in cases of merely neutral stability, such as in RPS games. 
In contrast to RD and Cross learning, \ref{eq:RMD} allows interior equilibria to be asymptotically stable, \cite{bauer_stabilization_2019}, enabling the proposed MBL algorithm to overcome this fundamental limitation of Cross learning and approach interior Nash equilibria arbitrarily closely.
Hence, we can show that in the case of globally asymptotically stable equilibria, MBL processes revisit arbitrary neighbourhoods of such equilibria infinitely often almost surely, particularly in zero-sum games.
In contrast to more complicated algorithms, the simplicity of MBL allows an analytic approach to the question of convergence of MBL to an $\varepsilon$-equilibrium in a given game \emph{a priori}---be it zero-sum or not---and further understanding when convergence should not be expected, irrespective of parameter choices. To our knowledge, MBL is among the simplest uncoupled, in the sense of \cite{bowling_multiagent_2002, hart_uncoupled_2003}, algorithms that can learn interior equilibria and among the few such for which a more general rigorous dynamic system analysis is available.

The rest of this paper proceeds as follows: After relating our results to the literature, we state the necessary evolutionary game theoretic preliminaries. We then introduce the two MBL variants, MBL-DPU and MBL-LC, which demonstrates an alternative approach to include the mutation perturbation term closer to Q-learning inspired approaches, and state the propositions on the relation of MBL-DPU to \ref{eq:RMD} and the convergence properties of MBL-DPU.
We then illustrate the theoretical results with numerical experiments in a range of two-player games, as well as a three-player game, and compare the behaviours of the two MBL variants to those of frequency-adjusted Q-learning (FAQ), \cite{kaisers_frequency_2010}, and Win-or-Learn-Fast Policy-Hill-Climbing (WoLF-PHC), \cite{bowling_multiagent_2002}, demonstrating the utility of a rigorous dynamic system analysis in the study of MARL algorithms.

\subsubsection*{Related work}
A larger class of stochastic reinforcement learning rules is related to deterministic continuous-time systems of RD type in \cite{rustichini_optimal_1999}.
Systems of RD type with additional perturbations have been related to various learning rules, including such with entropy related perturbation terms, \cite{sato_coupled_2003}, and exponential learning based on a logit model, \cite{marsili_exact_2000}.
Some analyses focus specifically on Q-learning based learning algorithms.
For instance, \cite{kianercy_dynamics_2012} considers the stability and convergence properties of Q-learning in the two-player setting; however, the Q-values enter as expectations, not as random variables, and therefore the effects of stochasticity are not considered---a crucial factor in a rigorous analysis.
A similar approach is pursued by the frequency-adjusted Q-learning algorithm (FAQ) in \cite{tuyls_evolutionary_2006} with a corrected derivation given in \cite{kaisers_frequency_2010}. However, both strands start from assumptions which have not been proved, and therefore no theoretical guarantees can be inferred. 
Nonetheless, we choose FAQ-learning as a comparison, as \cite{kaisers_frequency_2010} claims it to be linked to an ODE system similar to \ref{eq:RMD} and as it is a sufficiently simple uncoupled algorithm very close to Q-learning, making it a natural candidate for comparison. 
As a second candidate for comparison, we choose WoLF-PHC, \cite{bowling_multiagent_2002}, since its variant WoLF-IGA is strongly linked to a dynamic systems perspective and WoLF-PHC, too, is an uncoupled and relatively simple algorithm, close to Q-learning. Although its theoretical analysis is more thorough than for FAQ, only the two-player two-action analysis of WoLF-IGA is available. 
Both algorithms have demonstrated that they are able to learn Nash equilibria in simple settings under self-play, where simpler algorithms such as Policy-Hill-Climbing would fail.

A separate approach to MARL convergence analysis is pursued via multiple timescales algorithms, where Q-value estimates are learned quicker than policy changes occur, e.g., \cite{collins_convergent_2003}.
Here, the convergence analysis relates to smoothed best-response dynamics. However, the timescale separation results in a fundamentally more complicated approach and more complicated algorithms.
For the case of $\epsilon$-greedy multi-agent Q-learning under stochastic payoffs, convergence conditions are given in \cite{chapman_convergent_2013}. However, this algorithm operates on joint actions, which requires agents to be able to observe the actions chosen by all agents, and is therefore not uncoupled in the sense of \cite{bowling_multiagent_2002}.

We do not take into account proximal policy optimization (PPO) algorithms, \cite{schulman_proximal_2017}, for our comparison, since they require an agent to construct an approximation of the actual target function and solve a constrained optimisation problem at each learning step with a suitable sampling strategy in-between learning and to keep track of a potentially large number of estimates. This results in a much more complicated algorithm than analysed here and convergence analysis even in the single-agent setting is challenging, e.g., \cite{liu_optimistic_2023}. We are not aware of a rigorous MARL convergence analysis in non-cooperative games, although experimental results in this direction exist, e.g., \cite{mali_policy-based_2023} for n-player RPS games with convergence only in very limited cases, or \cite{ratcliffe_win_2019} extending PPO to WoLF-PPO in experimental studies of Matching Pennies and two-player RPS.

\section{Preliminaries}%

As our analysis of multi-agent learning is formulated in the setting of (evolutionary) game theory, we give short definitions of the main concepts employed and refer the reader to the standard literature for details \cite[e.g.,][]{hofbauer_evolutionary_1998, weibull_evolutionary_1995}.%

\paragraph{Finite normal-form games.}
A normal-form game is a tuple $(P, A, r)$, where $P = \{1, \ldots, N\}$ represents the set of players, $A = \times_{i \in P} A_{i}$ where $A_{i} = \{1, \ldots, n_i\}$ is the set of pure strategies of each player $i$,\footnote{%
    $A$ is usually denoted $S$ in the game theory literature, and players are conceived as populations of pure strategies in the EGT literature. In the simplest case, pure strategies correspond to actions in the reinforcement learning literature. We use the terms `player' and `agent' synonymously.%
} and $r = (r_{i})_{i \in P}$ is a family of functions with $r_i: A \rightarrow \RR$ mapping the pure strategy profiles in $A$ to the payoffs of player $i$. 
For each player $i \in P$, we assume that the player chooses a pure strategy from $A_i$ according to some probability distribution $x_i$ over $A_i$, i.e., according to some tuple $(x_{ih})_{h \in A_i} \in \mathcal{D}_i := \{ \xi \in \RR^{A_i}_{\geq 0} :\: \sum_h \xi_{h} = 1 \}$.
We call such an $x_i$ the mixed strategy of player $i$.\footnote{This would be referred to as a policy in the reinforcement learning literature.}
We will call mixed strategies simply \emph{strategies}, where there is no danger of confusion.

\paragraph{Nash equilibrium.}
We call a strategy profile $x^* := (x^*_i)_{i \in P} \in \mathcal{D} := \times_{i \in P} \mathcal{D}_i$ a \emph{Nash equilibrium} if for all players $i \in P$ and all mixed strategies $x_i \in \mathcal{D}_i\setminus\{x^*_i\}$, we have
\begingroup%
\abovedisplayskip=3pt%
\belowdisplayskip=3pt%
\begin{align}\textstyle
    \EE[r_i(a)|x^*] \geq \EE[r_i(a)|(x_i, x^*_{-i})]
\end{align}%
\endgroup%
where $(x_i, x^*_{-i}) \in \mathcal{D}$ denotes the mixed strategy profile for which $(x_i, x^*_{-i})_{ih} = x_{ih}$ ($\forall h \in A_i$) and $(x_i, x^*_{-i})_{jh} = x^*_{jh}$ ($\forall j \in P\setminus\{i\}, h \in A_j$). 
The equilibrium is called a \emph{strict Nash equilibrium} if the inequality is strict for all $i \in P$. 
The well-known intuition of this concept is that no player has an incentive to deviate from the Nash equilibrium strategy given that all other players play the Nash equilibrium strategy profile, since for each player $i \in P$, $x^*_i$ is a \emph{best-response} to $x^*$. 
Equivalently, no pure strategy has a higher payoff than the Nash equilibrium strategy:
\begingroup%
\abovedisplayskip=3pt%
\belowdisplayskip=3pt%
\begin{align}\textstyle
    \forall i \in P, h \in A_i:\: \EE[r_i(a)|x^*] \geq \EE[r_i(a)|x^*, a_i = h] \:.
\end{align}%
\endgroup%
As a useful relaxation of this concept, we call a strategy profile $(\tilde{x}_i)_{i \in P} \in \mathcal{D}$ an \mbox{\emph{$\varepsilon$-equilibrium}} if%
\begingroup%
\abovedisplayskip=3pt%
\belowdisplayskip=3pt%
\begin{align}\textstyle
    \exists \varepsilon > 0 \: \forall i \in P, h \in A_i:\: \EE[r_i(a)|\tilde{x}] \geq \EE[r_i(a)|\tilde{x}, a_i = h] - \varepsilon \:,
\end{align}%
\endgroup%
i.e. every pure strategy is by at most $\epsilon$ better than $(\tilde{x}_i)_{i \in P}$, and for all players $i \in P$, $(\tilde{x}_i)_{i \in P}$ is an \emph{$\varepsilon$-best-response} to $\tilde{x}$.

\paragraph{Repeated games, learning and rationality.}
Given a finite normal-form game, we consider an infinitely repeated game to be a repetition of the normal-form game for each round $t \in \NN$. In particular, assuming that in each round $t$ the players choose a pure strategy profile $a(t)$ according to the mixed strategy profile $x(t) = (x_i(t))_{i \in P}$, these pure strategy profiles define a stochastic process $\{a(t)\}_{t \in \NN}$.
In turn, an algorithm which adapts the mixed strategy profile in each round $t$, defines a potentially stochastic process $\{ x(t) \}_{t \in \NN}$.
It is this resulting process and its properties which are the focus of our convergence analysis. 
Following the definition given by \cite{bowling_multiagent_2002}, we call such a process \emph{rational}, if a player $i$'s mixed strategy $\{ x_i(t) \}_{t \in \NN}$ converges to a best-response whenever all other players' strategies converge to a stationary policy. We call a process \emph{$\varepsilon$-rational} if it converges to an $\varepsilon$-best-response. It is clear that in the case of stationary policies for all other players, the focal player faces a Markov decision process and the best-response strategy maximises the player's average expected payoff. In the simplest case, where players cannot observe other players' actions and have no memory, as considered here, the usual state space and the state-dependency of policies disappear.

\paragraph{Replicator-mutator dynamics.}
We consider the multi-population replicator-mutator dynamics formulated in \cite{bauer_stabilization_2019}, which is a special case of general replicator-mutator dynamics \cite[e.g.,][]{page_unifying_2002}: 
For all $i \in P$, let $M_i > 0$ be a mutation parameter, $c_i \in \setint{\mathcal{D}_i}$ (denoting the interior of $\mathcal{D}_i$) some fixed parameter and $f_i: \mathcal{D} \rightarrow \RR^{A_i}$ a continuously differentiable fitness function.
Then the replicator-mutator dynamics is given for $i \in P$, $h \in A_i$ by
\begingroup%
\abovedisplayskip=3pt%
\belowdisplayskip=3pt%
\begin{align*}\tag{RMD}\label{eq:RMD}\textstyle
    \dot{x}_{ih}(t) = x_{ih}(t) \big( f_{ih}(x(t)) - \sum_k x_{ik}(t) f_{ik}(x(t)) \big) + M_i (c_{ih} - x_{ih}(t)) \:.
\end{align*}%
\endgroup%
In case that $M_i = 0$ for all $i \in P$, \ref{eq:RMD} reduces to the standard multi-population replicator dynamics (RD).
One possible (and usual) conceptualisation of the fitness of a pure strategy $h \in A_i$ is to assume that it is the expected payoff of playing $h$, given all other players' strategies, or more concretely, given a strategy profile $x \in \mathcal{D}$ let the fitness $f_{ih}$ satisfy
$%
    f_{ih}(x) = \EE[ r_i(a) | x, a_i = h ]
$. %
It is clear that all fitness functions are continuously differentiable in this case.

\begin{rem*}
The equilibria of \ref{eq:RMD}, also called \emph{mutation equilibria}, in general are not Nash equilibria of the underlying game. Instead, they are $\epsilon$-equilibria, where $\epsilon$ depends on $(M_i)_{i \in P}$ as shown in \cite{bauer_stabilization_2019}.
\end{rem*}

\section{Mutation-bias learning}%

We can now introduce the stochastic learning rules and specify their relation to \ref{eq:RMD}. 
We provide two variants of MBL: 
one, based on direct policy updates (MBL-DPU, alg. \ref{alg:MBL-DPU})--where the policy update corresponds to Cross learning, \cite{cross_stochastic_1973}, with a mutation bias as a perturbation term;
the other, based on logistic choice (MBL-LC, alg. \ref{alg:MBL-LC})--where the policy corresponds to logistic choice %
based on action-value estimates which are updated with a mutation bias perturbation.

\begin{algorithm}[h]
\caption{\textbf{(MBL-DPU)} MBL with direct policy update for generic player $i \in P$}
\label{alg:MBL-DPU}
\begin{algorithmic}[1]
    \STATE \textbf{Initialise:} Choose learning rate $\theta$, mutation parameters $M_i > 0$ and $c_i \in \setint{\mathcal{D}_i}$, initial $x_i \in \mathcal{D}_i$.
    
    \FORALL{times $t$}
      \STATE Select strategy $a_i \in A_i$ with probabilities $\Pr(a_i = h) = x_{ih}$ ($\forall h \in A_i$).%
     
      \STATE Observe payoff $r_i$ resulting from strategy profile $(a_j)_{j \in P}$.
     
      \STATE For all $h \in A_i$, set:
$\displaystyle\;
x_{ih} \leftarrow \begin{cases}
x_{ih} + \theta (1-x_{ih}) r_{i} + \theta M_i \left( c_{ih} - x_{ih} \right) & \text{ if } h = a_i,
\\
x_{ih} - \theta x_{ih} r_{i} + \theta M_i \left( c_{ih} - x_{ih} \right) & \text{ otherwise. }
\end{cases}
$
  \ENDFOR
\end{algorithmic}
\end{algorithm}

\paragraph{MBL with direct policy update (MBL-DPU).}
MBL-DPU, alg. \ref{alg:MBL-DPU}, is the simpler of the two variants with a direct policy update and no estimation of $Q$-values.
It is an additive linear perturbation of Cross learning with perturbation term $\theta M_i \left( c_{ih} - x_{ih} \right)$, line 5, and 
becomes identical to Cross learning, \cite{borgers_learning_1997, cross_stochastic_1973}, for $M_i = 0$ ($\forall i \in P$). 
In this sense it can be said to be a least complexity modification of Cross learning, since only few elementary computations are required in addition to simple Cross learning. 
We note that the assumption in Cross learning, that the payoffs $r_i$ be restricted to $[0,1]$ is not necessary. It suffices that payoffs are non-negative and bounded. In this case, $\theta$ has to be chosen small enough to ensure well-definition of MBL-DPU.
Note that this assumption is not restrictive for finite games, as boundedness is trivially satisfied for finite games and non-negativity can be ensured by adding a constant $C_i$ to all payoffs $r_i$, affecting neither the Nash equilibria nor the dynamics in the deterministic limit---a straightforward property of RD and \ref{eq:RMD}.

\begin{algorithm}[h]
\caption{\textbf{(MBL-LC)} MBL with logistic choice for generic player $i \in P$}
\label{alg:MBL-LC}
\begin{algorithmic}[1]
    \STATE \textbf{Initialise:} Choose learning rate $\theta$, $M_i > 0$ and $c_i \in \setint{\mathcal{D}_i}$, $Q_i \in \RR^{A_i}$. Choose $\beta > 0$, $\tau > 0$.

    \FORALL{times $t$}

        \STATE For all $h \in {A}_i$, set: %
      $\enspace %
      x_{ih} \leftarrow \frac{ e^{\tau Q_{ih}} }{ \sum_{k \in {A}_i} e^{\tau Q_{ik}}}.
      $%

        \STATE Select strategy $a_i \in A_i$ with probabilities $\Pr(a_i = h) = x_{ih}$ ($\forall h \in A_i$).%
     
        \STATE Observe payoff $r_i$ resulting from strategy profile $(a_j)_{j \in P}$.

        \STATE For $h = a_i$, set: %
        $\enspace %
            Q_{ih} \leftarrow Q_{ih} + \min\left\{ \frac{\beta}{x_{ih}}, 1 \right\} \theta \left( r_{i} + M_i \frac{c_{ih}}{x_{ih}} \right).
        $%
  \ENDFOR
\end{algorithmic}
\end{algorithm}

\paragraph{MBL with logistic choice (MBL-LC).}
Clearly, the simple perturbation in MBL-DPU can be combined with a wide class of transformations on the payoffs without affecting the additive character of the perturbation.
A somewhat more involved possibility to combine the mutation-like perturbation with a policy update is based on a Boltzmann distribution or multinomial logistic choice, %
as frequently encountered in Q-learning. %
In MBL-LC, alg. \ref{alg:MBL-LC}, the perturbation affects the action-value updates instead of the policy.
Hence, this version more closely resembles the algorithms analysed in \cite{kaisers_frequency_2010, kianercy_dynamics_2012}, and allows a closer comparison to FAQ.
In particular, restricting the adjustment in line 6 by applying a minimum is parallel FAQ.
One can see that the logistic choice policy can still be expressed as a policy update with modified payoffs:
\begingroup%
\abovedisplayskip=2pt%
\belowdisplayskip=3pt%
\begin{align}\textstyle
x_{ih} \leftarrow
\begin{cases}
x_{ih} + (1 - x_{ih}) \tilde{r}_i & \text{if } h=a_i ,
\\
x_{ih} - x_{ih} \tilde{r}_i & \text{otherwise,}
\end{cases}
\qquad \text{with} \enspace
\tilde{r}_i =
\frac{ x_{i a_i} ( e^{\tau \Delta Q_{i a_i} } - 1) }%
{ x_{i a_i} ( e^{\tau \Delta Q_{i a_i} } - 1) + 1 } ,
\end{align}%
\endgroup%
where $Q$ denotes an action-value function and $\Delta Q_{i a_i}$ denotes the update of the action-value of the chosen action $a_i$.
From this it is clear that an intermediate approach could be using the simpler MBL-DPU combined with Q-learning, which is equivalent to transforming payoffs accordingly.

\subsubsection*{Convergence of MBL-DPU}%

We address the question of convergence in two steps. First, we determine whether the stochastic process induced by the learning algorithm can be approximated by a deterministic dynamics. Second, we transfer the convergence properties of the deterministic dynamics to the stochastic process.
For MBL-DPU we have the following convergence result (proved in appendix \ref{app:proofs}):
\begin{prop}\label{prop:MBL-RMD}
For every time $T < \infty$, the family of stochastic processes $\{(X^\theta_{ih}(t))_{i,h}\}_{t \geq 0}$ induced by MBL-DPU converges to \ref{eq:RMD} in the sense that for all $\epsilon > 0$:
\begingroup%
\abovedisplayskip=\belowdisplayshortskip%
\belowdisplayskip=\belowdisplayshortskip%
\begin{align}\textstyle
    \sup_{x(0)}\, \Pr(\|X^\theta(n_\theta) - \Phi(x(0), T)\| > \epsilon ) \rightarrow 0 \quad \text{as} \quad \theta \rightarrow 0,
\end{align}%
\endgroup%
where $n_\theta \theta \rightarrow T$ for $\theta \rightarrow 0$, $x(0)$ is a.s. the initial state of the stochastic processes and $\Phi(x(0), \cdot)$ is the unique solution of \ref{eq:RMD} with $\Phi(x(0), 0) = x(0)$.
\end{prop}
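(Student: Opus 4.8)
The plan is to recognise this as a standard stochastic-approximation / mean-field convergence statement of the type proved by Börgers--Sarin for Cross learning, and to adapt that argument to accommodate the extra linear mutation term. Concretely, I would work on the natural time-interpolation of the discrete process: fix the learning rate $\theta$, let $X^\theta$ denote the discrete-time chain defined by line 5 of Algorithm \ref{alg:MBL-DPU}, and embed it in continuous time by the rescaling $t \mapsto \lfloor t/\theta \rfloor$, so that $n_\theta$ steps correspond to horizon $T = n_\theta\theta$. The goal is then the uniform (over initial conditions) convergence in probability of this interpolated process to the flow $\Phi(x(0),\cdot)$ of \ref{eq:RMD} on $[0,T]$.

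The key steps, in order, are as follows. First, compute the one-step conditional expected increment $\EE[\Delta X^\theta_{ih} \mid X^\theta = x]$. Taking the expectation over the sampled action $a_i$ (which is $h$ with probability $x_{ih}$) and over the other players' sampled actions, the Cross-learning part contributes $\theta\, x_{ih}\big(\EE[r_i \mid x, a_i = h] - \sum_k x_{ik}\EE[r_i \mid x, a_i = k]\big) = \theta\, x_{ih}\big(f_{ih}(x) - \sum_k x_{ik} f_{ik}(x)\big)$, while the perturbation term is deterministic and contributes exactly $\theta M_i(c_{ih} - x_{ih})$; summing gives $\theta$ times the right-hand side of \ref{eq:RMD}. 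Second, bound the conditional variance (equivalently the second moment) of the increment: since payoffs are bounded and all coordinates lie in the simplex, $\|\Delta X^\theta\|$ is $O(\theta)$ almost surely, hence the martingale-difference noise has conditional second moment $O(\theta^2)$. Third, decompose $X^\theta(n_\theta) - X^\theta(0)$ telescopically into a drift sum plus a martingale; the drift sum is a Riemann sum for $\int_0^T (\text{RHS of \ref{eq:RMD}})$, and the martingale term has variance $O(n_\theta \theta^2) = O(\theta T) \to 0$, so by Doob's inequality it vanishes uniformly on $[0,T]$ in probability. Fourth, control the discretisation error: the vector field of \ref{eq:RMD} is $C^1$ (polynomial plus linear, using $f_{ih}(x) = \EE[r_i(a)\mid x, a_i = h]$ which is multilinear), hence Lipschitz on the compact simplex $\mathcal D$, which also gives existence, uniqueness and non-escape of the solution $\Phi(x(0),\cdot)$, so that $x = \mathcal D$ is forward-invariant and the Euler scheme with step $\theta$ converges to the true flow. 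Fifth, combine the martingale estimate with a discrete Grönwall inequality applied to the Lipschitz vector field to close the bound, and observe that all the constants are uniform in $x(0) \in \mathcal D$ because $\mathcal D$ is compact; this yields the claimed $\sup_{x(0)} \Pr(\cdots) \to 0$.

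I would also make explicit the well-definedness caveat already flagged in the text: for $\theta$ small enough (depending on the payoff bound and on $\max_i M_i$), line 5 keeps each $x_{ih} \in [0,1]$ and preserves $\sum_h x_{ih} = 1$, so the chain genuinely lives in $\mathcal D$ and the forward-invariance of $\mathcal D$ under \ref{eq:RMD} matches it; this is a short verification but it is needed before any of the above estimates make sense.

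The main obstacle I anticipate is not any single estimate but assembling them into a clean uniform statement: one must be careful that the Grönwall constant, the martingale bound, and the discretisation bound are all uniform over $x(0)$ and hold simultaneously on the whole interval $[0,T]$ rather than pointwise in $t$, which is exactly what Doob's maximal inequality plus compactness of $\mathcal D$ delivers. A secondary technical point is handling the normalisation implicit in the simplex: the increments in different coordinates are correlated (they sum to the mutation drift, since the payoff part of the update sums to zero), so the noise is genuinely $\mathcal D$-tangent and one should either work in a reduced coordinate chart or simply carry the constraint along, noting it is automatically preserved. None of this is deep, but the bookkeeping is where the real work lies; the conceptual content is the standard Kushner--Clark / Börgers--Sarin ODE method, here with an affine perturbation that only makes the drift nicer.
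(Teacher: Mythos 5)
Your proposal is mathematically sound and would yield the result, but it takes a different route from the paper: you re-derive the mean-field limit from first principles via the drift--martingale decomposition, Doob's maximal inequality, and a discrete Gr\"onwall argument, whereas the paper's proof is organised entirely as a verification of the hypotheses of a citable black-box result, Theorem 8.1.1 of Norman (1972) (stated as Theorem \ref{thm:Norman} in the appendix). Concretely, the paper computes the same conditional drift $w_{ih}(x) = x_{ih}(u_{ih}(x) - \sum_k x_{ik}u_{ik}(x)) + M_i(c_{ih}-x_{ih})$ that you do, checks forward-invariance of the simplex for $\theta < (C+\overline{M})^{-1}$ exactly as you flag, and then discharges all of the remaining work --- the martingale variance bound, the Euler discretisation error, the Gr\"onwall closure, and the uniformity in $x(0)$ and in $n\theta \le T$ --- by observing that $w$, $s$ and $r$ are $\theta$-independent, smooth on the compact polytope, hence bounded and Lipschitz, so that Norman's conditions (a.1)--(c) hold; parts (A) and (B) of that theorem then deliver $\mathcal{O}(\theta)$ rates for both the mean deviation and the variance, and Chebyshev gives the convergence in probability. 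What the paper's approach buys is precisely the bookkeeping you identify as the real work: the simultaneous uniformity over $x(0)$ and over the whole horizon is packaged inside the cited theorem rather than assembled by hand. What your approach buys is self-containedness and explicit constants, at the cost of reproving a known result. One small correction to your closing remark: the coordinate increments of the update sum to zero (both the payoff part and the mutation part $\theta M_i\sum_h(c_{ih}-x_{ih})$ vanish when summed over $h\in A_i$), not to the mutation drift; the conclusion that the simplex constraint is automatically preserved is nevertheless correct.
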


\begin{rem*}
As discussed in \cite{borgers_learning_1997, norman_markov_1972}, proposition \ref{prop:MBL-RMD} on its own does not yield an analysis of the asymptotic behaviour of the stochastic process.
However, if a mutation equilibrium $x^M$ of \ref{eq:RMD} is asymptotically stable and $x(0)$ lies in the basin of attraction of $x^M$, then we have $\Phi(x(0), T) \rightarrow x^M$ as $T \rightarrow \infty$.
Hence, with the asymptotic stability of $x^M$, we have that for $T$ large enough, $\Phi(x(0),T)$ is arbitrarily close to $x^M$ and together with proposition \ref{prop:MBL-RMD}, any neighbourhood of $x^M$ will be reached by the learning process $\{X^\theta(t)\}_{t \geq 0}$ with an arbitrary degree of certainty after finitely many steps for suitable choice of $\theta$.
Although this does not imply that the process must remain in this neighbourhood afterwards, it will revisit the neighbourhood with arbitrary probability depending on $\theta$.
\end{rem*}

\paragraph{Attracting mutation limits.}
In \cite{bauer_stabilization_2019} it was shown that every game has at least one connected Nash equilibrium component that is approximated by mutation equilibria irrespective of the choice of the mutation parameter $c$, as $M \rightarrow 0$, called a \emph{mutation limit}.
Furthermore, it was shown that for the game of Matching Pennies the Nash equilibrium is approximated by asymptotically stable mutation equilibria, warranting the name \emph{attracting mutation limit} for such Nash equilibria.
This implies the following consequence (proved in appendix \ref{app:proofs}):

\begin{prop}\label{prop:MBL_attracting}\label{prop:MBL-DPU_attr_mut_lim}
If a unique Nash equilibrium $x^* \in \setint{\mathcal{D}}$ is an attracting mutation limit and $U$ a neighbourhood of $x^*$, then for every mutation parameter $c \in \setint{\mathcal{D}}$ there are $M > 0$, $\theta > 0$ such that the stochastic process $\{(X^\theta(t))\}_{t \in \NN_0}$ induced by MBL-DPU visits $U$ at a finite time a.s., i.e., with probability $1$ there is $S \in \NN_0$ with $X^\theta(S) \in U$.
In fact, $\{(X^\theta(t))\}_{t \in \NN_0}$ a.s. visits $U$ infinitely often.
\end{prop}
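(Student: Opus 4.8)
The plan is to combine the asymptotic-stability hypothesis with Proposition \ref{prop:MBL-RMD} to get the first visit, and then a recurrence/Markov argument to upgrade it to infinitely many visits.

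First I would unpack what "attracting mutation limit" gives us. By the result of \cite{bauer_stabilization_2019} invoked just above, since $x^*$ is an attracting mutation limit, for every $c \in \setint{\mathcal{D}}$ and every sufficiently small $M > 0$ there is a mutation equilibrium $x^M$ of \ref{eq:RMD} that is asymptotically stable and satisfies $x^M \to x^*$ as $M \to 0$. Fix such $c$ and shrink $M$ enough that (i) $x^M \in U$ and moreover $x^M$ together with a closed ball $\bar{B}(x^M, 2\rho) \subset U$ lies inside the basin of attraction of $x^M$, and (ii) some fixed open neighbourhood $V$ of $x^*$ with $\bar V \subset \setint{\mathcal{D}}$ lies in the basin of attraction of $x^M$ — this is possible because asymptotic stability is an open condition and $x^M$ is close to $x^*$. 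Then for any initial point $y \in \bar V$, the \ref{eq:RMD} flow $\Phi(y, T) \to x^M$ as $T \to \infty$; by compactness of $\bar V$ and continuous dependence on initial conditions, there is a single time $T = T(\rho) < \infty$ with $\|\Phi(y,T) - x^M\| < \rho$ for all $y \in \bar V$. Now apply Proposition \ref{prop:MBL-RMD} with this $T$ and $\epsilon = \rho$: since the convergence there is uniform in the initial state, there is $\theta > 0$ (and an integer $n_\theta$ with $n_\theta \theta \approx T$) such that, started from any $x(0) \in \bar V$, the MBL-DPU process satisfies $\|X^\theta(n_\theta) - \Phi(x(0),T)\| < \rho$ with probability at least, say, $1/2$; hence $X^\theta(n_\theta) \in \bar B(x^M, 2\rho) \subset U$ with probability at least $1/2$. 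In particular, starting from the prescribed $x(0)$ (assumed in $\setint{\mathcal D}$; if $x(0) \notin \bar V$ one first runs the process until it enters $\bar V$, which happens with positive probability by the same Proposition \ref{prop:MBL-RMD} argument applied to a flow segment steering into $V$, using that $x^*$ attracts under \ref{eq:RMD}), we get $\Pr(X^\theta(S) \in U \text{ for some finite } S) \ge 1/2$.

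To turn "with probability $\ge 1/2$" into "almost surely", and then into "infinitely often", I would exploit the Markov property of $\{X^\theta(t)\}$ and the fact that the one-step transition kernel has full support near $x^M$. Concretely: whatever the current state $X^\theta(t) = z$, after one MBL-DPU update each coordinate moves by an amount of order $\theta$ and the chosen action $a_i$ is any $h$ with probability $z_{ih} > 0$ once $z$ is interior; since $c_i \in \setint{\mathcal D_i}$ the mutation term keeps the process in $\setint{\mathcal D}$, so from any interior state the process reaches a fixed small neighbourhood of $x^*$ (in particular $V$) within a uniformly bounded number of steps with probability bounded below by some $\delta > 0$ independent of the starting state in a compact interior set — this is a standard "controllability with positive probability" estimate for the piecewise-deterministic/stochastic-approximation chain. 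Combining this with the previous paragraph, from any interior state there is a bounded number of steps $L$ and a constant $p_0 > 0$ with $\Pr(X^\theta \text{ visits } U \text{ within } L \text{ steps} \mid X^\theta(t) = z) \ge p_0$ for all interior $z$ in the relevant compact set. A Borel–Cantelli / geometric-trials argument over disjoint blocks of length $L$ then gives that $U$ is visited infinitely often almost surely.

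The main obstacle I expect is the uniform lower bound $p_0$ in the last step: Proposition \ref{prop:MBL-RMD} is a finite-horizon statement whose rate depends only on $\theta$ and $T$, but to iterate it I need that, after leaving $U$, the process returns to the compact interior region $\bar V$ from which the deterministic-approximation argument can be restarted, and I need this return probability bounded away from $0$ uniformly. The delicate points are (a) the process could in principle drift toward the boundary $\partial \mathcal D$ where the action-selection probabilities degenerate and the Cross-learning increments vanish — here one uses that the mutation bias $\theta M_i(c_{ih} - x_{ih})$ pushes each coordinate back toward $c_{ih} > 0$, giving a uniform positive lower bound on coordinates that is restored within finitely many steps with positive probability; and (b) making sure $\theta$ can be chosen once and for all to serve all the finitely many deterministic flow segments used (entering $V$, then contracting to $x^M$) — this is handled by taking the minimum of finitely many admissible $\theta$'s. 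Everything else is bookkeeping with the triangle inequality, compactness, and the Markov property.
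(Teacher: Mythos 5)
Your overall skeleton---uniform finite-horizon approximation via Proposition \ref{prop:MBL-RMD}, a uniform deterministic hitting time obtained by compactness, and then geometric trials over blocks via the Markov property---is exactly the paper's strategy (Proposition \ref{prop:finite_time} and its corollary in Appendix \ref{app:proofs}). The difference, and the genuine gap, is in how you get a lower bound on the probability of entering $U$ that is uniform over \emph{all} states the process can occupy, not just over your compact set $\bar V$ inside the local basin of attraction. You patch this twice (for initial states outside $\bar V$, and for returns after leaving $U$) with a claimed ``controllability with positive probability'' estimate: that from any interior state the chain reaches $V$ within a uniformly bounded number of steps with probability at least some $\delta>0$. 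This is not a standard fact for this chain and you give no argument for it. In MBL-DPU the increment, conditional on the realized joint action, is \emph{deterministic} and of size $O(\theta)$; the reachable set is generated by the finitely many vector fields indexed by joint action profiles, and whether one can steer from an arbitrary point of $\mathcal{D}$ into a prescribed small neighbourhood of $x^*$ is a nontrivial control problem that could fail (and even if it holds, the number of steps needed is of order $1/\theta$, so bounding the success probability below uniformly in $\theta$ is delicate). Since this estimate carries the entire burden of both the almost-sure first visit and the infinitely-often upgrade, the proof as written does not close.

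The paper avoids needing any such controllability by working with \emph{global} asymptotic stability of the mutation equilibrium $x^M$ (Proposition \ref{prop:finite_time}): it defines the hitting time $\tau(x)=\inf\{T>0: \Phi(x,T)\in V\}$ of a Lyapunov neighbourhood $V$, proves $\tau$ is continuous via a Lipschitz/Gronwall estimate on $\Phi$, and takes $T=\sup_{x\in\mathcal{D}}\tau(x)<\infty$ by compactness of the \emph{whole} simplex $\mathcal{D}$. Then Proposition \ref{prop:MBL-RMD}, whose convergence is already uniform over initial states in $\mathcal{D}$, gives a single $\eta>0$ with $\Pr(X^\theta(n_\theta)\in U \mid X^\theta(0)=x)>\eta$ for \emph{every} $x\in\mathcal{D}$, and the geometric-trials argument needs nothing more. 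If you want to keep your local-stability formulation, you would have to either (i) prove the controllability estimate for this specific update rule, or (ii) strengthen the stability hypothesis to global attraction (which is what the paper's auxiliary proposition assumes, and what the uniqueness of the Nash equilibrium is implicitly meant to deliver) so that the deterministic flow, rather than the noise, does the steering from every starting point.
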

In contrast to MBL-DPU, we do not have a proof of an analogous result for MBL-LC, yet.
In \cite{kaisers_frequency_2010, kianercy_dynamics_2012} it is assumed that FAQ, a similar logistic choice learning rule based on Q-learning, converges to a perturbation of the replicator dynamics, albeit no proof is given.
Although it seems plausible for MBL-LC to behave similarly to MBL-DPU, the experimental results indicate that MBL-LC is likely more sensitive to the choice of learning rate than MBL-DPU, since the logistic choice can cause a stronger variance of the strategy at each learning step, as indicated in the more detailed results for MBL-LC in appendix \ref{app:experiments}. The larger variance in the learning step is also the reason why our proof strategy is considerably more challenging for MBL-LC. %

\paragraph{Perturbation creates a trade-off between accuracy and speed.}
We note that neither MBL-DPU nor MBL-LC converge to a Nash equilibrium but only to an {$\epsilon$\nobreakdash-equilibrium} and in particular, that both stay away from the boundary of $\mathcal{D}$.
For MBL-DPU this is clear from the fact that the equilibria of \ref{eq:RMD} are not Nash equilibria and that the boundary of $\mathcal{D}$ is repelling.
For MBL-LC this is also due to the exploration parameter $\tau$.
For the latter, it is further the case that $\tau$ cannot be let to approach $\infty$ as this collides with the $\theta \rightarrow 0$ limit and makes the time derivative of the policy unbounded. This results in a highly increased variance in the stochastic process, preventing effective learning of equilibria.
This particular aspect applies also to other logistic choice based algorithms, particularly FAQ. 
However, if MBL-LC and FAQ indeed converge to the corresponding ODE systems, then these include $\tau$ as a simple scaling parameter.
Since constant positive rescalings do not change the trajectories, the systems can be rescaled by $1/\tau$ in such a way that $\tau$ effectively regulates the perturbation's strength relative to the replicator dynamics.
In the case of \ref{eq:RMD}, $1/\tau$ can be absorbed by the mutation strength $M$.
Thus an increase of $\tau$ has the same effect as a decrease of $M$ which results in all mutation equilibria moving closer to a Nash equilibrium, as desired. 
A reduction in the perturbation strength also results in a longer time to approach equilibria and this creates a trade-off between accuracy and speed for both MBL-LC and MBL-DPU.

\section{Experimental results}\label{sec:exp_res}%

We illustrate the theoretical results in a number of experimental settings: the Prisoner's Dilemma (PD), Matching Pennies (MP), Rock-Paper-Scissors (RPS) with 3, 5 and 9 available strategies, and the three-player Matching Pennies (3MP) games. We compare MBL-DPU and MBL-LC to FAQ, \cite{kaisers_frequency_2010}, and WoLF-PHC, \cite{bowling_multiagent_2002}. 
For details on the games' payoffs and further experiments, cf. appendix \ref{app:experiments}.

\paragraph{Prisoner's Dilemma (PD).}
PD is an example of a game with a strict Nash equilibrium at a vertex of the joint strategy space $\mathcal{D}$. It is known that strict Nash equilibria are asymptotically stable under RD, e.g., \cite{weibull_evolutionary_1995}. In this case, plain Cross learning would also converge to the Nash equilibrium. It was shown that \ref{eq:RMD} does not destabilise asymptotically stable equilibria of RD \cite[lemma 4.8]{bauer_stabilization_2019}. Hence, the mutation equilibrium resulting from the mutation perturbation remains asymptotically stable and, with our result, MBL-DPU also learns an approximation of the Nash equilibrium. 
In this sense, PD is the least challenging setting in terms of the ease with which the Nash equilibrium can be learned.
The setting serves mainly to illustrate the fact that the learned equilibria of MBL-DPU and MBL-LC in fact lie away from the boundary Nash equilibrium, in particular since mutation pushes the trajectories away from the boundary of $\mathcal{D}$, in contrast to the other two algorithms.
With decreasing mutation strength $M$, both algorithms are able to better approach the Nash equilibrium, as would be expected from \ref{eq:RMD}.
This case also illustrates that the more elementary MBL-DPU converges more slowly than either of MBL-LC, FAQ, or WoLF-PHC. 
For more details and figures on this benign case, we refer the reader to appendix \ref{app:exp_spec_PD}.
\begin{figure}[h] %
    \begin{minipage}[t][][t]{0.67\textwidth}
    \begin{center}
    	\begin{subfigure}[t]{130pt}
    	\centering
        \hspace{-8bp}
        \includegraphics[width=125pt, trim={0 1030bp 0 0}, clip ]{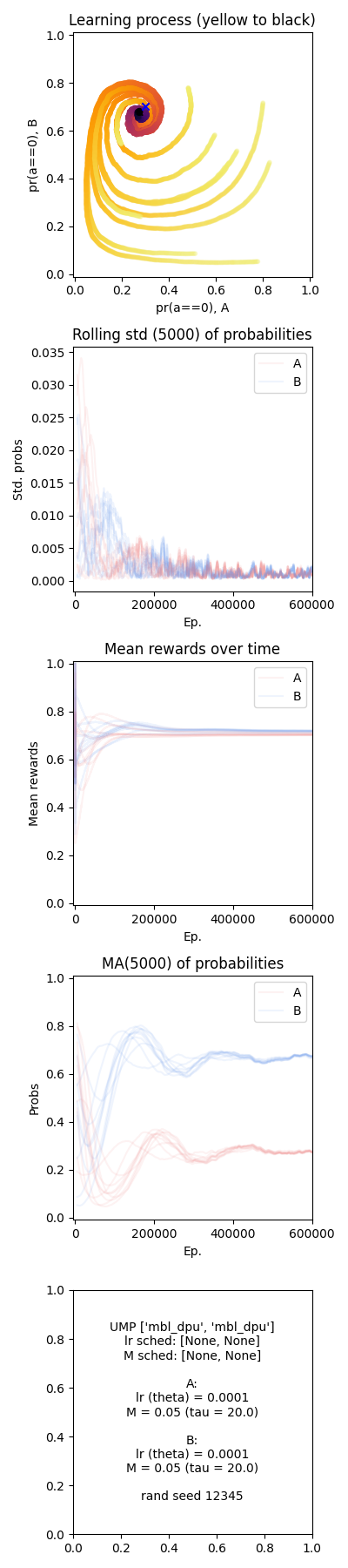}
        \captionsetup{justification=justified,singlelinecheck=true}
    	\subcaption{MBL-DPU with $M^{-1} = 20$.}
        \label{fig:MP_MBL-DPU_20}
    	\end{subfigure}
    	\begin{subfigure}[t]{130pt}
    	\centering
        \hspace{-8bp}
        \includegraphics[width=125pt, trim={0 1030bp 0 0}, clip ]{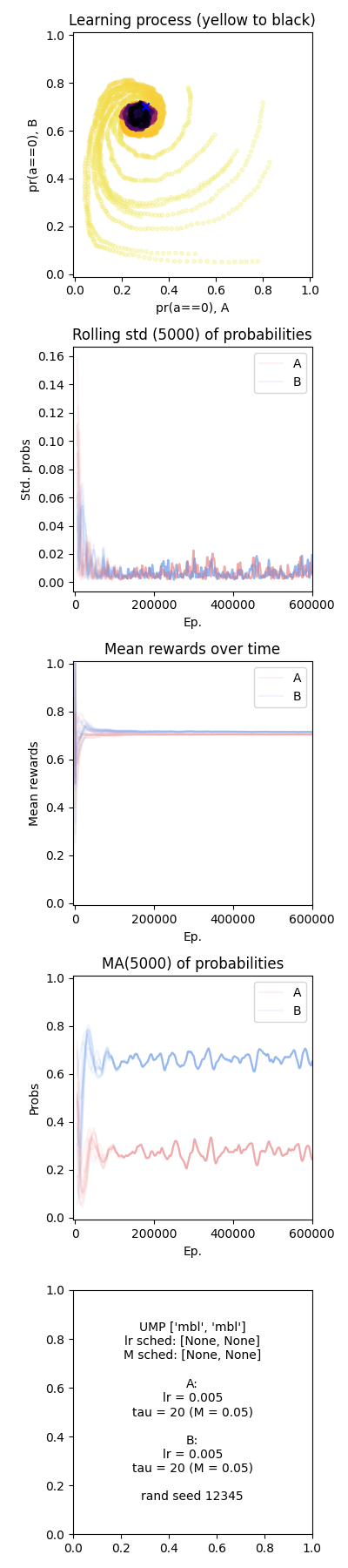}
        \captionsetup{justification=justified,singlelinecheck=true}
    	\caption{MBL-LC with $M^{-1} = \tau = 20$.}
        \label{fig:MP_MBL-LC_20}
        \end{subfigure}
        \\[5bp]
    	\begin{subfigure}[t]{130pt}
    	\centering
        \hspace{-8bp}
        \includegraphics[width=125pt, trim={0 1030bp 0 0}, clip ]{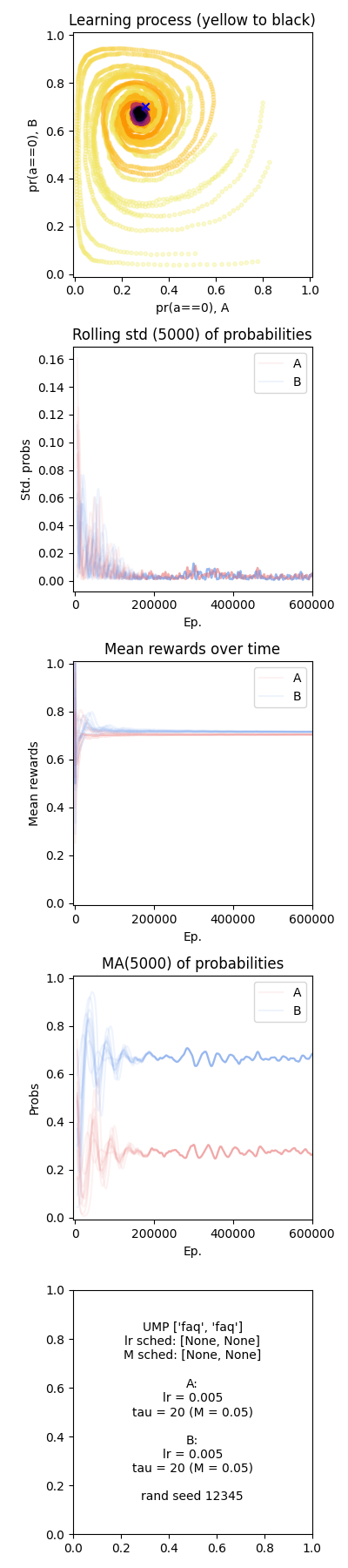}
        \captionsetup{justification=justified,singlelinecheck=true}
    	\caption{FAQ with $\tau = 20$.}
        \label{fig:MP_FAQ_20}
    	\end{subfigure}
        \begin{subfigure}[t]{130pt}
    	\centering
        \hspace{-8bp}
    	\includegraphics[width=125pt, trim={0 1030bp 0 0}, clip ]{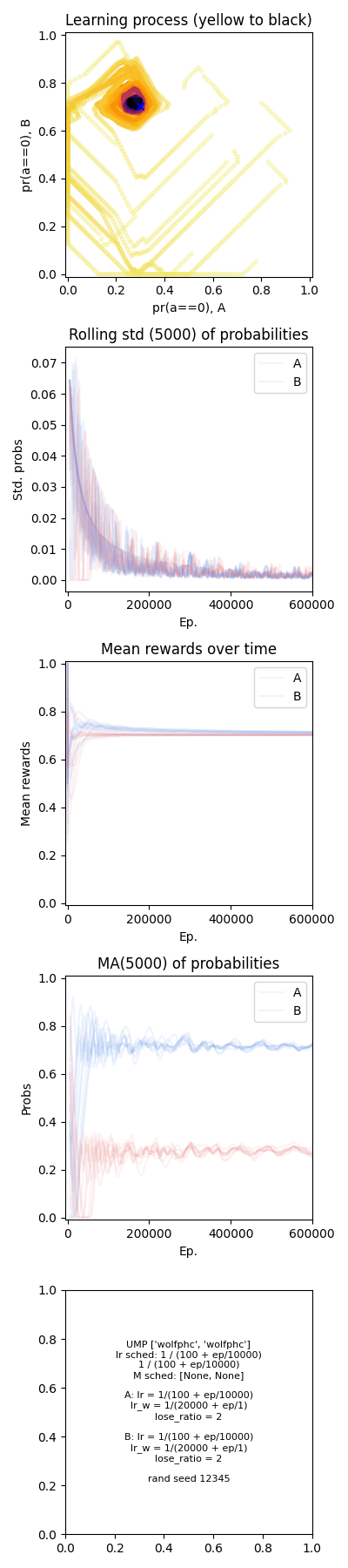}
        \captionsetup{justification=justified,singlelinecheck=true}
        \caption{WoLF-PHC with initial learning rate $10^{-1}$ for $Q$, win learning rate $1/2 \cdot 10^{-4}$.}
        \label{fig:MP_WoLF-PHC}
    	\end{subfigure}
    	\hfill
    \end{center}
    \end{minipage}
    \hfill
    \begin{minipage}[][][t]{0.3\textwidth}
        \captionsetup{justification=justified,singlelinecheck=true}
        \caption[Self-play on the MP game.]{Self-play on the MP game; for 10 different initial conditions.
        Each subfigure shows the ten trajectories in the projection onto the first components of the players' strategies, in this case the `defect' strategy, with the first player on the horizontal axis and the second on the vertical axis.
        Points coloured yellow correspond to earlier points in time, changing over orange and violet to black for later points in time. 
        The position of the game's Nash equilibrium is marked with a blue cross in the projection plane.
        }
        \label{fig:MP_all4}
    \end{minipage}
    \hfill
\end{figure}
\paragraph{Zero-sum games---Matching Pennies (MP).}
As a second, structurally different case, we consider zero-sum games %
which have interior Nash equilibria.
For the games considered here it is straightforward to check that the eigenvalues of the Jacobian of \ref{eq:RMD} in the neighbourhood of the Nash equilibrium only have negative real parts. Equivalently, one can check that the eigenvalues of the Jacobian of RD are purely imaginary in the neighbourhood of the Nash equilibrium and consider that \ref{eq:RMD} shifts the eigenvalues towards the negative half-plane, rendering the Nash equilibrium an attracting mutation limit.
With propositions \ref{prop:MBL-RMD} and \ref{prop:MBL_attracting}, respectively, MBL-DPU is guaranteed to converge in these specific cases,\footnote{In more complex cases with multiple equilibria, convergence depends on the initial state lying in the basin of attraction of an equilibrium.}
with a general result on convergence and stability of \ref{eq:RMD} in zero-sum settings in preparation. 
In fact, we observe convergence in the MP setting for MBL-DPU, MBL-LC, as well as our comparisons, FAQ learning and WoLF-PHC, fig. \ref{fig:MP_all4}. 
This setting illustrates that MBL-DPU overcomes the limitations of Cross learning at a minimal cost in increased complexity. %
Similar to the PD setting, MBL-DPU converges more slowly than the more complicated algorithms, MBL-LC, FAQ, or WoLF-PHC.
With MP being a planar system and the Poincaré-Bendixson theorem, the complexity of the system is still relatively small.

\paragraph{Zero-sum games---Rock-Paper-Scissors (RPS).}
For the higher dimensional settings, i.e., RPS with 3, 5 and 9 strategies, we still observe convergence for MBL-DPU, fig. \ref{fig:RPS_MBL-DPU_20}, as guaranteed by the Nash equilibrium being an attracting mutation limit. Naturally, the trajectories of the resulting 4, 8 and 16 dimensional systems appear less intuitive in the 2D-projection. 
For MBL-LC, fig. \ref{fig:RPS_MBL-LC_20}, and FAQ, fig. \ref{fig:RPS_FAQ_20}, we observe convergence in the RPS-3 case, but both algorithms deteriorate in higher dimensions, MBL-LC for RPS-9, fig. \ref{fig:RPS9_MBL-LC_20}, and FAQ for RPS-5 and RPS-9, figs. \ref{fig:RPS5_FAQ_20} and \ref{fig:RPS9_FAQ_20}, with both showing the convergence region splitting up such that some trajectories stop approximating the Nash equilibrium. 
Similarly, while WoLF-PHC seems to approach the Nash equilibrium in RPS-3 and RPS-5, fig. \ref{fig:RPS_WoLF-PHC}, it loses the ability to learn the Nash equilibrium for RPS-9, fig. \ref{fig:RPS9_WoLF-PHC}, with trajectories seemingly getting stuck near the boundary of $\mathcal{D}$.
\begin{figure}[t] %
    \begin{center}
    	\begin{subfigure}[t]{130pt}
    	\centering
        \hspace{-8bp}
        \includegraphics[width=125pt, trim={0 1030bp 0 0}, clip ]{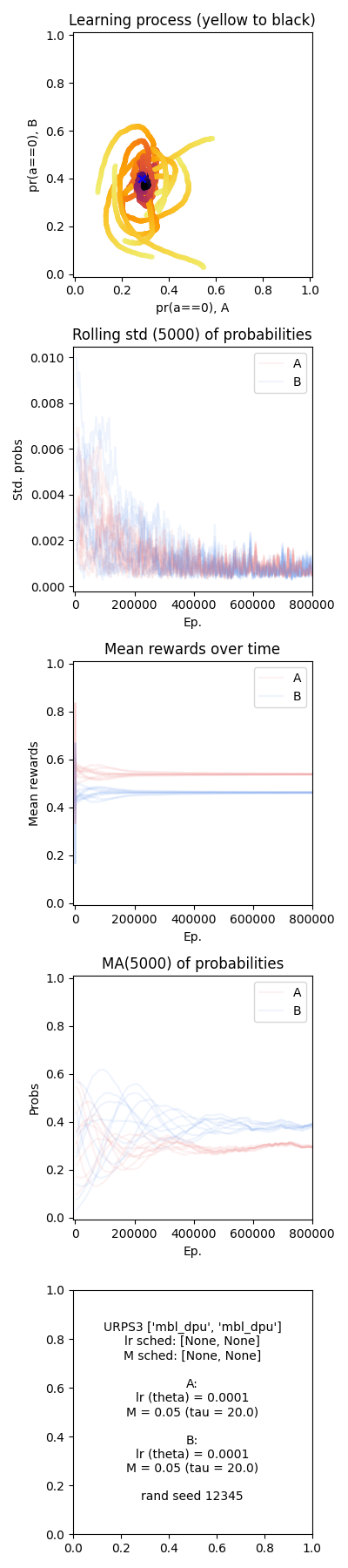}
        \captionsetup{justification=justified,singlelinecheck=true}
    	\subcaption{MBL-DPU on RPS-3.}
        \label{fig:RPS3_MBL-DPU_20}
    	\end{subfigure}
    	\begin{subfigure}[t]{130pt}
    	\centering
        \hspace{-8bp}
        \includegraphics[width=125pt, trim={0 1030bp 0 0}, clip ]{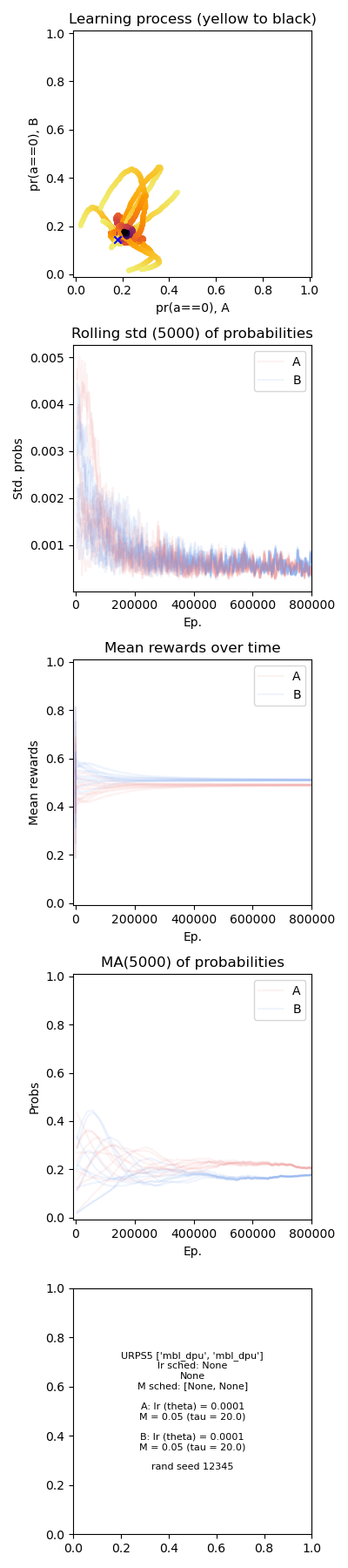}
        \captionsetup{justification=justified,singlelinecheck=true}
    	\subcaption{MBL-DPU on RPS-5.}
        \label{fig:RPS5_MBL-DPU_20}
        \end{subfigure}
    	\begin{subfigure}[t]{130pt}
    	\centering
        \hspace{-8bp}
        \includegraphics[width=125pt, trim={0 1030bp 0 0}, clip ]{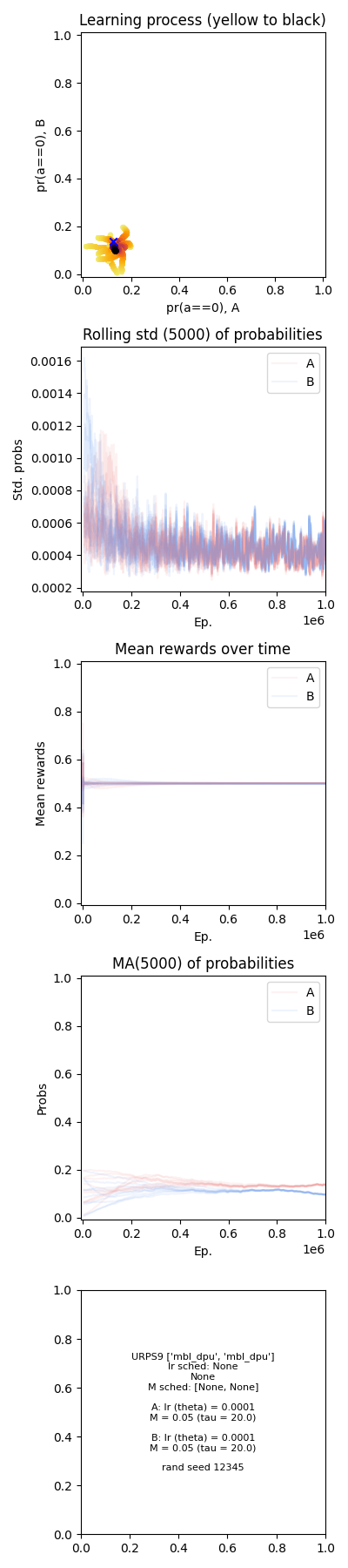}
        \captionsetup{justification=justified,singlelinecheck=true}
    	\subcaption{MBL-DPU on RPS-9.}
        \label{fig:RPS9_MBL-DPU_20}
    	\end{subfigure}
    \end{center}
        \captionsetup{justification=justified,singlelinecheck=true}
        \caption[MBL-DPU in self-play on the RPS games.]{Self-play of MBL-DPU on RPS-3, RPS-5 and RPS-9 games, with $M^{-1} = 20$.
        }
        \label{fig:RPS_MBL-DPU_20}
\end{figure}
\begin{figure}[h] %
    \begin{center}
    	\begin{subfigure}[t]{130pt}
    	\centering
        \hspace{-8bp}
        \includegraphics[width=125pt, trim={0 1030bp 0 0}, clip ]{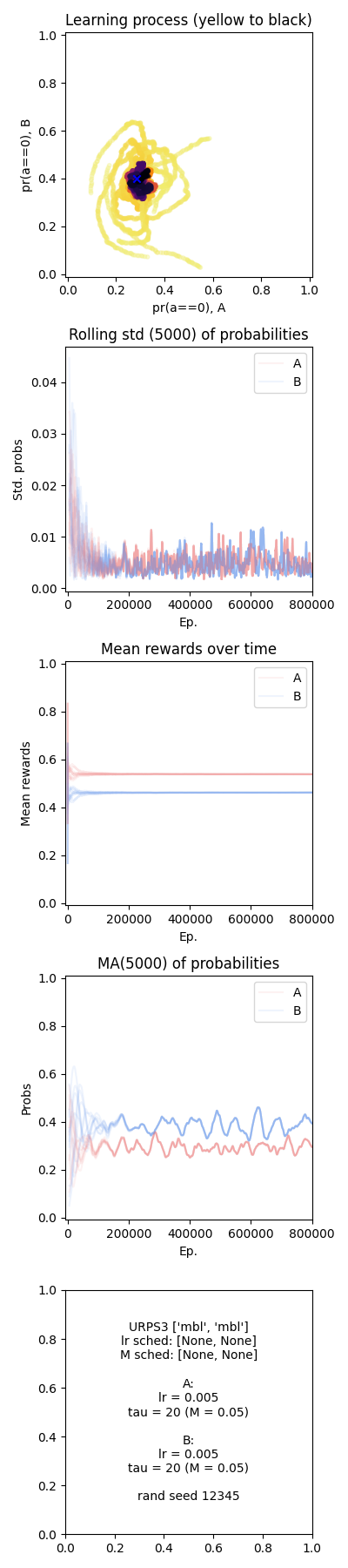}
        \captionsetup{margin=8pt}
    	\subcaption{MBL-LC on RPS-3.}
        \label{fig:RPS3_MBL-LC_20}
    	\end{subfigure}
    	\begin{subfigure}[t]{130pt}
    	\centering
        \hspace{-8bp}
        \includegraphics[width=125pt, trim={0 1030bp 0 0}, clip ]{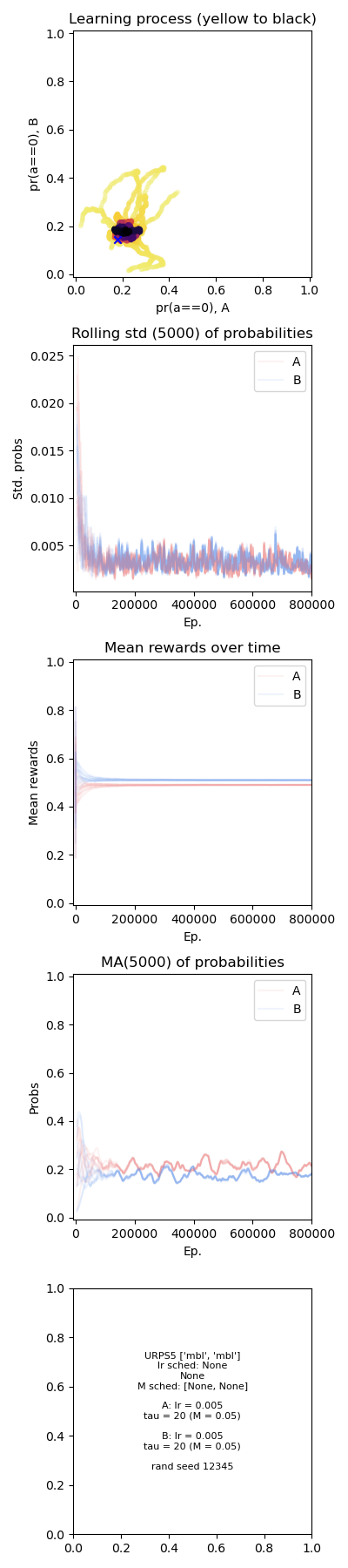}
        \captionsetup{margin=8pt}
    	\subcaption{MBL-LC on RPS-5.}
        \label{fig:RPS5_MBL-LC_20}
        \end{subfigure}
    	\begin{subfigure}[t]{130pt}
    	\centering
        \hspace{-8bp}
        \includegraphics[width=125pt, trim={0 1030bp 0 0}, clip ]{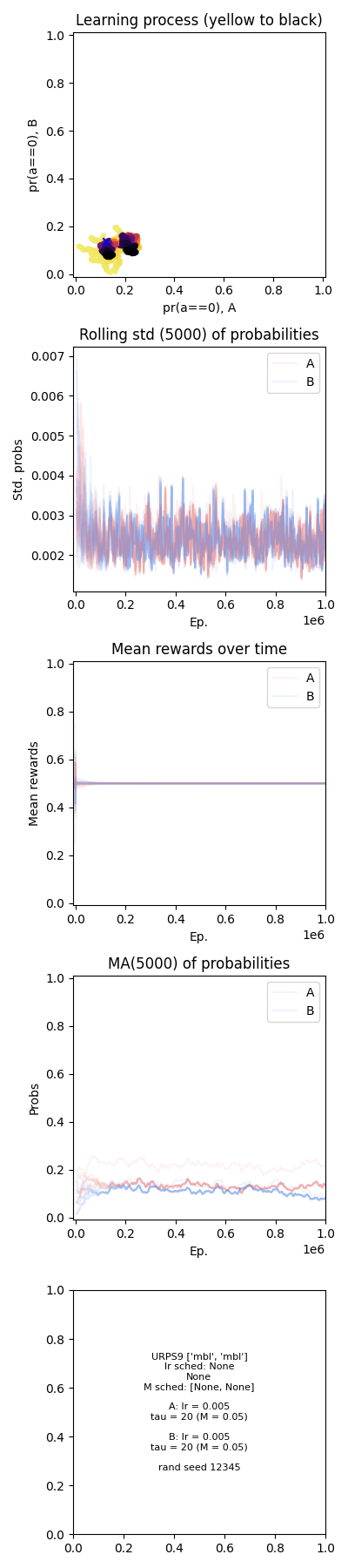}
        \captionsetup{margin=8pt}
    	\subcaption{MBL-LC on RPS-9.}
        \label{fig:RPS9_MBL-LC_20}
    	\end{subfigure}
    \end{center}
        \caption[MBL-LC in self-play on the RPS games.]{Self-play of MBL-LC on RPS-3, RPS-5 and RPS-9 games, with $M^{-1} = \tau = 20$.
        }
        \label{fig:RPS_MBL-LC_20}
\end{figure}
\begin{figure}[h] %
    \begin{center}
    	\begin{subfigure}[t]{130pt}
    	\centering
        \hspace{-8bp}
        \includegraphics[width=125pt, trim={0 1030bp 0 0}, clip ]{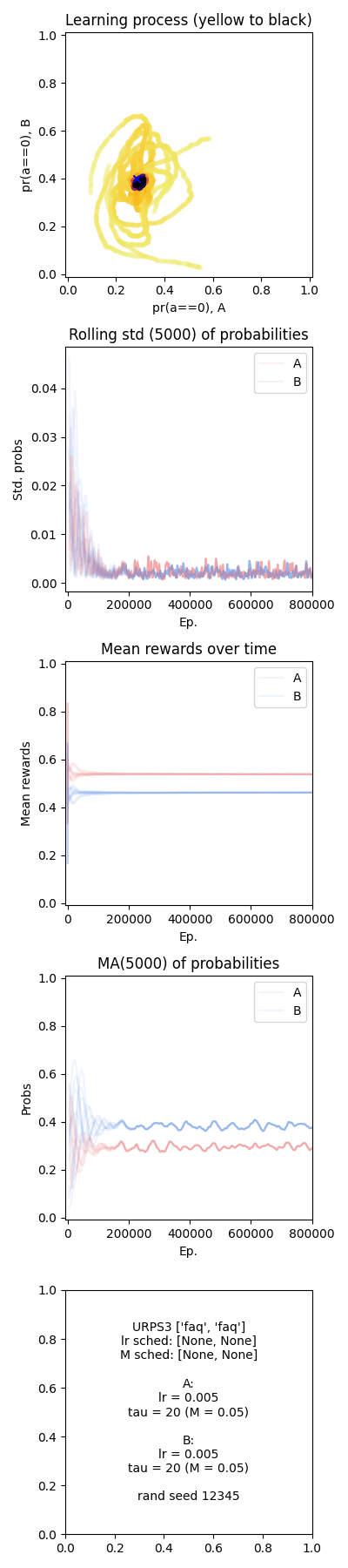}
        \captionsetup{margin=8pt}
    	\subcaption{FAQ on RPS-3.}
        \label{fig:RPS3_FAQ_20}
    	\end{subfigure}
    	\begin{subfigure}[t]{130pt}
    	\centering
        \hspace{-8bp}
        \includegraphics[width=125pt, trim={0 1030bp 0 0}, clip ]{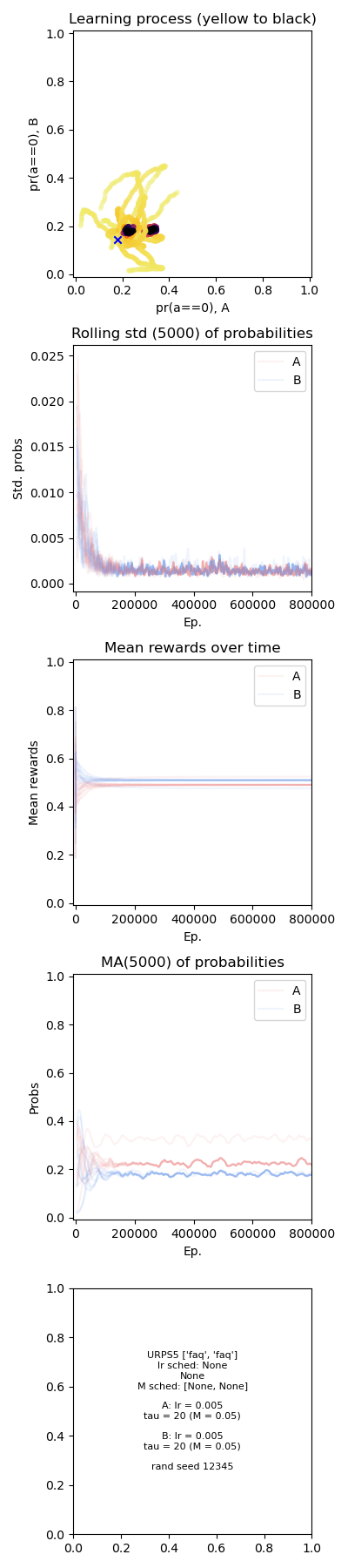}
        \captionsetup{margin=8pt}
    	\subcaption{FAQ on RPS-5.}
        \label{fig:RPS5_FAQ_20}
        \end{subfigure}
    	\begin{subfigure}[t]{130pt}
    	\centering
        \hspace{-8bp}
        \includegraphics[width=125pt, trim={0 1030bp 0 0}, clip ]{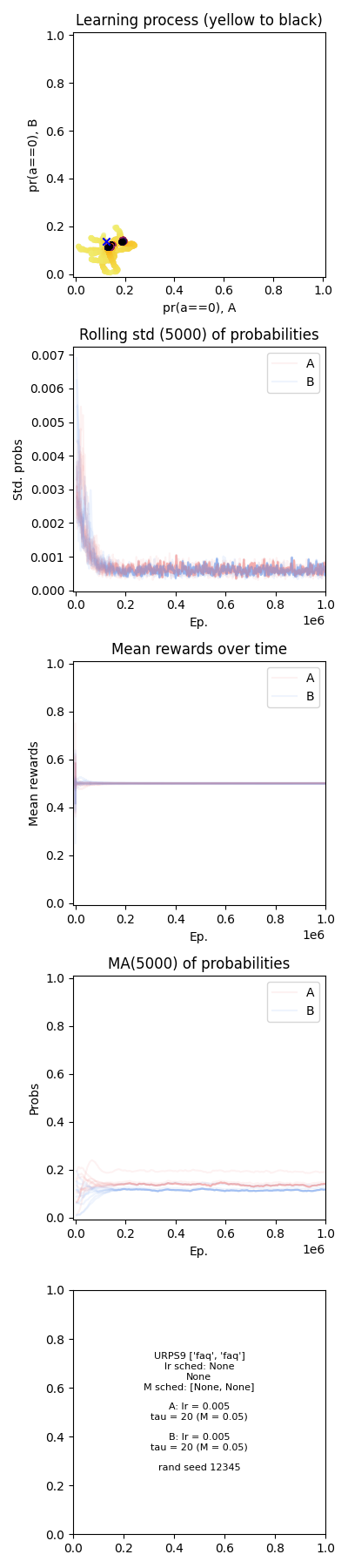}
        \captionsetup{margin=8pt}
    	\subcaption{FAQ on RPS-9.}
        \label{fig:RPS9_FAQ_20}
    	\end{subfigure}
    \end{center}
        \caption[FAQ in self-play on the RPS games.]{Self-play of FAQ-learning on RPS-3, RPS-5 and RPS-9 games, with $\tau = 20$.
        }
        \label{fig:RPS_FAQ_20}
\end{figure}
\begin{figure}[h] %
    \begin{center}
    	\begin{subfigure}[t]{130pt}
    	\centering
        \hspace{-8bp}
        \includegraphics[width=125pt, trim={0 1030bp 0 0}, clip ]{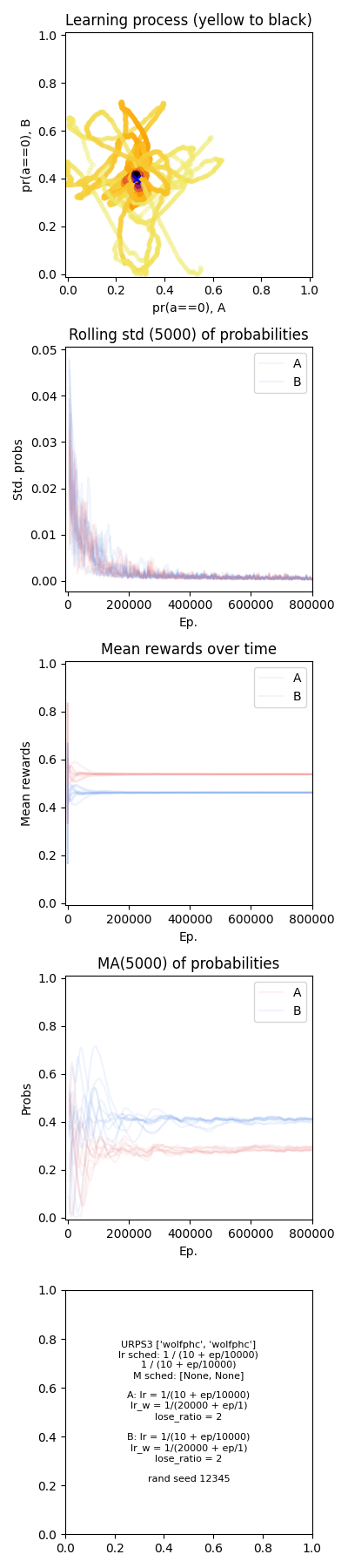}
        \captionsetup{margin=8pt}
    	\subcaption{WoLF-PHC on RPS-3.}
        \label{fig:RPS3_WoLF-PHC}
    	\end{subfigure}
    	\begin{subfigure}[t]{130pt}
    	\centering
        \hspace{-8bp}
        \includegraphics[width=125pt, trim={0 1030bp 0 0}, clip ]{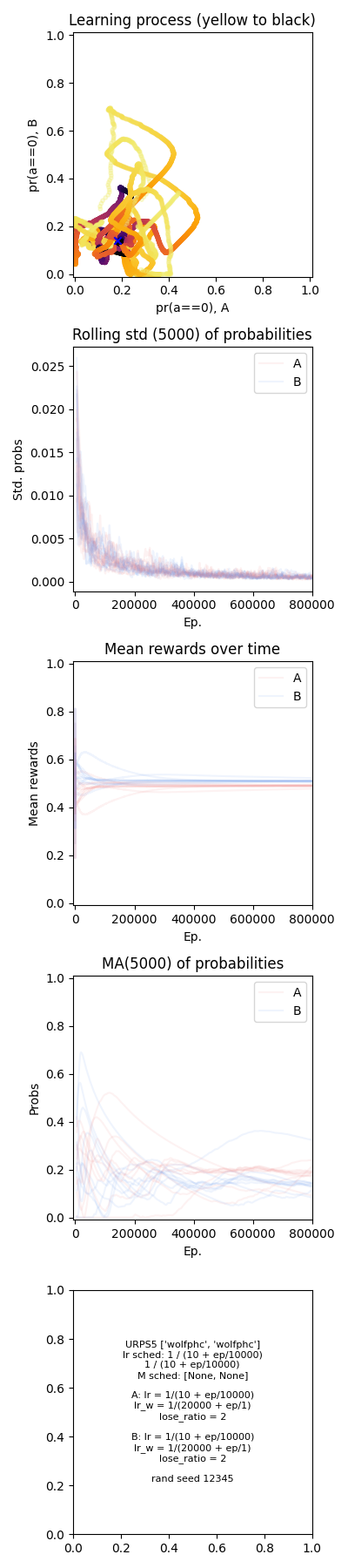}
        \captionsetup{margin=8pt}
    	\subcaption{WoLF-PHC on RPS-5.}
        \label{fig:RPS5_WoLF-PHC}
        \end{subfigure}
    	\begin{subfigure}[t]{130pt}
    	\centering
        \hspace{-8bp}
        \includegraphics[width=125pt, trim={0 1030bp 0 0}, clip ]{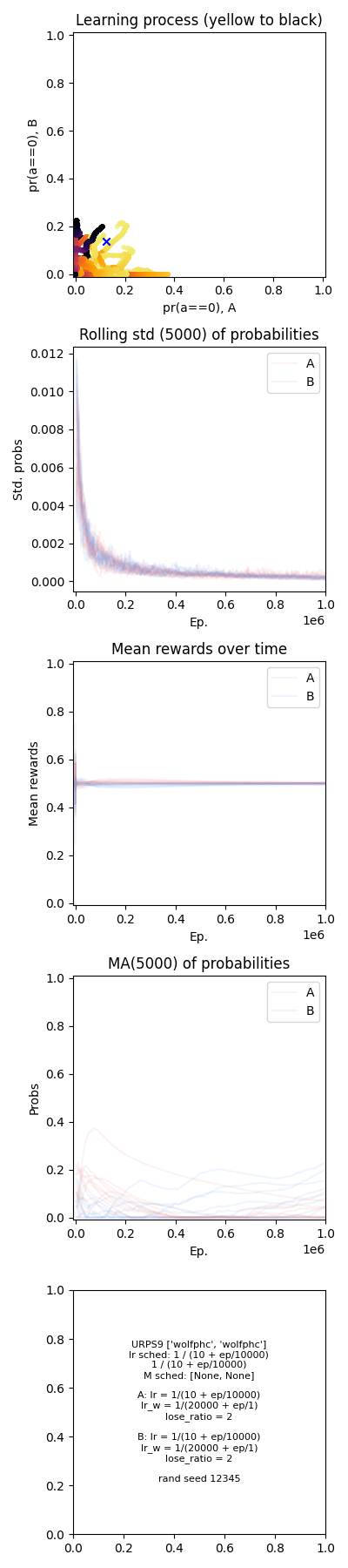}
        \captionsetup{margin=8pt}
    	\subcaption{WoLF-PHC on RPS-9.}
        \label{fig:RPS9_WoLF-PHC}
    	\end{subfigure}
    \end{center}
        \caption[WoLF-PHC in self-play on the RPS games.]{Self-play of WoLF-PHC-learning on RPS-3, RPS-5 and RPS-9 games, with initial learning rate $10^{-1}$ for $Q$, win learning rate $1/2 \cdot 10^{-4}$.
        }
        \label{fig:RPS_WoLF-PHC}
\end{figure}
\paragraph{Three-player Matching Pennies.}
Beyond the two-player case, we compare MBL in a three-player Matching Pennies setting introduced in \cite{jordan_three_1993}. In short, the three players have a shared pure strategy space, i.e. $A_1 = A_2 = A_3$, with two pure strategies, where player 1 wants to match player 2, player 2 wants to match player 3, and player 3 wants not to match player 1. The unique Nash equilibrium lies at the center of $\mathcal{D}$. 
All four algorithms fail to learn the Nash equilibrium, fig. \ref{fig:3MP_all} (MBL-LC not shown, cf. appendix \ref{app:exp_spec_3MP}). Instead, they seem to approach a seemingly stable periodic orbit.
\begin{figure}[h!] %
    \begin{center}
    	\begin{subfigure}[t]{130pt}
    	\centering
        \hspace{-8bp}
        \includegraphics[width=125pt, trim={0 1030bp 0 0}, clip ]{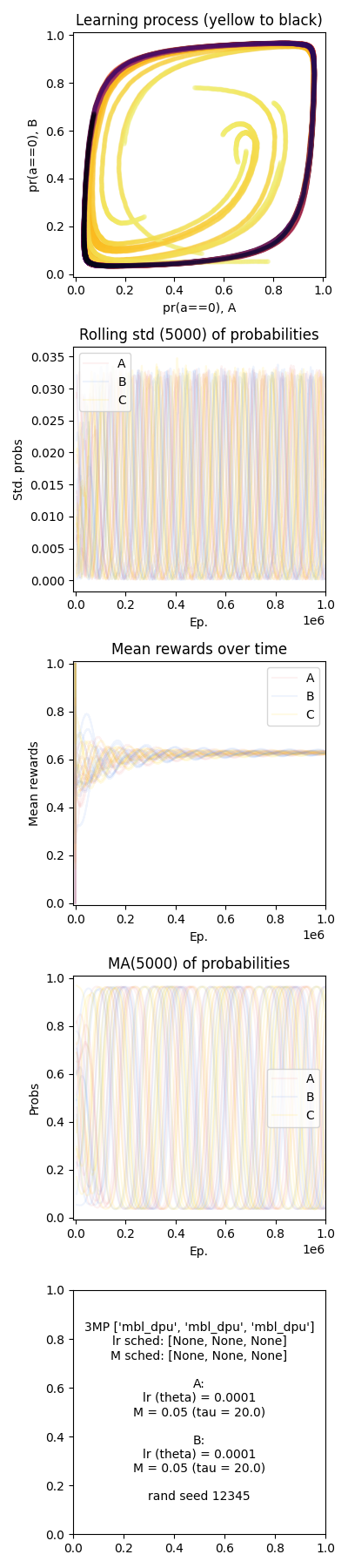}
        \captionsetup{margin=8pt}
    	\subcaption{MBL-DPU.}
        \label{fig:3MP_MBL-DPU_20}
    	\end{subfigure}
    	\begin{subfigure}[t]{130pt}
    	\centering
        \hspace{-8bp}
        \includegraphics[width=125pt, trim={0 1030bp 0 0}, clip ]{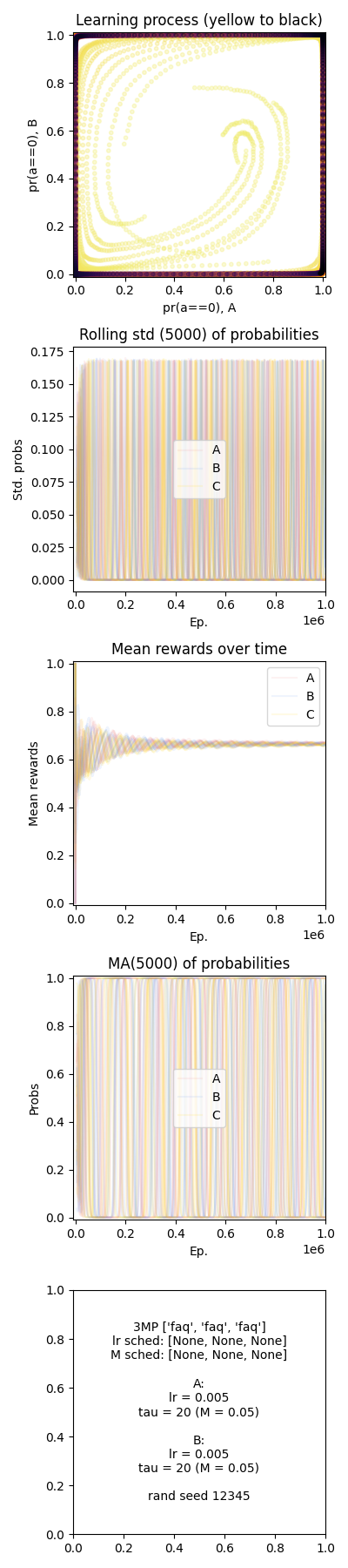}
        \captionsetup{margin=8pt}
    	\subcaption{FAQ.}
        \label{fig:3MP_FAQ_20}
        \end{subfigure}
    	\begin{subfigure}[t]{130pt}
    	\centering
        \hspace{-8bp}
        \includegraphics[width=125pt, trim={0 1030bp 0 0}, clip ]{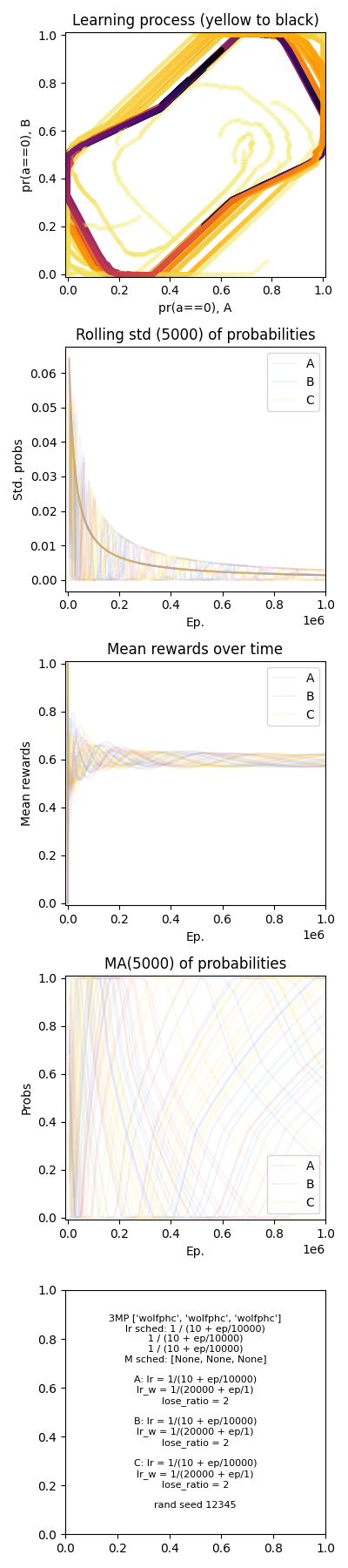}
        \captionsetup{margin=8pt}
        \subcaption{WoLF-PHC.}%
        \label{fig:3MP_WoLF-PHC}
    	\end{subfigure}
    \end{center}
        \caption[Self-play on the 3MP game.]{Self-play on 3MP by
        \protect\subref{fig:3MP_MBL-DPU_20}~MBL-DPU with $M^{-1} = 20$, 
        \protect\subref{fig:3MP_FAQ_20}~FAQ with $\tau = 20$,
        and \protect\subref{fig:3MP_WoLF-PHC}~WoLF-PHC with initial learning rate $10^{-1}$ for $Q$, win learning rate $1/2 \cdot 10^{-4}$.
        }
        \label{fig:3MP_all}
\end{figure}

\section{Discussion}%
The experimental results illustrate the difficulties in relying on experimental results alone. WoLF-PHC, FAQ and MBL-LC all show quicker convergence in those cases where they actually do converge and they would seem the better choice than MBL-DPU.
Not surprisingly, this is the case in PD, which has a strict Nash equilibrium, and in MP which is a planar system and cannot exhibit too complex behaviours. %
However, we see that behaviours start becoming less clear when we move to higher dimensions in the RPS variants.
While all algorithms seem to approximate the Nash equilibrium in RPS-3, we see unexpected behaviour in RPS-5 for FAQ with a split up convergence region. 
In RPS-9 we see FAQ deteriorate further and MBL-LC now also failing to converge with a split in the convergence regions.
WoLF-PHC now too fails to learn the Nash equilibrium, with trajectories stalling or getting stuck near the boundary. 
In RPS-9 no algorithm except for MBL-DPU--the simplest among the four--manages to reliably approach the Nash equilibrium. 
This loss of convergence for the more complex algorithms is unexpected, since RPS-9 does not fundamentally differ from RPS-3 in the game structure and the failure to learn when moving from RPS-3 to RPS-9 would be hard to anticipate \emph{a priori}. 
In contrast, with the results on MBL-DPU we have an indication of how well it will generalise to a structurally comparable but higher dimensional scenario.%

The failure of FAQ, WoLF-PHC and MBL-LC in RPS-9 does not imply that there are no parameter choices that could potentially restore the convergence of the respective algorithms.
E.g., tweaking the learning rates might restore convergence in these specific cases, without guaranteeing convergence in higher dimensional scenarios.
However, the absence of analytical tools leaves the existence of such parameter values an open question.
Even where such parameter choices exist the problem remains potentially intractable without an indication of where to look for them in the parameter space---even more so for algorithms with more parameters.
Together with the unpredictability of failure to converge when moving from a low to a higher dimensional setting, this questions the reliability of algorithms that seem to make sense intuitively and look promising in some experiments but for which we lack fundamental results---particularly for even more complicated algorithms not considered here.
In this situation, the utility of the mathematical guarantees available for MBL-DPU becomes obvious. Given a payoff structure, conditions for convergence can be checked by analysing the ODE system. In specific cases, this even allows the analysis of classes of settings, such as two-player zero-sum games, for which we have preliminary results that \ref{eq:RMD} stabilises equilibria and allows MBL-DPU to converge to the neighbourhood of the Nash equilibrium.
We further understand where exactly MBL-DPU is headed and that empirical non-convergence becomes less likely with smaller learning rates. This gives an indication of where to look for a suitable learning rate.
Finally, where MBL-DPU fails to converge, as in 3MP, just as the other algorithms, the ODE underpinning makes this expectable and understandable, since an analysis of the corresponding \ref{eq:RMD} system quickly shows that the Jacobian of the system has eigenvalues with positive real parts at the Nash equilibrium, making the equilibrium unstable for sufficiently small mutation strengths.
This demonstrates that such theoretical results enable us to understand when a given algorithm is not the best choice for a setting, instead of searching for parameter values that might or might not restore convergence, as we would be forced to do otherwise.

It should be noted that we have left out any modifications to further improve MBL-DPU.
In particular, the mutation strength was fixed, whereas the theoretical perspective makes it quite plausible that mutation strength can be chosen according to a reduction schedule, starting with high mutation and fast convergence and reducing mutation over time, increasing the accuracy with which the Nash equilibrium is approximated.
Note further that the mutation strength is linked to a measure of the Nash condition not being satisfied, since the equilibria of \ref{eq:RMD} are $\varepsilon$-equilibria.
Hence, every player can use the current violation of the Nash condition, i.e., its own distance from a current best-response, as a guide to adjust its mutation strength, e.g., by adjusting the mutation strength to be slightly lower than the current violation of the Nash condition.
We conjecture that this would result in the system being driven towards a state that is not worse than the current state, as measured by the Nash condition, while keeping the convergence speed as high as possible.
We would expect this to speed up convergence and improve the speed-accuracy trade-off, making MBL-DPU more attractive as a simple, predictable and theoretically founded MARL algorithm. 
Apart from such practical considerations, the current analysis still leaves open the questions of analysing MBL-DPU's behaviour in non-zero-sum games without strict Nash equilibria and its behaviour in a wider range of $n$-player settings with more than two players. Additionally, a clarification of the convergence properties of MBL-LC would allow to determine, whether a smaller learning rate would recover convergence, since the logistic choice policy shows much larger variance than the direct policy update and might thus be more sensitive to the learning rate.
Furthermore, the current analysis is limited to stateless repeated games and an extension of the analysis to settings with state-dependency would be desirable, e.g., where players have some limited memory of opponents' past play.

{
\small

\bibliographystyle{plainnat}
\bibliography{db}

\begin{thebibliography}{29}
\providecommand{\natexlab}[1]{#1}
\providecommand{\url}[1]{\texttt{#1}}
\expandafter\ifx\csname urlstyle\endcsname\relax
  \providecommand{\doi}[1]{doi: #1}\else
  \providecommand{\doi}{doi: \begingroup \urlstyle{rm}\Url}\fi

\bibitem[Bauer et~al.(2019)Bauer, Broom, and Alonso]{bauer_stabilization_2019}
Johann Bauer, Mark Broom, and Eduardo Alonso.
\newblock The stabilization of equilibria in evolutionary game dynamics through mutation: Mutation limits in evolutionary games.
\newblock \emph{Proceedings of the Royal Society A: Mathematical, Physical and Engineering Sciences}, 475\penalty0 (2231):\penalty0 20190355, 2019.
\newblock \doi{10.1098/rspa.2019.0355}.

\bibitem[B{\"o}rgers and Sarin(1997)]{borgers_learning_1997}
Tilman B{\"o}rgers and Rajiv Sarin.
\newblock Learning {{Through Reinforcement}} and {{Replicator Dynamics}}.
\newblock \emph{Journal of Economic Theory}, 77\penalty0 (1):\penalty0 1--14, 1997.
\newblock \doi{10.1006/jeth.1997.2319}.

\bibitem[Bowling and Veloso(2002)]{bowling_multiagent_2002}
Michael Bowling and Manuela Veloso.
\newblock Multiagent learning using a variable learning rate.
\newblock \emph{Artificial Intelligence}, 136\penalty0 (2):\penalty0 215--250, 2002.
\newblock \doi{10.1016/S0004-3702(02)00121-2}.

\bibitem[Chapman et~al.(2013)Chapman, Leslie, Rogers, and Jennings]{chapman_convergent_2013}
Archie~C. Chapman, David~S. Leslie, Alex Rogers, and Nicholas~R. Jennings.
\newblock Convergent {{Learning Algorithms}} for {{Unknown Reward Games}}.
\newblock \emph{SIAM Journal on Control and Optimization}, 51\penalty0 (4):\penalty0 3154--3180, 2013.
\newblock \doi{10.1137/120893501}.

\bibitem[Collins and Leslie(2003)]{collins_convergent_2003}
E.~J. Collins and David~S. Leslie.
\newblock Convergent multiple-timescales reinforcement learning algorithms in normal form games.
\newblock \emph{The Annals of Applied Probability}, 13\penalty0 (4):\penalty0 1231--1251, 2003.
\newblock \doi{10.1214/aoap/1069786497}.

\bibitem[Cross(1973)]{cross_stochastic_1973}
John~G. Cross.
\newblock A {{Stochastic Learning Model}} of {{Economic Behavior}}.
\newblock \emph{The Quarterly Journal of Economics}, 87\penalty0 (2):\penalty0 239--266, 1973.
\newblock \doi{10.2307/1882186}.

\bibitem[Hart and {Mas-Colell}(2003)]{hart_uncoupled_2003}
Sergiu Hart and Andreu {Mas-Colell}.
\newblock Uncoupled {{Dynamics Do Not Lead}} to {{Nash Equilibrium}}.
\newblock \emph{American Economic Review}, 93\penalty0 (5):\penalty0 1830--1836, 2003.
\newblock \doi{10.1257/000282803322655581}.

\bibitem[Hofbauer and Sigmund(1998)]{hofbauer_evolutionary_1998}
Josef Hofbauer and Karl Sigmund.
\newblock \emph{Evolutionary Games and Population Dynamics}.
\newblock Cambridge University Press, Cambridge, 1998.

\bibitem[Jordan(1993)]{jordan_three_1993}
J.S. Jordan.
\newblock Three {{Problems}} in {{Learning Mixed-Strategy Nash Equilibria}}.
\newblock \emph{Games and Economic Behavior}, 5\penalty0 (3):\penalty0 368--386, 1993.
\newblock \doi{10.1006/game.1993.1022}.

\bibitem[Kaisers and Tuyls(2010)]{kaisers_frequency_2010}
Michael Kaisers and Karl Tuyls.
\newblock Frequency {{Adjusted Multi-agent Q-learning}}.
\newblock In \emph{Proceedings of the 9th {{International Conference}} on {{Autonomous Agents}} and {{Multiagent Systems}}}, {{AAMAS}} '10, pages 309--316. {International Foundation for Autonomous Agents and Multiagent Systems}, 2010.

\bibitem[Kianercy and Galstyan(2012)]{kianercy_dynamics_2012}
Ardeshir Kianercy and Aram Galstyan.
\newblock Dynamics of {{Boltzmann Q}} learning in two-player two-action games.
\newblock \emph{Physical Review E}, 85\penalty0 (4):\penalty0 041145, 2012.
\newblock \doi{10.1103/PhysRevE.85.041145}.

\bibitem[Liu et~al.(2023)Liu, Weisz, Gy{\"o}rgy, Jin, and Szepesvari]{liu_optimistic_2023}
Qinghua Liu, Gellert Weisz, Andr{\'a}s Gy{\"o}rgy, Chi Jin, and Csaba Szepesvari.
\newblock Optimistic {{Natural Policy Gradient}}: A {{Simple Efficient Policy Optimization Framework}} for {{Online RL}}.
\newblock \emph{Advances in Neural Information Processing Systems}, 36:\penalty0 3560--3577, 2023.

\bibitem[Mali and Czibula(2023)]{mali_policy-based_2023}
Imre~Gergely Mali and Gabriela Czibula.
\newblock Policy-{{Based Reinforcement Learning}} in the {{Generalized Rock-Paper-Scissors Game}}.
\newblock In \emph{{{ESANN}} 2023 Proceedings}, pages 345--350, 2023.
\newblock \doi{10.14428/esann/2023.ES2023-92}.

\bibitem[Marsili et~al.(2000)Marsili, Challet, and Zecchina]{marsili_exact_2000}
Matteo Marsili, Damien Challet, and Riccardo Zecchina.
\newblock Exact solution of a modified {{El Farol}}'s bar problem: {{Efficiency}} and the role of market impact.
\newblock \emph{Physica A: Statistical Mechanics and its Applications}, 280\penalty0 (3-4):\penalty0 522--553, 2000.
\newblock \doi{10.1016/S0378-4371(99)00610-X}.

\bibitem[Mertikopoulos and Sandholm(2016)]{mertikopoulos_learning_2016}
Panayotis Mertikopoulos and William~H. Sandholm.
\newblock Learning in {{Games}} via {{Reinforcement}} and {{Regularization}}.
\newblock \emph{Mathematics of Operations Research}, 41\penalty0 (4):\penalty0 1297--1324, 2016.
\newblock \doi{10.1287/moor.2016.0778}.

\bibitem[Norman(1972)]{norman_markov_1972}
M.~Frank Norman.
\newblock \emph{Markov Processes and Learning Models}.
\newblock Number v. 84 in Mathematics in Science and Engineering. Academic Press, New York, 1972.

\bibitem[Omidshafiei et~al.(2019)Omidshafiei, Papadimitriou, Piliouras, Tuyls, Rowland, Lespiau, Czarnecki, Lanctot, Perolat, and Munos]{omidshafiei_-rank_2019}
Shayegan Omidshafiei, Christos Papadimitriou, Georgios Piliouras, Karl Tuyls, Mark Rowland, Jean-Baptiste Lespiau, Wojciech~M. Czarnecki, Marc Lanctot, Julien Perolat, and Remi Munos.
\newblock {$\alpha$}-{{Rank}}: {{Multi-Agent Evaluation}} by {{Evolution}}.
\newblock \emph{Scientific Reports}, 9\penalty0 (1), 2019.
\newblock \doi{10.1038/s41598-019-45619-9}.

\bibitem[Page and Nowak(2002)]{page_unifying_2002}
Karen~M. Page and Martin~A. Nowak.
\newblock Unifying {{Evolutionary Dynamics}}.
\newblock \emph{Journal of Theoretical Biology}, 219\penalty0 (1):\penalty0 93--98, 2002.
\newblock \doi{10.1006/jtbi.2002.3112}.

\bibitem[Ratcliffe et~al.(2019)Ratcliffe, Hofmann, and Devlin]{ratcliffe_win_2019}
Dino~Stephen Ratcliffe, Katja Hofmann, and Sam Devlin.
\newblock Win or {{Learn Fast Proximal Policy Optimisation}}.
\newblock In \emph{2019 {{IEEE Conference}} on {{Games}} ({{CoG}})}, pages 1--4, London, United Kingdom, 2019. IEEE.
\newblock \doi{10.1109/CIG.2019.8848100}.

\bibitem[Ritzberger and Weibull(1995)]{ritzberger_evolutionary_1995}
Klaus Ritzberger and Jorgen~W. Weibull.
\newblock Evolutionary {{Selection}} in {{Normal-Form Games}}.
\newblock \emph{Econometrica}, 63\penalty0 (6):\penalty0 1371--1399, 1995.
\newblock \doi{10.2307/2171774}.

\bibitem[Rustichini(1999)]{rustichini_optimal_1999}
Aldo Rustichini.
\newblock Optimal {{Properties}} of {{Stimulus}}---{{Response Learning Models}}.
\newblock \emph{Games and Economic Behavior}, 29\penalty0 (1-2):\penalty0 244--273, 1999.
\newblock \doi{10.1006/game.1999.0712}.

\bibitem[Sato and Crutchfield(2003)]{sato_coupled_2003}
Yuzuru Sato and James~P. Crutchfield.
\newblock Coupled replicator equations for the dynamics of learning in multiagent systems.
\newblock \emph{Physical Review E}, 67\penalty0 (1), 2003.
\newblock \doi{10.1103/PhysRevE.67.015206}.

\bibitem[Sato et~al.(2002)Sato, Akiyama, and Farmer]{sato_chaos_2002}
Yuzuru Sato, Eizo Akiyama, and J.~Doyne Farmer.
\newblock Chaos in learning a simple two-person game.
\newblock \emph{Proceedings of the National Academy of Sciences}, 99\penalty0 (7):\penalty0 4748--4751, 2002.
\newblock \doi{10.1073/pnas.032086299}.

\bibitem[Schulman et~al.(2017)Schulman, Wolski, Dhariwal, Radford, and Klimov]{schulman_proximal_2017}
John Schulman, Filip Wolski, Prafulla Dhariwal, Alec Radford, and Oleg Klimov.
\newblock Proximal {{Policy Optimization Algorithms}}.
\newblock \emph{arXiv:1707.06347}, 2017.
\newblock \doi{10.48550/arXiv.1707.06347}.

\bibitem[Silver et~al.(2017)Silver, Schrittwieser, Simonyan, Antonoglou, Huang, Guez, Hubert, Baker, Lai, Bolton, Chen, Lillicrap, Hui, Sifre, van~den Driessche, Graepel, and Hassabis]{silver_mastering_2017}
David Silver, Julian Schrittwieser, Karen Simonyan, Ioannis Antonoglou, Aja Huang, Arthur Guez, Thomas Hubert, Lucas Baker, Matthew Lai, Adrian Bolton, Yutian Chen, Timothy Lillicrap, Fan Hui, Laurent Sifre, George van~den Driessche, Thore Graepel, and Demis Hassabis.
\newblock Mastering the game of {{Go}} without human knowledge.
\newblock \emph{Nature}, 550\penalty0 (7676):\penalty0 354--359, 2017.
\newblock \doi{10.1038/nature24270}.

\bibitem[Teschl(2012)]{teschl_ordinary_2012}
Gerald Teschl.
\newblock \emph{Ordinary Differential Equations and Dynamical Systems}.
\newblock American Mathematical Society, Providence, RI, 2012.

\bibitem[Tuyls et~al.(2006)Tuyls, {'T Hoen}, and Vanschoenwinkel]{tuyls_evolutionary_2006}
Karl Tuyls, Pieter~Jan {'T Hoen}, and Bram Vanschoenwinkel.
\newblock An {{Evolutionary Dynamical Analysis}} of {{Multi-Agent Learning}} in {{Iterated Games}}.
\newblock \emph{Autonomous Agents and Multi-Agent Systems}, 12\penalty0 (1):\penalty0 115--153, 2006.
\newblock \doi{10.1007/s10458-005-3783-9}.

\bibitem[Watkins and Dayan(1992)]{watkins_q-learning_1992}
Christopher~J.C.H. Watkins and Peter Dayan.
\newblock Q-{{Learning}}.
\newblock \emph{Machine Learning}, 8:\penalty0 279--292, 1992.
\newblock \doi{10.1023/A:1022676722315}.

\bibitem[Weibull(1995)]{weibull_evolutionary_1995}
J{\"o}rgen~W. Weibull.
\newblock \emph{Evolutionary Game Theory}.
\newblock MIT Press, Cambridge, Mass., 1995.

\end{thebibliography}
}

\newpage

\appendix

\section{Proofs}\label{app:proofs}

The proofs employ a result proved in \cite[p. 118]{norman_markov_1972}, which we state in the following and then proceed to prove propositions \ref{prop:MBL-RMD} and \ref{prop:MBL-DPU_attr_mut_lim}.

\subsection{A theorem on learning with small steps}

The result from \cite{norman_markov_1972} we employ is phrased in the following terms:
Let $J \subset \RR_{>0}$ be a parameter set with $\inf J = 0$ and $N \in \NN$, such that for every $\theta \in J$, $\{X^\theta_n\}_{n \geq 0} \subset I_\theta \subset \RR^N$ is a Markov process with stationary probabilities. We denote by $\EE_x[X^\theta_n]$ the expected value of $X^\theta_n$ given $X^\theta_0 = x$.
Let further $I$ be the minimal closed convex set with $\bigcup_{\theta} I_{\theta} \subset I$.
Define
\[
    H^\theta_n = \Delta X^\theta_n / \theta
\]
and let $w(x,\theta)$, $S(x, \theta)$, $s(x,\theta)$ and $r(x,\theta)$ for $(x,\theta) \in I \times J$ be given as:
\begin{align*}
    \qquad\qquad\qquad \quad
    w(x,\theta) ={} & \EE[ H^\theta_n | X^\theta_n = x] \in \RR^N
    \\
    S(x, \theta) ={} & \EE[ (H^\theta_n)^2 | X^\theta_n = x] \in \RR^{N \times N}
    \\
    s(x,\theta) ={} & \EE[ (H^\theta_n - w(x,\theta))^2 | X^\theta_n = x]
    = S(x,\theta) - w^2(x,\theta) \in \RR^{N \times N}
    \\
    r(x,\theta) ={} & \EE[ \|H^\theta_n\|^3 | X^\theta_n = x] \in \RR \:.
\end{align*}
where $x^2 = xx^T$ and $\|x\| = \sqrt{x^T x}$ for $x \in \RR^N$.

We can now state theorem 8.1.1 from \cite[p. 118]{norman_markov_1972} (omitting part (C)):
\begin{thm}[Norman]\label{thm:Norman}
In the above situation, let the following conditions be satisfied:

The family of sets $(I_\theta)_\theta$ satisfies
\begin{align*}
    \tag{a.1}\label{eq:norman:a.1}
    \forall x \in I\:: \lim_{\theta \rightarrow 0} \inf_{y \in I_\theta} \|x - y\| = 0 \:.
\end{align*}

There are functions $w$ and $s$ on $I$ such that:
\begin{align*}
    \tag{a.2}\label{eq:norman:a.2}
    & \sup_{x \in I_\theta} \|w(x,\theta) - w(x)\| \in \mathcal{O}(\theta) \:,
    \\
    \tag{a.3}\label{eq:norman:a.3}
    & \sup_{x \in I_\theta} \|s(x,\theta) - s(x)\| \rightarrow 0 \:\text{ for }\: \theta \rightarrow 0 \:,
\end{align*}
where $\mathcal{O}$ refers to the Bachmann–Landau notation.

The function $w$ is differentiable, i.e., there is a function $w'$ such that for all $x \in I$:
\begin{align*}
    \tag{b.1}\label{eq:norman:b.1}
    \lim_{\stackrel{y \rightarrow x}{y \in I} } \frac{ \| w(y) - w(x) - w'(x) (y - x) \| }{ \| y - x \| } = 0 \:.
\end{align*}

The function $w'$ is bounded:
\begin{align*}
    \tag{b.2}\label{eq:norman:b.2}
    \sup_{x \in I}\|w'(x)\| < \infty \:.
\end{align*}

The functions $w'$ and $s$ satisfy the Lipschitz condition:
\begin{align*}
    \tag{b.3}\label{eq:norman:b.3}
    & \sup_{x,y \in I, x \neq y} \frac{\| w'(x) - w'(y) \|}{ \| x - y \| } < \infty \:,
    \\
    \tag{b.4}\label{eq:norman:b.4}
    & \sup_{x,y \in I, x \neq y} \frac{\|s(x) - s(y)\|}{\|x - y\|} < \infty \:.
\end{align*}

The function $r$ is bounded:
\begin{align*}
    \tag{c}\label{eq:norman:c}
    \sup_{\theta \in J, x \in I_\theta} r(x,\theta) < \infty \:.
\end{align*}

Let further for $\theta \in J$ and $x \in I_\theta$, $\mu_n(x, \theta) = \EE_x[X^\theta_n]$ and $\omega_n(x,\theta) = \EE_x[\| X^\theta_n - \mu_n(x, \theta) \|^2]$.

In this case, the following hold:
\begin{itemize}
\item[(A)]
$\omega_n(x,\theta) \in \mathcal{O}(\theta)$ uniformly in $x \in I_\theta$ and $n\theta \leq T$ for any $T < \infty$. 
\item[(B)]
For any $x \in I$, the differential equation
\[
    f'(t) = w(f(t))
\]
has a unique solution $f(t) = f(x, t)$ with $f(0) = x$.
For all $t \geq 0$, we have $f(t) \in I$, and
\[
\mu_n(x, \theta) - f(x, n\theta) \in \mathcal{O}(\theta)
\]
uniformly in $x \in I_\theta$ and $n\theta \leq T$.
\end{itemize}

\end{thm}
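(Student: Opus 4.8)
The theorem will follow from two discrete Gr\"onwall estimates that must be established in the right order, since the bound on the mean in (B) feeds on the bound on the spread in (A). Before either, I would record three preliminaries. First, that $w$ is globally Lipschitz on $I$ with constant $L := \sup_{x\in I}\|w'(x)\|$, finite by \eqref{eq:norman:b.2}, obtained by integrating $w'$ along segments, which stay in $I$ because $I$ is convex. Second, that the per-step conditional second moment is uniformly bounded, $\sup_{\theta,x}\operatorname{tr} S(x,\theta)\le \sup_{\theta,x} r(x,\theta)^{2/3}<\infty$ by Lyapunov's inequality and \eqref{eq:norman:c}, whence $\operatorname{tr} s$ is uniformly bounded as well. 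Third, that the initial-value problem $f'=w(f)$, $f(0)=x$ has a unique solution by Picard--Lindel\"of, with $f'=w\circ f$ itself Lipschitz so that $f\in C^{1,1}$. To keep the flow well defined even if it momentarily exits $I$, I would extend $w$ to a globally Lipschitz field on $\RR^N$ (applying McShane's extension componentwise) and recover $f(t)\in I$ a posteriori at the end.

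\textbf{Step 1: the spread estimate (A).} Writing $X^\theta_{n+1}=X^\theta_n+\theta H^\theta_n$ and $\mu_{n+1}=\mu_n+\theta\,\EE_x[w(X^\theta_n,\theta)]$, I would decompose the centred increment as
\begin{align*}
X^\theta_{n+1}-\mu_{n+1} = (X^\theta_n-\mu_n) + \theta\big(w(X^\theta_n,\theta)-\EE_x[w(X^\theta_n,\theta)]\big) + \theta\big(H^\theta_n-w(X^\theta_n,\theta)\big).
\end{align*}
The last bracket is a conditional martingale difference with conditional covariance $s(X^\theta_n,\theta)$, so it is orthogonal, given $X^\theta_n$, to the two $X^\theta_n$-measurable terms and contributes only $\theta^2\,\EE_x[\operatorname{tr} s(X^\theta_n,\theta)]=\mathcal{O}(\theta^2)$ by the second preliminary. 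Bounding the first two terms by the Lipschitz property of $w(\cdot,\theta)$, that is, Lipschitzness of the limit $w$ together with the $\mathcal{O}(\theta)$ gap from \eqref{eq:norman:a.2}, and applying Cauchy--Schwarz, yields a recursion $\omega_{n+1}\le (1+c\theta)\,\omega_n + C\theta^2$ with constants uniform in $x$ and $\theta$. Since $\omega_0=0$, iterating and using $(1+c\theta)^n\le e^{c\,n\theta}\le e^{cT}$ gives $\omega_n=\mathcal{O}(\theta)$ uniformly for $n\theta\le T$, which is (A).

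\textbf{Step 2: the mean estimate (B).} Let $e_n:=\mu_n-f(n\theta)$. Using the $C^{1,1}$ regularity of $f$, the exact flow satisfies a perturbed Euler step $f((n{+}1)\theta)=f(n\theta)+\theta\,w(f(n\theta))+\rho_n$ with $\|\rho_n\|=\mathcal{O}(\theta^2)$, so
\begin{align*}
e_{n+1} = e_n + \theta\big(\EE_x[w(X^\theta_n,\theta)]-w(f(n\theta))\big) - \rho_n.
\end{align*}
I would split the drift discrepancy as $\EE_x[w(X^\theta_n,\theta)-w(X^\theta_n)] + \big(\EE_x[w(X^\theta_n)]-w(\mu_n)\big) + \big(w(\mu_n)-w(f(n\theta))\big)$. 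The first term is $\mathcal{O}(\theta)$ by \eqref{eq:norman:a.2}, and the third is at most $L\|e_n\|$ by Lipschitzness. The middle term is the crucial \emph{Jensen gap}: since $\EE_x[X^\theta_n-\mu_n]=0$, the first-order term in the expansion of $w$ about $\mu_n$ cancels, and the quadratic remainder bound $\|w(y)-w(x)-w'(x)(y-x)\|\le \tfrac12\operatorname{Lip}(w')\|y-x\|^2$, valid on the convex set $I$ by \eqref{eq:norman:b.1} and \eqref{eq:norman:b.3}, gives $\|\EE_x[w(X^\theta_n)]-w(\mu_n)\|\le \tfrac12\operatorname{Lip}(w')\,\omega_n=\mathcal{O}(\theta)$ \emph{by} (A). Collecting terms yields $\|e_{n+1}\|\le (1+L\theta)\|e_n\| + C'\theta^2$, and discrete Gr\"onwall with $e_0=0$ again gives $\|e_n\|=\mathcal{O}(\theta)$ uniformly for $n\theta\le T$. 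Finally, $\mu_n=\EE_x[X^\theta_n]\in I$ since $X^\theta_n\in I_\theta\subset I$ and $I$ is closed and convex; letting $\theta\to0$ with $n\theta\to t$ and using continuity of $f$ forces $\operatorname{dist}(f(t),I)=0$, i.e.\ $f(t)\in I$, which closes the extension loophole.

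\textbf{Main obstacle.} The heart of the argument is the self-consistent coupling of the two estimates: the nonlinearity of $w$ makes $\mu_n$ fail to follow the Euler scheme exactly, the defect being precisely the Jensen gap, which is second order in the fluctuation and therefore controllable only \emph{after} the spread bound (A) has been secured. Making this quantitative requires the quadratic Taylor remainder, which in turn needs both genuine differentiability \eqref{eq:norman:b.1} and a globally Lipschitz derivative \eqref{eq:norman:b.3} on the convex hull $I$; and every constant entering the two Gr\"onwall recursions must be uniform in the starting point $x\in I_\theta$ and over the horizon $n\theta\le T$, which is exactly what the uniform hypotheses \eqref{eq:norman:a.2}, \eqref{eq:norman:a.3}, \eqref{eq:norman:b.2}--\eqref{eq:norman:b.4} and \eqref{eq:norman:c} are designed to deliver.
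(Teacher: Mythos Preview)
The paper does not prove this theorem: it is quoted verbatim as theorem 8.1.1 from \cite[p.~118]{norman_markov_1972} and used as a black box to establish Propositions~\ref{prop:MBL-RMD} and~\ref{prop:MBL-DPU_attr_mut_lim}. There is therefore no in-paper proof to compare your proposal against.

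That said, your outline is a sound reconstruction of the standard argument and, as far as one can tell without consulting Norman's book, likely close to the original. The logical architecture---first bound the variance $\omega_n$ by a discrete Gr\"onwall recursion, then use that bound to control the Jensen gap $\EE_x[w(X^\theta_n)]-w(\mu_n)$ via the second-order Taylor remainder (which is where \eqref{eq:norman:b.1} and \eqref{eq:norman:b.3} together are genuinely needed), and finally run a second Gr\"onwall on $\|e_n\|$---is exactly the right dependency structure. One point you pass over quickly in Step~1 deserves care: to get the cross term $2\theta\,\EE_x\big[(X^\theta_n-\mu_n)\cdot(w(X^\theta_n,\theta)-\EE_x[w(X^\theta_n,\theta)])\big]$ of order $\theta\,\omega_n$ rather than merely $\theta\sqrt{\omega_n}$, you need that $\operatorname{Var}\big(w(X^\theta_n,\theta)\big)$ is itself $\mathcal{O}(\omega_n)+\mathcal{O}(\theta^2)$, which follows from the Lipschitz property of the limit $w$ together with the uniform $\mathcal{O}(\theta)$ gap \eqref{eq:norman:a.2}; a crude Cauchy--Schwarz against the bounded second moment alone would only yield $\omega_n=\mathcal{O}(1)$. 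With that refinement the recursion $\omega_{n+1}\le(1+c\theta)\omega_n+C\theta^2$ goes through as you claim.
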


\begin{rem}\label{rem:approxInProb}
We note that parts (A) and (B) imply that for all $\epsilon > 0$,
\[
\sup_{x \in I_\theta} \Pr( \| X^\theta_n - f(x, T) \| > \epsilon ) \rightarrow 0
\]
for $n\theta \rightarrow T$, $\theta \rightarrow 0$, and given that $X^\theta_0 = x$ almost certainly for all $\theta$.
\end{rem}

\subsection{Convergence of MBL-DPU}

We restate the simple reinforcement-mutation rule of MBL-DPU in the setting layed out above, denoting the mixed strategies with an upper-case $X$ to underscore that this is a random variable and denoting the dependence on a parameter $\theta$, denoting the whole family of stochastic processes as $\{(X^\theta_{ih}(n))_{i \in P, h \in A_i}\}_{n \geq 0}$.
Let $U(x) = (U_{ih}(x))_{i \in P, h \in A_i}$ be a random variable whose probability distribution depends on $x \in I$ with a discrete, non-negative support which is independent of $x$, and let $M_i < \overline{M}$ for some upper bound $\overline{M} < \infty$ and all $i \in P$.

For a player $i \in P$ and a chosen pure strategy $h \in A_i$, the update rule then is given as follows:
\begin{align}\label{eq:defSimpleMBL}
\begin{aligned}
    X^\theta_{ih}(n+1)
    ={} &
    X^\theta_{ih}(n) + \theta \left( (1-X^\theta_{ih}(n)) U_{ih}(X^\theta(n)) \right) + \theta M_i \left( c_{ih} - X^\theta_{ih}(n) \right)
    \\
    X^\theta_{ik}(n+1)
    ={} &
    X^\theta_{ik}(n) + \theta \left( (-X^\theta_{ik}(n)) U_{ih}(X^\theta(n)) \right) + \theta M_i \left( c_{ik} - X^\theta_{ik}(n) \right) \; \text{ for } k \neq h
    \:.
\end{aligned}
\end{align}

We can now show proposition \ref{prop:MBL-RMD}, i.e., that this rule indeed approximates \ref{eq:RMD} for $\theta \rightarrow 0$ in the sense of remark \ref{rem:approxInProb}:

\begin{prop}\label{prop:MBL-DPU_conv}
There is $J$ such that the family of stochastic processes $\{(X^\theta_{ih}(n))_{i \in P, h \in A_i}\}_{n \geq 0}$ given by \eqref{eq:defSimpleMBL} approximates the replicator-mutator dynamics for $\theta \rightarrow 0$ in the sense of remark \ref{rem:approxInProb} if $X^\theta(0) \in I$ for all $\theta \in J$.
\end{prop}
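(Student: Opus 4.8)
The plan is to verify the hypotheses of Theorem~\ref{thm:Norman} (Norman's theorem) for the Markov family $\{X^\theta_n\}$ defined by \eqref{eq:defSimpleMBL}, and to identify the associated drift vector field $w$ with the right-hand side of \ref{eq:RMD}; then parts (A) and (B) together with Remark~\ref{rem:approxInProb} give exactly the convergence statement of Proposition~\ref{prop:MBL-RMD}. First I would fix the state space: take $I_\theta$ to be the set of reachable strategy profiles under the update starting from $\mathcal{D}$, and $I = \mathcal{D}$ (or the minimal closed convex set containing all $I_\theta$). One checks that for $\theta$ small enough (depending on $\overline{M}$ and the bound on the support of $U$), the update \eqref{eq:defSimpleMBL} maps $\mathcal{D}$ into $\mathcal{D}$: the simplex constraint $\sum_h X_{ih} = 1$ is preserved because the Cross-learning part and the mutation part $M_i(c_{ih} - X_{ih})$ each sum to zero over $h$, and non-negativity holds for $\theta$ small since all the negative contributions are proportional to $\theta X_{ih}$ or $\theta M_i X_{ih}$. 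This fixes the parameter set $J = (0, \theta_0)$ for a suitable $\theta_0$, and \eqref{eq:norman:a.1} is then immediate (indeed $I_\theta \subseteq I$ for all $\theta$).

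Next I would compute the conditional increments. Writing $H^\theta_n = \Delta X^\theta_n / \theta$, the drift $w_{ih}(x,\theta) = \EE[H^\theta_{n,ih} \mid X^\theta_n = x]$ is obtained by averaging over which pure strategy player $i$ actually plays (probability $x_{ih}$ of playing $h$, in which case $\Delta X_{ih}/\theta = (1-x_{ih})U_{ih}(x) + M_i(c_{ih}-x_{ih})$; probability $x_{ik}$ of playing $k\neq h$, giving $-x_{ih}U_{ik}(x) + M_i(c_{ih}-x_{ih})$), and similarly over the other players' randomisation, which is what turns $U$ into the expected payoff / fitness. A short calculation gives
\begin{align*}
    w_{ih}(x,\theta) = x_{ih}\Big( \bar{f}_{ih}(x) - \textstyle\sum_k x_{ik}\bar{f}_{ik}(x) \Big) + M_i(c_{ih} - x_{ih}) =: w_{ih}(x),
\end{align*}
where $\bar f_{ih}(x) = \EE[U_{ih}(x)]$ plays the role of the fitness $f_{ih}$; crucially this is \emph{exactly} $\theta$-independent, so \eqref{eq:norman:a.2} holds trivially (with error $0$, hence $\mathcal{O}(\theta)$), and $w$ is identified with the vector field of \ref{eq:RMD}. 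For the second-moment condition I would note that $H^\theta_n$ is bounded uniformly in $x$ and $\theta$ (support of $U$ is bounded and independent of $x$, $x\in\mathcal{D}$ is bounded, $M_i < \overline M$), so $S(x,\theta)$, $s(x,\theta)$ and $r(x,\theta)$ are all bounded, giving \eqref{eq:norman:c} at once; and since $U(x)$ has discrete support and its distribution depends on $x$ only through the finitely many mixing probabilities $x_{jk}$, the functions $s(x,\theta)$ and $w(x)$ are polynomial in $x$, hence $s$ extends to a Lipschitz function on the compact set $I$ independent of $\theta$ (so \eqref{eq:norman:a.3} holds, again with zero error, and \eqref{eq:norman:b.4}), and $w$ is $C^\infty$ with bounded, Lipschitz derivative on $I$ (so \eqref{eq:norman:b.1}--\eqref{eq:norman:b.3}).

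With all hypotheses verified, Theorem~\ref{thm:Norman}(A),(B) and Remark~\ref{rem:approxInProb} yield $\sup_{x\in I_\theta}\Pr(\|X^\theta_n - f(x,T)\| > \epsilon)\to 0$ as $\theta\to 0$, $n\theta\to T$, where $f(x,\cdot)$ is the unique solution of $f'=w(f)$ through $x$; but $w$ is the RMD vector field, so $f(x,\cdot) = \Phi(x,\cdot)$, which is Proposition~\ref{prop:MBL-RMD}. The main obstacle I anticipate is not any single hard estimate but the bookkeeping in two places: (i) carefully setting up $I_\theta$ and the bound $\theta_0$ so that the process genuinely stays in a compact set on which all the Landau/Lipschitz conditions can be asserted uniformly --- in particular making sure the restriction to $\mathcal{D}$ is preserved and that non-negative, bounded (rather than $[0,1]$-valued) payoffs still work; and (ii) performing the double expectation over the focal player's and the opponents' randomisation cleanly enough to see that the $\theta$-dependence drops out entirely and that $\bar f_{ih}$ really is the expected payoff $\EE[r_i(a)\mid x, a_i = h]$, so that the limiting ODE is literally \ref{eq:RMD} and not merely something of RMD type.
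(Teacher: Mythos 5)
Your proposal is correct and follows essentially the same route as the paper's proof: establish forward invariance of the simplex for $\theta$ below an explicit threshold determined by the payoff bound and $\overline{M}$ (giving $J$ and condition (a.1)), compute the drift by conditioning on the chosen pure strategy to obtain the $\theta$-independent \ref{eq:RMD} vector field, and dispatch the remaining regularity conditions via polynomiality of the moments and compactness of $\mathcal{D}$ before invoking Theorem \ref{thm:Norman} and Remark \ref{rem:approxInProb}. The only cosmetic difference is that the paper simply takes $I_\theta = I = \mathcal{D}$ rather than the reachable set.
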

\begin{proof}
The proof proceeds by showing that $\{(X^\theta_{ih}(n))_{i \in P, h \in A_i}\}_{n \geq 0}$ satisfies the conditions of theorem \ref{thm:Norman}.
For a player $i \in P$ and a chosen strategy $h \in A_i$ we have:
\begin{align*}
    H^\theta_{ih}(n+1)
    ={} & \Delta X^\theta_{ih}(n+1) / \theta
    =
    (1-X^\theta_{ih}(n)) U_{ih}(X^\theta(n)) + M_i (c_{ih} - X^\theta_{ih}(n))
    \\
    H^\theta_{ik}(n+1)
    ={} & \Delta X^\theta_{ik}(n+1) / \theta
    =
    -X^\theta_{ik}(n) U_{ih}(X^\theta(n)) + M_i (c_{ik} - X^\theta_{ik}(n)) \; \text{ for } k \neq h
\end{align*}
Note that in this case, $H^\theta_{ih}(n+1)$ is independent of $\theta$ if $X^\theta(n)$ is given, which simplifies the analysis.
Let us set $u_{ih}(x) = \EE[ U_{ih}(X^\theta(n)) | X^\theta(n) = x]$, where it is clear that there is no dependence on $n$. Note that $u$ is polynomial in the components of $x$ and hence smooth.

\emph{Condition (\ref{eq:norman:a.1}):}
In our case, $I$ is given as the polyhedron $\bigtimes_{i} \mathcal{D}_i$ and $I_\theta = I$ for all $\theta$ and thus condition \eqref{eq:norman:a.1} is satisfied.
It remains to show that indeed $\{(X^\theta_{ih}(n))_{i \in P, h \in A_i}\}_{n \geq 0} \subset I$:
Note that $U_{ih}$ is a discrete non-negative random variable and thus bounded by some $C < \infty$.
For $\theta < (C + \overline{M})^{-1}$, we have $\theta M_i \leq 1$.
Assume that $X^\theta_{ih}(n) = x \in I$, then for a player $i \in P$ and a chosen strategy $h \in A_i$ we have
\begin{align*}
    X^\theta_{ih}(n+1)
    ={} &
    x_{ih} + \theta \big( (1-x_{ih}) U_{ih}(n+1) + M_i (c_{ih} - x_{ih}) \big)
    \\
    ={} &
    x_{ih} (1 - \theta M_i) + \theta (1-x_{ih}) U_{ih}(n+1) + \theta M_i c_{ih} \geq 0
\end{align*}
and for some other pure strategy $k \neq h$, we have
\begin{align*}
    X^\theta_{ik}(n+1)
    ={} &
    x_{ik} + \theta \big( (-x_{ik}) U_{ih}(n+1) + M_i (c_{ik} - x_{ik}) \big)
    \\
    ={} &
    x_{ik} \Big( 1 - \underbrace{ \theta \big( U_{ih}(n+1) + M_i \big)}_{\leq 1} \Big) + \theta M_i c_{ik}
    \geq 0 \:.
\end{align*}
A simple calculation shows that $\sum_k X^\theta_{ik}(n+1) = 1$ if $x \in I$.
Thus we have that $\{(X^\theta_{ih}(n))_{i \in P, h \in A_i}\}_{n \geq 0} \subset I$ if $X^\theta(0) \in I$ for all $\theta$ and we can choose $J = (0, (C + \overline{M})^{-1} )$.

\emph{Conditions (\ref{eq:norman:a.2}) \& (\ref{eq:norman:a.3}):}
Consider first the function $w$:
\begin{align*}
    w_{ih}(x,\theta) ={} & \EE[ H^\theta(n) | X^\theta(n) = x]
    \\
    ={} &
    x_{ih} (1-x_{ih}) \EE[ U_{ih}(n+1) | X^\theta(n) = x] + x_{ih} M_i (c_{ih} - x_{ih})
    \\
    &
    +
    \sum_{k \neq h}
    x_{ik} (-x_{ih}) \EE[ U_{ik}(n+1) | X^\theta(n) = x] + x_{ik} M_i (c_{ih} - x_{ih})
    \\
    ={} &
    x_{ih} \left( u_{ih}(x) - \sum_{k} x_{ik} u_{ik}(x) \right) + M_i (c_{ih} - x_{ih})
\end{align*}
It is clear that $w$ does not depend on $\theta$ and that condition \eqref{eq:norman:a.2} is trivially satisfied.
Similarly, $S(x,\theta)$ and $s(x,\theta)$ do not depend on $\theta$ and condition \eqref{eq:norman:a.3} is trivially satisfied.

\emph{Conditions (\ref{eq:norman:b.1})--(\ref{eq:norman:b.4}):}
Since the function $u$ is smooth, so is $w$. In particular, we have that $\sup_{x \in I} \| w'(x) \| < \infty$ because $I$ is compact and $w'$ is continuously differentiable, from which follows that $w'$ satisfies the Lipschitz-condition \eqref{eq:norman:b.3} on $I$.
Similarly, $s$ is smooth and satisfies \eqref{eq:norman:b.4}.

\emph{Condition (\ref{eq:norman:c}):}
Again, $r$ does not depend on $\theta$, and is smooth on $I$, which is compact. Thus it is bounded on $I$ and condition \eqref{eq:norman:c} is satisfied.

As a consequence, we can apply theorem \ref{thm:Norman} to the family $\{X^\theta(n)\}_{n \geq 0}$ and with remark \ref{rem:approxInProb} we have that for all $\epsilon > 0$,
\[
\sup_{x \in I} \Pr( \| X^\theta(n) - \Phi(x,T) \| > \epsilon ) \rightarrow 0
\]
for $n\theta \rightarrow T$, $\theta \rightarrow 0$, and given that $X^\theta(0) = x$ for all $\theta$,
where for all $i \in P$ and $h \in A_i$, $\Phi$ is the unique solution of the differential equations
\begin{align*}
\dot{\Phi}_{ih}(x,t) & = w_{ih}(\Phi(x,t))
\\
& = \Phi_{ih}(t) \Big( u_{ih}(\Phi(x,t)) - \sum_{k} \Phi_{ik}(x,t) u_{ik}(\Phi(x,t)) \Big) + M_i (c_{ih} - \Phi_{ih}(x,t))
\end{align*}
with $\Phi(x,0) = x$.
\end{proof}

\begin{prop}\label{prop:finite_time}
Let $x^M$ be an equilibrium of \eqref{eq:RMD} and $U$ an open neighbourhood of $x^M$.
If $x^M$ is globally asymptotically stable, then there is $\theta > 0$ such that 
the stochastic process $\{(X^\theta_{ih}(n))_{i \in P, h \in A_i}\}_{n \geq 0}$ defined in \eqref{eq:defSimpleMBL}
visits $U$ almost surely after finitely many steps.
\end{prop}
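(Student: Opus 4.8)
The plan is to reduce the claim to three facts: (i) that global asymptotic stability of $x^M$ on the compact set $I$ yields a single finite horizon $T$ after which every \ref{eq:RMD}-trajectory started in $I$ has entered a small ball around $x^M$; (ii) that Proposition \ref{prop:MBL-DPU_conv} then forces the discrete process $\{X^\theta(n)\}$, for a suitably small $\theta$, to land in $U$ at a fixed deterministic time $n_*$ with probability bounded below uniformly over the starting point; and (iii) that such a uniform lower bound, together with the Markov property, makes an a.s.\ finite-time visit of $U$ (indeed, infinitely many visits) unavoidable. Throughout, I would first fix $\epsilon > 0$ small enough that $\overline{B}(x^M,\epsilon) \subseteq U$.

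First, I would establish the uniform-attraction statement. Lyapunov stability of $x^M$ gives a neighbourhood $W$ of $x^M$ with $\Phi(x,t) \in B(x^M,\epsilon/2)$ for all $x \in W \cap I$ and all $t \ge 0$; global attraction gives, for each $x \in I$, a time $t_x$ with $\Phi(x,t_x) \in W$; continuous dependence of the \ref{eq:RMD}-flow on initial data gives an open neighbourhood $V_x$ of $x$ with $\Phi(y,t_x) \in W$ for $y \in V_x \cap I$; and compactness of $I$ lets me pass to a finite subcover $V_{x_1},\dots,V_{x_m}$. Setting $T := \max_j t_{x_j}$ then yields $\Phi(x,T) \in B(x^M,\epsilon/2)$ for every $x \in I$, using that $\Phi(x,\cdot)$ remains in $I$ by part (B) of Theorem \ref{thm:Norman}. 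I expect this to be the only step requiring genuine work: it is the standard upgrade of pointwise attraction to uniform attraction on a compact invariant set, and everything downstream is bookkeeping.

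Second, with $T$ now fixed, I would invoke Proposition \ref{prop:MBL-DPU_conv} with tolerance $\epsilon/2$: along any sequence $\theta_j \downarrow 0$ in $J$ with $n_j\theta_j \to T$, one has $\sup_{x \in I} \Pr(\|X^{\theta_j}(n_j) - \Phi(x,T)\| > \epsilon/2) \to 0$, so I select an index $j$ for which this supremum is at most $\tfrac{1}{2}$ and set $\theta := \theta_j$, $n_* := n_j$. Since $\|\Phi(x,T) - x^M\| < \epsilon/2$, on the event $\{\|X^\theta(n_*) - \Phi(x,T)\| \le \epsilon/2\}$ one gets $\|X^\theta(n_*) - x^M\| \le \epsilon$, hence $X^\theta(n_*) \in U$; therefore
\[
    \Pr(X^\theta(n_*) \in U \mid X^\theta(0) = x) \ \ge\ q := \tfrac{1}{2} \qquad \text{for every } x \in I .
\]
Note the order of choices ($T$ first, then $\theta$ and $n_*$ depending on $T$) rules out any circularity.

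Finally, I would run the Markov-chain hitting argument. The process $\{X^\theta(n)\}_{n\ge0}$ is a time-homogeneous Markov chain valued in $I$, so the Markov property and the bound above give $\Pr(X^\theta((k+1)n_*) \in U \mid \mathcal{F}_{kn_*}) \ge q$ a.s.\ for every $k \ge 0$; iterating over $k$ yields $\Pr(X^\theta(jn_*) \notin U \text{ for } j=1,\dots,K) \le (1-q)^K \to 0$ as $K \to \infty$. Hence, almost surely there is a finite $S$ (a multiple of $n_*$) with $X^\theta(S) \in U$, which is the assertion; applying the same reasoning started afresh from time $S$ shows moreover that $U$ is visited infinitely often a.s.
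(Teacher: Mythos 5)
Your proposal is correct and follows essentially the same route as the paper's proof: a uniform attraction time $T$ over the compact state space, the ODE approximation of Proposition \ref{prop:MBL-DPU_conv} at horizon $T$ to obtain a uniform lower bound on hitting $U$ within $n_*$ steps, and a geometric-trials/Markov argument to conclude an a.s.\ finite hitting time. The only point of divergence is how uniformity of $T$ is obtained: the paper proves continuity of the first-hitting-time function $\tau(x)$ of a stability neighbourhood via a Gronwall/Lipschitz estimate and takes its supremum over the compact domain, whereas you use continuous dependence on initial data plus a finite subcover --- an equivalent and arguably cleaner standard argument.
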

\begin{proof}
Let $\Phi(x,\cdot): \RR_{\geq 0} \rightarrow \mathcal{D}$ satisfy \eqref{eq:RMD} with $\Phi(x,0) = x$ for all $x \in \mathcal{D}$.
Let further $U' \subset U$ such that $x^M \in U'$ and $\bigcup_{x \in U'} B_\delta(x) \subset U$ for some $\delta > 0$, where $B_\delta(x)$ denotes an open ball with radius $\delta$ around $x$.
As $x^M$ is globally asymptotically stable, there is for each $x \in \mathcal{D}$ a $t' < \infty$ such that for all $t > t'$: $\Phi(x,t) \in U'$.

This is because there is a neighbourhood $V \subset U'$ of $x^M$ such that $\forall x^0 \in V, t > 0: \Phi(x^0, t) \in U'$ due to the Lyapunov stability of $x^M$.
Since $x^M$ is asymptotically stable, for every $x$ there is a $t > 0$ such that $\Phi(x, t) \in V$ and hence the solution will remain in $U'$ afterwards.

Therefore, define $\tau: \mathcal{D} \rightarrow \RR$ such that:
\[
\tau(x) = \inf \{ T > 0 :\: \Phi(x,T) \in V \}
\]

Since the RHS of \eqref{eq:RMD} is continuously differentiable by assumption, it is also Lipschitz continuous. Thus, $\Phi$ is continuous in the first argument and so is $\tau$ as the following argument shows:

Let $x \in \mathcal{D}$ and $\epsilon_1 > 0$. Then there is $t > \tau(x)$ such that $\Phi(x,s) \in V$ for $s \in (\tau(x), t]$.
Choose $s \in (\tau(x), t]$ such that $|\tau(x) - s| < \epsilon_1$.
Then $\Phi(x,s) \in V$ and there is a neighbourhood $U_x$ of $x$ such that for all $y \in U_x$, $\Phi(y,s) \in V$.
Hence $\tau(y) < s < \tau(x) + \epsilon_1$.

We also have $\tau(y) > \tau(x) - \epsilon_1$ due to the following:\\
Consider $d := \inf \{ \| \Phi(x, \tau(x) - \epsilon_1) - v \| : v \in V \} > 0 $.
Note that the Lipschitz condition implies that there is $L > 0$ such that for all $t > 0$ and all $y \in \mathcal{D}$
\[
\| \Phi(x,t) - \Phi(y,t) \| \leq \| x - y \| e^{Lt}
\]
and for all $t \in [0, \tau(x) - \epsilon_1]$,
\[
\| \Phi(x, t) - \Phi(y, t) \| \leq \| x - y \| e^{L (\tau(x) - \epsilon_1) }
\]
and w.l.o.g. we can assume that $\forall y \in U_x$, we have $\| x - y \| e^{L (\tau(x) - \epsilon_1) } < \frac{d}{2}$.
Thus we have for all $v \in V$
\begin{align*}
0 < d
&
\leq \| \Phi(x, t) - v \| = \| \Phi(x, t) - \Phi(y, t) + \Phi(y, t) - v \|
\\
&
\leq \| \Phi(x, t) - \Phi(y, t) \| + \| \Phi(y, t) - v \|
\\
&
\leq \| x - y \| e^{L (\tau(x) - \epsilon_1) } + \| \Phi(y, t) - v \|
< \frac{d}{2} + \| \Phi(y, t) - v \|
\end{align*}
and so for all $y \in U_x$, we have $\inf \{ \| \Phi(y, t) - v \| : v \in V, t \in [0, \tau(x) - \epsilon_1] \} \geq \frac{d}{2} > 0$ and thus $\tau(y) > \tau(x) - \epsilon_1$.
So $\tau$ is continuous on $\mathcal{D}$.
Let then $T := \sup_{x \in \mathcal{D}} \tau(x) < \infty$. Note that for all $x \in \mathcal{D}$ we have that for all $t > T$, $\Phi(x,t) \in U'$ and $B_\delta(\Phi(x,t)) \subset U$.

Let further $\eta > 0$.
Then with proposition \ref{prop:MBL-DPU_conv}, there are $\theta > 0$, $n_\theta \in \NN$ such that for all $x \in \mathcal{D}$,
\[
\Pr( X^\theta (n_\theta) \in B_{\delta} ( \Phi(x, T) ) \subset U | X^\theta (0) = x) > \eta
\]
and so
\[
\Pr( X^\theta (n_\theta) \in U ) > \eta.
\]

From here it is easy to see that the first hit time of $U$ for $\{X^\theta(t)\}_{t \in \NN_0}$ is almost surely finite, i.e., the earliest time $t$ for which $X^\theta (t) \in U$:
Let $Z(k) := X^\theta (kn_\theta)$ for $k \in \NN_0$ and let $S$ be the first hit time of $U$ for $\{Z(k)\}_{k \in \NN_0}$, such that $S$ is a random variable with values in $\NN_0 \cup \{\infty\}$.
Clearly the first hit time of $U$ for $\{X^\theta(t)\}_{t \in \NN_0}$ is smaller than for $\{Z(k)\}_{k \in \NN_0}$.

We have that for all $z \in \mathcal{D}$ and all $k \in \NN$:
\[
\Pr( Z_{k+1} \in B_{\delta} ( \Phi(z, T) ) \subset U | Z_k = z ) > \eta
\]
and hence
\[
\Pr( Z_{k+1} \in U ) > \eta .
\]

Then we have for $S$,
\begin{align*}
\Pr( S \leq k + 1 )
& = \Pr( S \leq k ) + (1 - \Pr( S \leq k ) ) \Pr( Z_{k+1} \in U )
>
\Pr( S \leq k ) ( 1 - \eta) + \eta
\end{align*}
and a quick induction argument yields:
\begin{align*}
\Pr( S \leq k + 1 )
&
>
1 - (1-\eta)^k \big(1 - (1 - \eta) \Pr( S = 0 )\big)
\end{align*}

The probability of a finite hitting time is then:
\begin{align*}
\Pr( S \in \NN_0 )
= \lim_{k \rightarrow \infty} \Pr( S \leq k + 1 )
\geq 1 - \lim_{k \rightarrow \infty} (1-\eta)^k (1 - (1 - \eta) \Pr( S = 0 ))
= 1
\end{align*}
In particular, the hitting time of $U$ for $\{X^\theta (t)\}_{t \in \NN_0}$ is finite almost surely.
\end{proof}

The previous proposition \ref{prop:finite_time} together with the consideration that an attracting mutation limit is approximated by asymptotically stable mutation equilibria and the immediately following corollary show proposition \ref{prop:MBL-DPU_attr_mut_lim}:
\begin{corollary}
If $x^M$ is a globally asymptotically stable equilibrium of \eqref{eq:RMD} and $U$ an open neighbourhood of $x^M$,
then there is $\theta > 0$ such that the stochastic process $\{X^\theta(n)\}_{n \geq 0}$ defined in \eqref{eq:defSimpleMBL} visits $U$ infinitely often almost surely.
\end{corollary}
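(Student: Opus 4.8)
The plan is to combine the uniform lower bound on the probability of reaching $U$ within a fixed number of steps---already established in the course of proving Proposition~\ref{prop:finite_time}---with the Markov property of the chain generated by MBL-DPU, so that every excursion away from $U$ is almost surely followed by a return.

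First I would fix $\theta > 0$ and $n_\theta \in \NN$ as in Proposition~\ref{prop:finite_time} and set $Z(k) := X^\theta(k n_\theta)$. The proof of that proposition already supplies a constant $\eta > 0$ with
\[
    \Pr\big( Z(k+1) \in U \,\big|\, Z(k) = z \big) > \eta
\]
for every $z \in \mathcal{D}$ and every $k \in \NN_0$; the bound holds from an \emph{arbitrary} state because global asymptotic stability of $x^M$ makes its basin of attraction all of $\mathcal{D}$. Since MBL-DPU induces a time-homogeneous Markov chain, so does $\{Z(k)\}_{k \geq 0}$.

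Next I would iterate this one-step estimate together with the Markov property to obtain, for any $z \in \mathcal{D}$ and any $m \in \NN$,
\[
    \Pr\big( Z(k+j) \notin U \ \text{for}\ j = 1, \dots, m \,\big|\, Z(k) = z \big) \le (1-\eta)^m ,
\]
exactly as in the induction at the end of the proof of Proposition~\ref{prop:finite_time}; letting $m \to \infty$ shows that, started from any state at any time, $\{Z(k)\}$ returns to $U$ almost surely. I would then define the successive return times $S_0 := 0$ and $S_{j+1} := \inf\{ k > S_j : Z(k) \in U \}$, which are stopping times for the natural filtration, and use the strong Markov property together with the previous display to conclude $\Pr(S_{j+1} < \infty \mid S_j < \infty) = 1$. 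A one-line induction on $j$ then gives $\Pr(S_j < \infty) = 1$ for all $j$, and intersecting these countably many probability-one events yields $\Pr\big( \bigcap_j \{S_j < \infty\} \big) = 1$; hence $\{Z(k)\}$, and therefore $\{X^\theta(n)\}_{n \geq 0}$ which contains it as a subsequence, visits $U$ infinitely often almost surely. Equivalently, one could invoke L\'evy's conditional Borel--Cantelli lemma for the events $\{Z(k) \in U\}$, whose conditional probabilities given the past exceed $\eta$ and hence sum to $+\infty$ almost surely.

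I expect the only real work to be the bookkeeping with $\sigma$-algebras and the strong Markov property; there is no new analytic ingredient beyond Proposition~\ref{prop:finite_time}. The point that needs care is that the one-block lower bound $\eta$ is tied to the fixed block length $n_\theta$ coming from the approximation result, so the whole argument has to be run on the embedded chain $Z(k) = X^\theta(k n_\theta)$ rather than on the raw time index $n$, and one must note that visiting $U$ along this subsequence already suffices for the claim.
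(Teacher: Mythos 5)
Your proposal is correct and rests on exactly the same mechanism as the paper's proof: the uniform-in-initial-state bound $\eta$ from Proposition~\ref{prop:finite_time} plus the Markov property to restart the chain and conclude that a return to $U$ follows every excursion almost surely. The only cosmetic difference is that the paper restarts at arbitrary deterministic times $t'$ and invokes Proposition~\ref{prop:finite_time} as a black box (so the simple Markov property suffices), whereas you work with successive return times of the embedded chain $Z(k)=X^\theta(kn_\theta)$ and the strong Markov property; both are valid and essentially equivalent.
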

\begin{proof}
Consider for any finite $t' \in \NN_0$ the probability that $\{X^\theta(n)\}_{n \geq 0}$ will not visit $U$ afterwards.
This is clearly the same as the probability that the process $\{Z^\theta(n)\}_{n \geq 0}$ induced by \eqref{eq:defSimpleMBL} and starting in $X^\theta(t')$, i.e., $Z^\theta(0) = X^\theta(t')$ almost surely, will not visit $U$ at all.
The previous proposition \ref{prop:finite_time} shows that this probability is $0$, which concludes the proof.
\end{proof}

\section{Specification of experiments and further results}\label{app:experiments}%

This section provides the specification details for the experimental results of section \ref{sec:exp_res} and further results for a broader range of parameter values. It is structured as follows: Each game setting is introduced with its payoff structure together with further results and a short description of the results, in the order of Prisoner's Dilemma (\ref{app:exp_spec_PD}), Matching Pennies (\ref{app:exp_spec_MP}), RPS-$n$ games (\ref{app:exp_spec_RPS}), and three-player Matching Pennies (\ref{app:exp_spec_3MP}). 
For the two-player settings, the payoff values are given as matrices $R_1$ and $R_2$, giving the payoffs for players one and two respectively, such that if player one chooses the $i$-th pure strategy from $A_1$ and player two chooses the $j$-th pure strategy from $A_2$, then the payoffs are given as $r_1(i,j) = [R_1]_{ij}$ and $r_2(i,j) = [R_2]_{ij}$ respectively. 
The experiments were run on a small cluster of multi-kernel CPUs, but we have checked that they can easily be run on personal hardware.

\subsection{Prisoner's Dilemma}\label{app:exp_spec_PD}

The experimental results for the Prisoner's Dilemma are based on the following payoff structure:
\begin{align*}
R_1 =
\begin{pmatrix}
1 & 5 \\
0 & 3
\end{pmatrix}
& &
R_2 =
\begin{pmatrix}
1 & 0 \\
5 & 3
\end{pmatrix}
\end{align*}

This version has a strict unique Nash equilibrium $x^*$ at:
\begin{align*}
x^*_1 = &
\begin{pmatrix}
1 & 0
\end{pmatrix}^T
&
x^*_2 = &
\begin{pmatrix}
1 & 0
\end{pmatrix}^T
\end{align*}

\paragraph{MBL-DPU and MBL-LC.}
The experimental results (figures \ref{fig:PD_MBL-DPU_high_mut}, \ref{fig:PD_MBL-DPU_low_mut_appx}) illustrate the behaviour of MBL-DPU and its convergence for different mutation strengths $M$.
In accordance with intuition, convergence is quick for high mutation strength at the price of the mutation equilibrium being further away from the Nash equilibrium. For lower values of $M$, we have that the mutation equilibrium moves closer to the Nash equilibrium while convergence becomes slower. 
In comparison, MBL-LC (figures \ref{fig:PD_MBL-LC_high_mut}, \ref{fig:PD_MBL-LC_low_mut}) behaves similarly while converging much more quickly. An intuition for this is provided when considering that MBL-DPU can be viewed as a linear approximation to MBL-LC for small $\tau$.

\paragraph{FAQ-learning.}
For FAQ-learning (figures \ref{fig:PD_FAQ_high_mut}, \ref{fig:PD_FAQ_low_mut}), the role of $\tau$ corresponds to that of $M^{-1}$ in MBL. We have that, similarly to both MBL variants, with increasing values of $\tau$ (i.e., decreasing values of $M$), the dynamics approaches a region that lies closer to the Nash equilibrium.
The intuition here is provided by the fact that the deterministic limit of FAQ is claimed to be a replicator dynamics with a perturbative term whose effect depends on $\tau$ and which pulls the system towards the centre of $\mathcal{D}$. 
Furthermore, convergence is the slower the weaker the perturbative term is, much like in the two MBL variants. 
In contrast to the MBL variants, FAQ-learning defaults to the usual Q-learning when $x_{ih} \leq \beta$. This effectively neutralises the repelling dynamics at the boundary of $\mathcal{D}$, which would otherwise result in very large (unbounded) changes in the Q-values for very low values of $x_{ih}$.
Note that MBL-LC has $x_{ih}$ occurring in the denominator twice and hence retains the repelling effect at the boundary of $\mathcal{D}$.

\paragraph{WoLF-PHC.}
In contrast to the other algorithms, WoLF-PHC (figure \ref{fig:PD_WoLF-PHC}) follows a chosen direction for some time until it is replaced by a new direction, which results in a discrete sequence of directions and non-smooth trajectories. Convergence to the Nash equilibrium occurs much faster than for the other algorithms in the case of PD.
However, strict Nash equilibria are also asymptotically stable in RD and thus PD is a base case which illustrates the different behaviours in a clear-cut situation, as opposed to more challenging and ambiguous situations without strict Nash equilibria.

\begin{figure}[h] %
    \begin{center}
    	\begin{subfigure}[b]{0.3\linewidth}
    	\centering
    	\includegraphics[width=\textwidth, trim={0 770bp 0 0}, clip ]{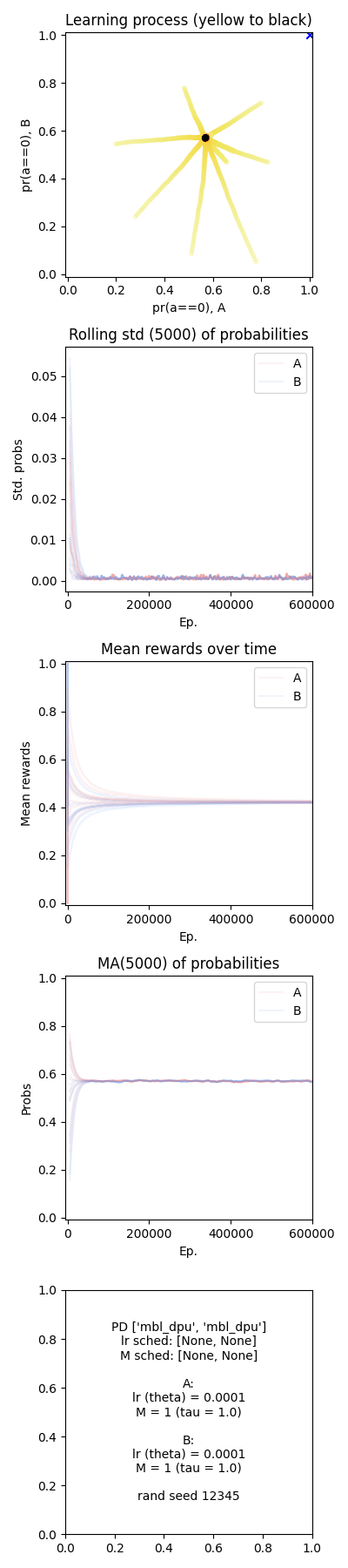}
    	\caption{$\tau = 1$, $M = 1$}
    	\end{subfigure}
    	\hfill
    	\begin{subfigure}[b]{0.3\linewidth}
    	\centering
    	\includegraphics[width=\textwidth, trim={0 770bp 0 0}, clip ]{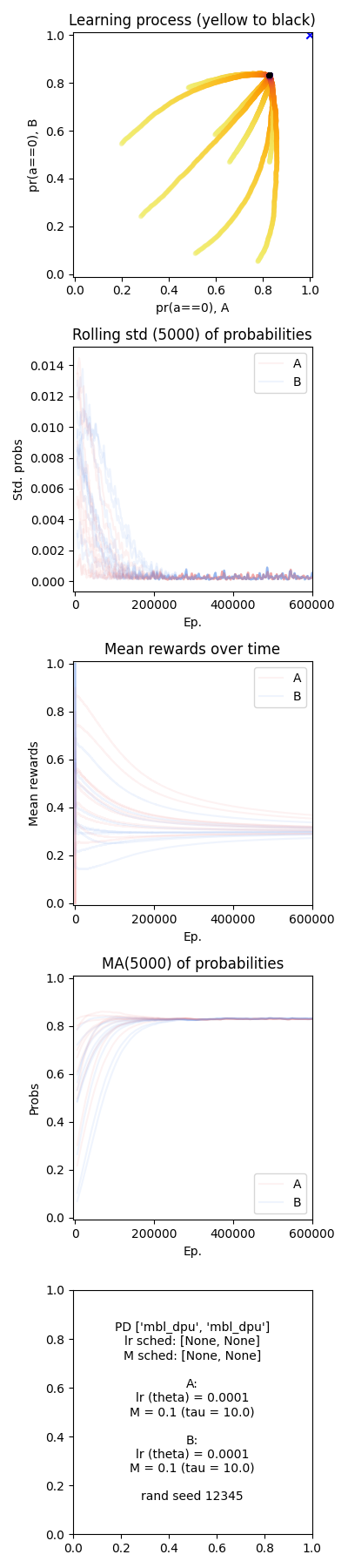}
        \caption{$\tau = 10$, $M = 10^{-1}$}
    	\end{subfigure}
    	\hfill
    	\begin{subfigure}[b]{0.3\linewidth}
    	\centering
    	\includegraphics[width=\textwidth, trim={0 770bp 0 0}, clip ]{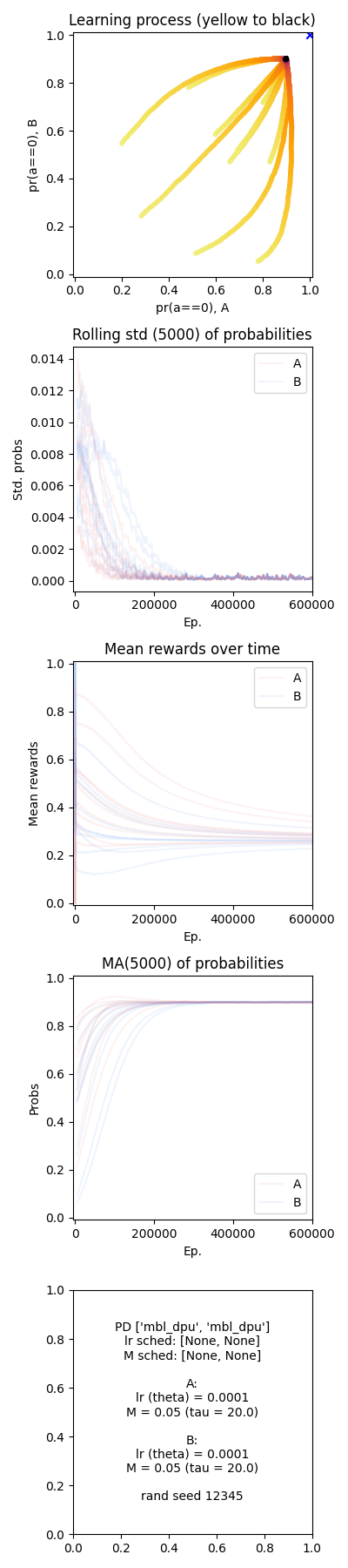}
    	\caption{$\tau = 20$, $M = 20^{-1}$}
    	\end{subfigure}
    \end{center}
    \caption[MBL-DPU in self-play on the PD game.]{MBL-DPU in self-play on the PD game with different values for $\tau$ ($1$, $10$, $20$) or $M$ ($1$, $10^{-1}$, $20^{-1}$) equivalently; $\theta = 10^{-4}$; for 10 different initial conditions. In each subfigure, the upper graph shows the ten trajectories in the projection on the first components of the players' strategies, in this case the `defect' strategy, with the first player given on the horizontal axis and the second player on the vertical axis. Points coloured yellow correspond to earlier points in time, changing over orange and violet to black for later points in time. The position of the game's Nash equilibrium is marked with a blue cross in the projection plane.
    The lower graph shows the standard deviation of all components of the players' strategies for each point in time over the past 5000 time steps, for each of the ten initial conditions, coloured red and blue for the two players. Time is given on the horizontal axis. The standard deviation is computed with the usual Euclidean metric.
    } %
    \label{fig:PD_MBL-DPU_high_mut}
\end{figure}
\begin{figure}[p] %
    \begin{center}
    	\begin{subfigure}[b]{0.3\linewidth}
    	\centering
    	\includegraphics[width=\textwidth, trim={0 770bp 0 0}, clip ]{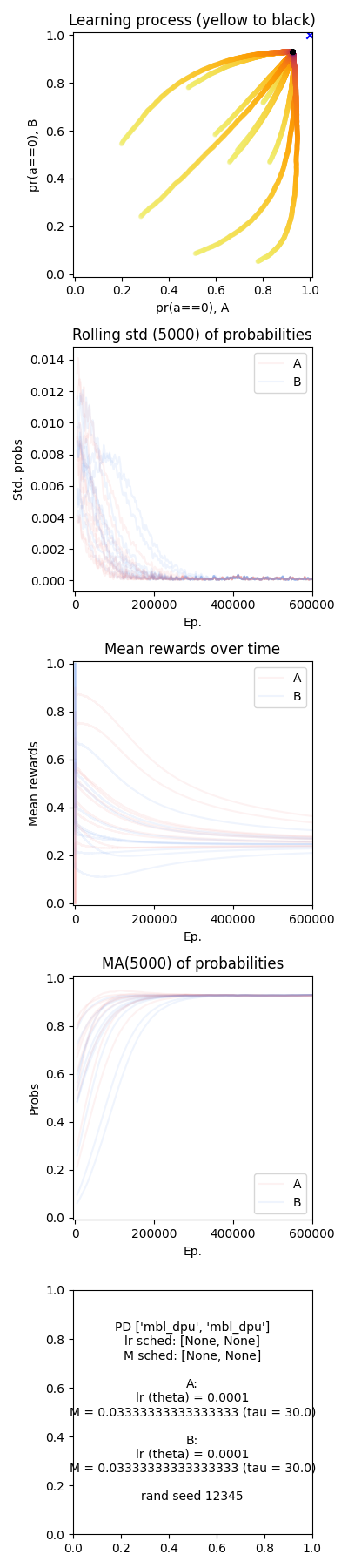}
    	\caption{$\tau = 30$, $M = 30^{-1}$}
    	\end{subfigure}
    	\hfill
    	\begin{subfigure}[b]{0.3\linewidth}
    	\centering
    	\includegraphics[width=\textwidth, trim={0 770bp 0 0}, clip ]{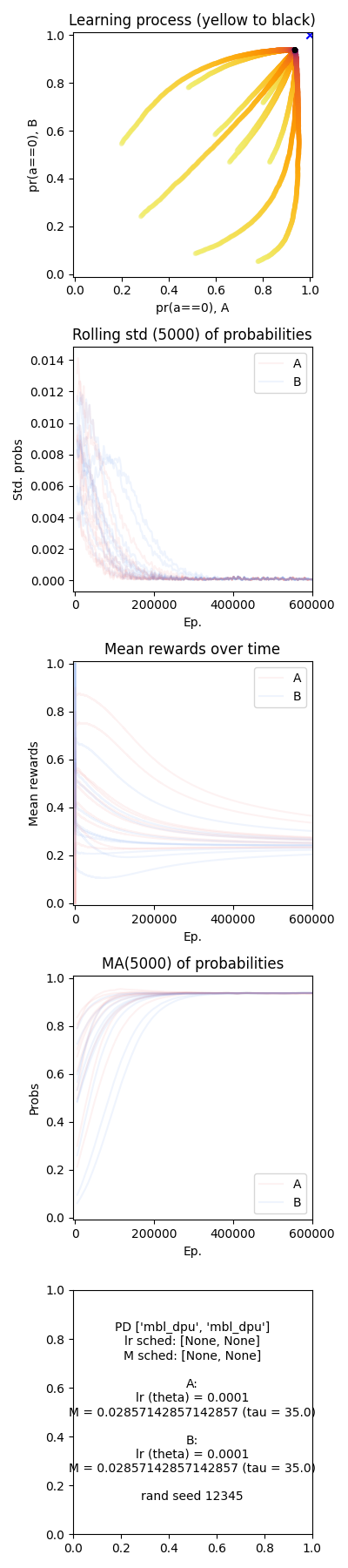}
        \caption{$\tau = 35$, $M = 35^{-1}$}
    	\end{subfigure}
    	\hfill
    	\begin{subfigure}[b]{0.3\linewidth}
    	\centering
    	\includegraphics[width=\textwidth, trim={0 770bp 0 0}, clip ]{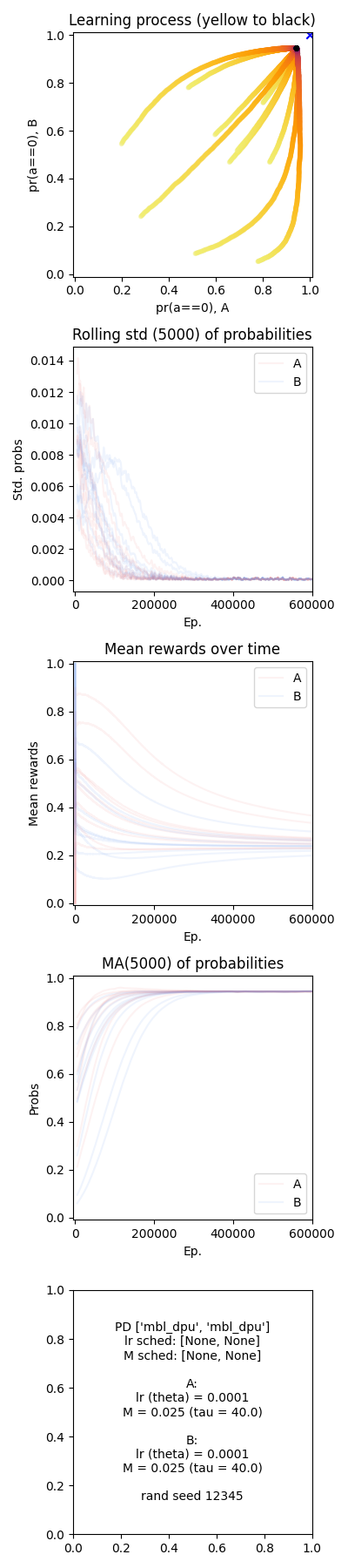}
    	\caption{$\tau = 40$, $M = 40^{-1}$}
    	\end{subfigure}
    \end{center}
    \caption[MBL-DPU in self-play on the PD game.]{MBL-DPU in self-play on the PD game with different values for $\tau$ ($30$, $35$, $40$) or $M$ ($30^{-1}$, $35^{-1}$, $40^{-1}$) equivalently; $\theta = 10^{-4}$; for 10 different initialisations.
    (See figure \ref{fig:PD_MBL-DPU_high_mut} for a detailed explanation of the graphs.)
    }
    \label{fig:PD_MBL-DPU_low_mut_appx}
\end{figure}
\begin{figure}[p] %
    \begin{center}
    	\begin{subfigure}[b]{0.3\linewidth}
    	\centering
    	\includegraphics[width=\textwidth, trim={0 770bp 0 0}, clip ]{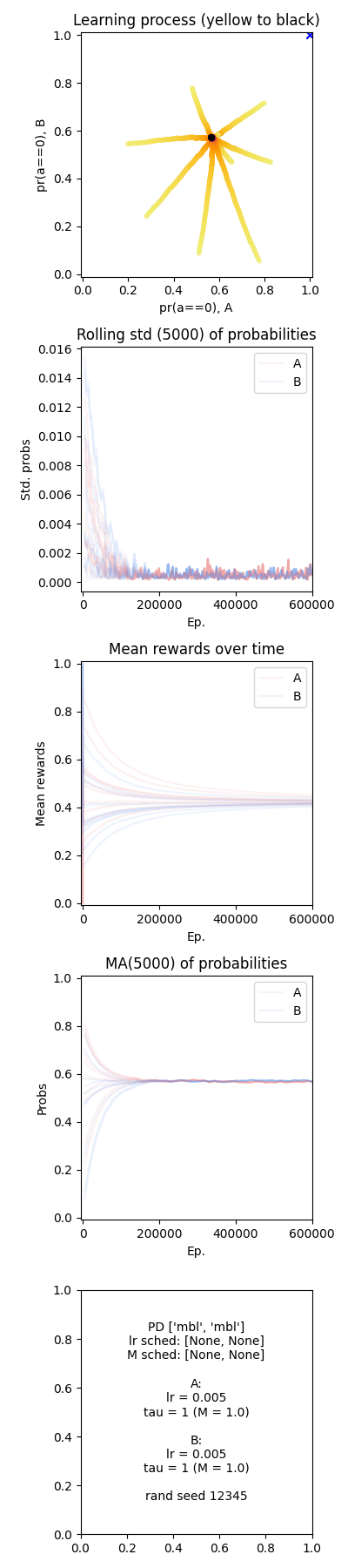}
        \caption{$\tau = 1$, $M = 1^{-1}$}
    	\end{subfigure}
    	\hfill
    	\begin{subfigure}[b]{0.3\linewidth}
    	\centering
    	\includegraphics[width=\textwidth, trim={0 770bp 0 0}, clip ]{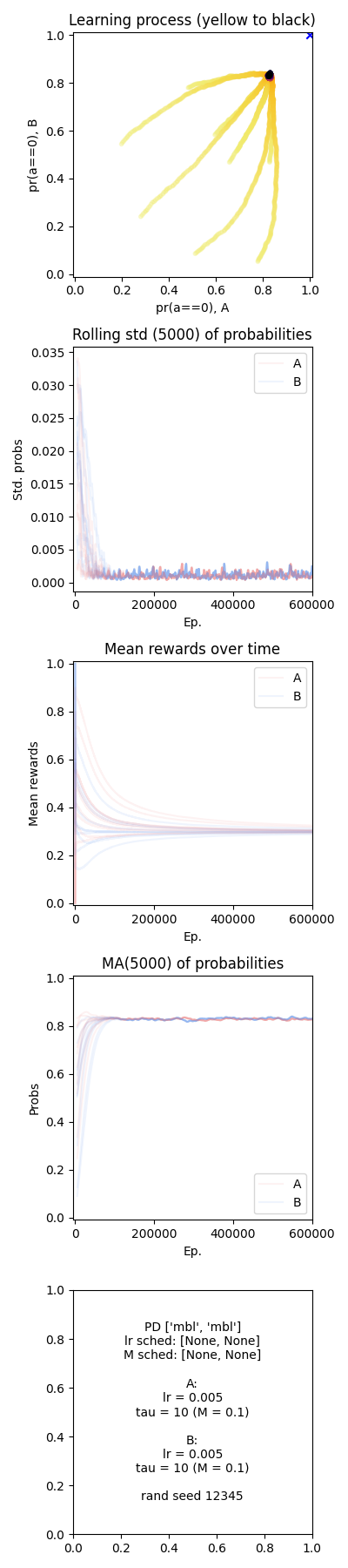}
    	\caption{$\tau = 10$, $M = 10^{-1}$}
    	\end{subfigure}
    	\hfill
    	\begin{subfigure}[b]{0.3\linewidth}
    	\centering
    	\includegraphics[width=\textwidth, trim={0 770bp 0 0}, clip ]{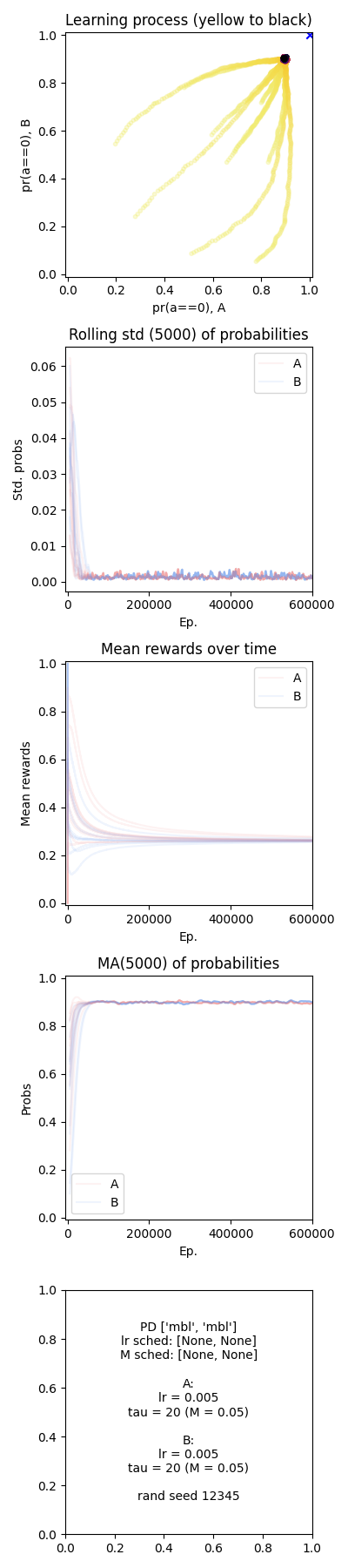}
    	\caption{$\tau = 20$, $M = 20^{-1}$}
    	\end{subfigure}
    \end{center}
    \caption[MBL-LC in self-play on the PD game.]{MBL-LC in self-play on the PD game with different values for $\tau$ ($1$, $10$, $20$) or $M$ ($1$, $10^{-1}$, $20^{-1}$) equivalently; $\theta = 5 \cdot 10^{-3}$; for 10 different initialisations.
    (See figure \ref{fig:PD_MBL-DPU_high_mut} for a detailed explanation of the graphs.)
    }
    \label{fig:PD_MBL-LC_high_mut}
\end{figure}
\begin{figure}[p] %
    \begin{center}
    	\begin{subfigure}[b]{0.3\linewidth}
    	\centering
    	\includegraphics[width=\textwidth, trim={0 770bp 0 0}, clip ]{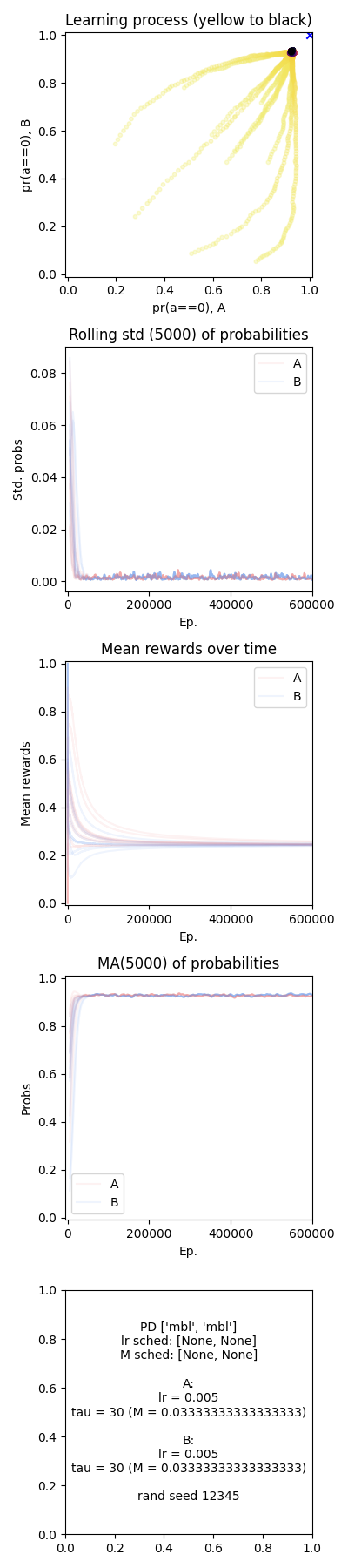}
    	\caption{$\tau = 30$, $M = 30^{-1}$}
    	\end{subfigure}
    	\hfill
    	\begin{subfigure}[b]{0.3\linewidth}
    	\centering
    	\includegraphics[width=\textwidth, trim={0 770bp 0 0}, clip ]{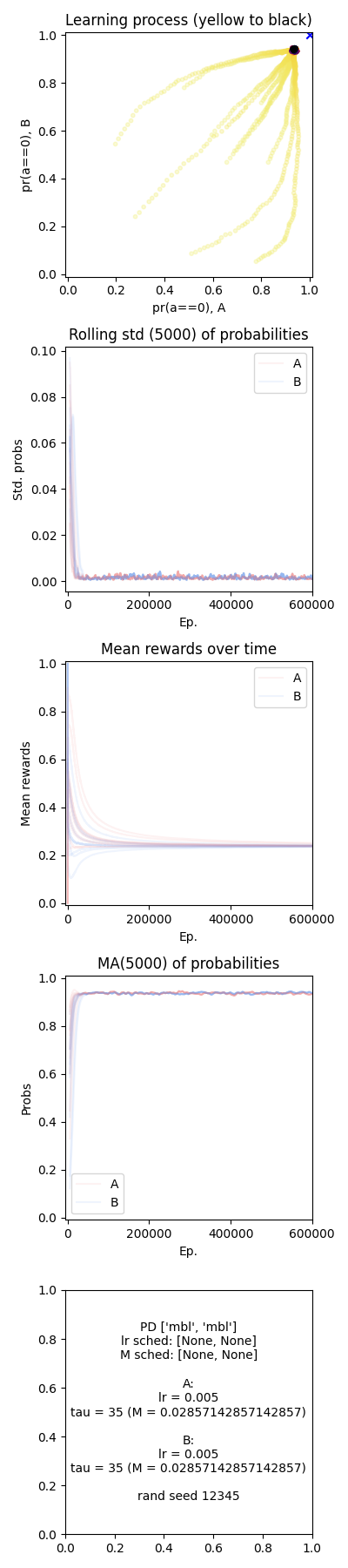}
        \caption{$\tau = 35$, $M = 35^{-1}$}
    	\end{subfigure}
    	\hfill
    	\begin{subfigure}[b]{0.3\linewidth}
    	\centering
    	\includegraphics[width=\textwidth, trim={0 770bp 0 0}, clip ]{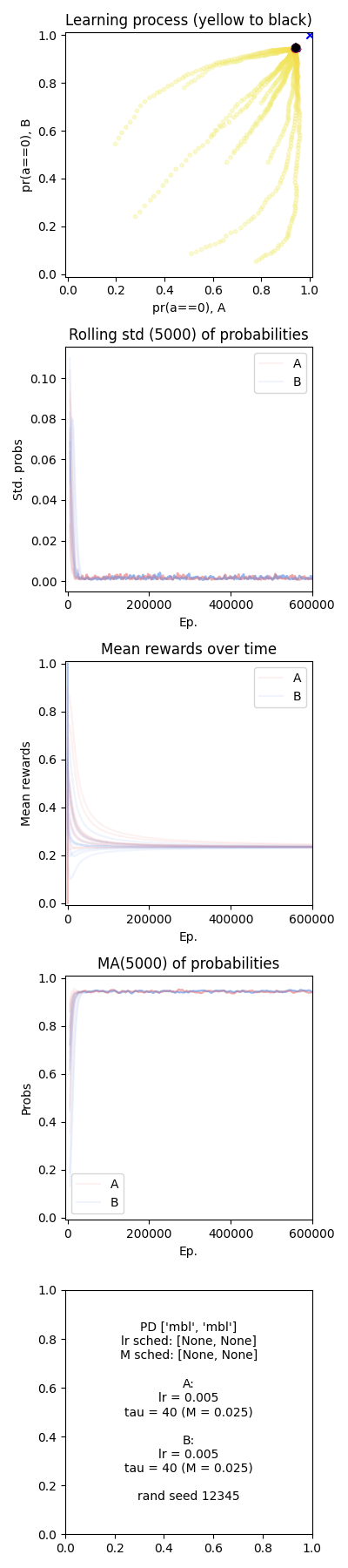}
    	\caption{$\tau = 40$, $M = 40^{-1}$}
    	\end{subfigure}
    \end{center}
    \caption[MBL-LC in self-play on the PD game.]{MBL-LC in self-play on the PD game with different values for $\tau$ ($30$, $35$, $40$) or $M$ ($30^{-1}$, $35^{-1}$, $40^{-1}$) equivalently; $\theta = 5 \cdot 10^{-3}$; for 10 different initialisations.
    (See figure \ref{fig:PD_MBL-DPU_high_mut} for a detailed explanation of the graphs.)
    }
    \label{fig:PD_MBL-LC_low_mut}
\end{figure}
\begin{figure}[p] %
    \begin{center}
    	\begin{subfigure}[b]{0.3\linewidth}
    	\centering
    	\includegraphics[width=\textwidth, trim={0 770bp 0 0}, clip ]{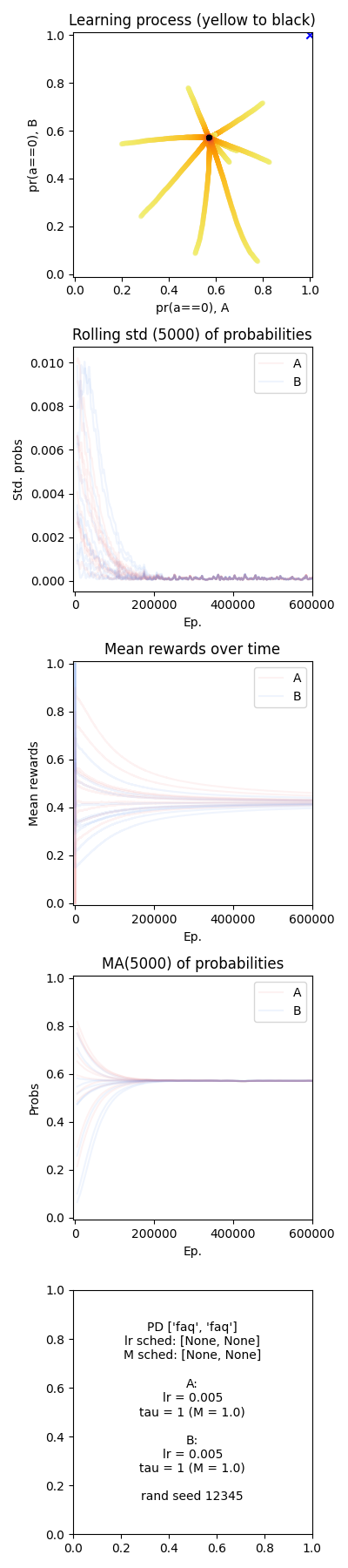}
    	\caption{$\tau = 1$, $M = 1^{-1}$}
    	\end{subfigure}
    	\hfill
    	\begin{subfigure}[b]{0.3\linewidth}
    	\centering
    	\includegraphics[width=\textwidth, trim={0 770bp 0 0}, clip ]{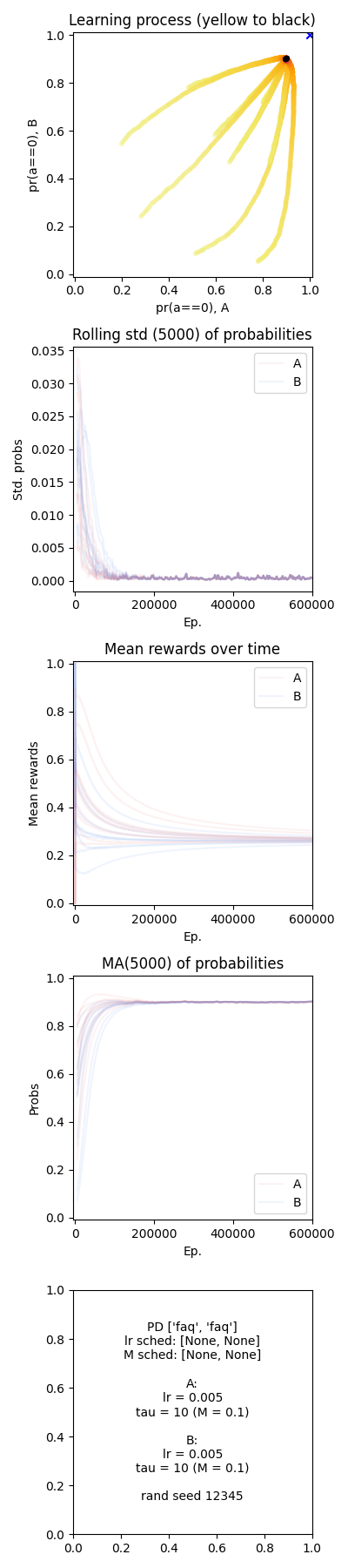}
        \caption{$\tau = 10$, $M = 10^{-1}$}
    	\end{subfigure}
    	\hfill
    	\begin{subfigure}[b]{0.3\linewidth}
    	\centering
    	\includegraphics[width=\textwidth, trim={0 770bp 0 0}, clip ]{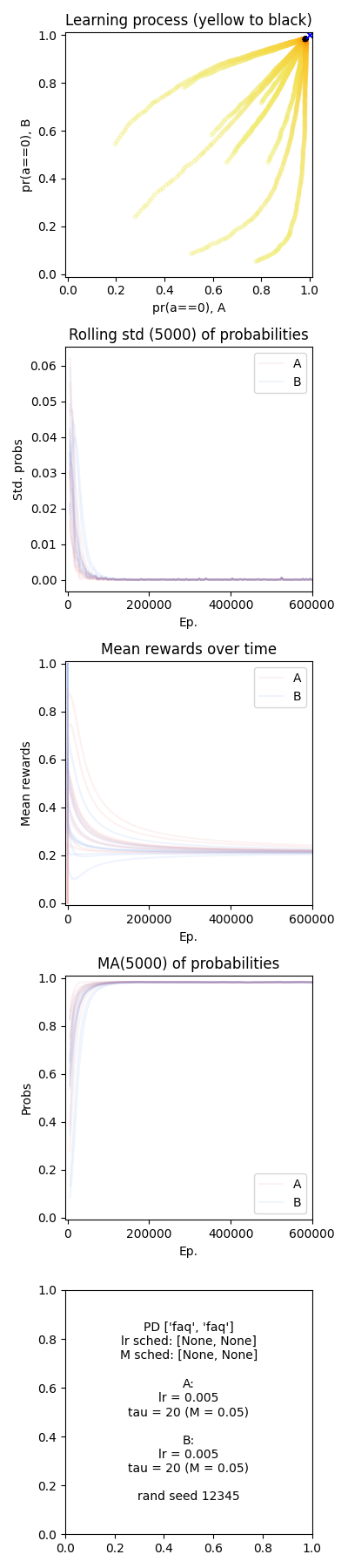}
    	\caption{$\tau = 20$, $M = 20^{-1}$}
    	\end{subfigure}
    \end{center}
    \caption[FAQ in self-play on the PD game.]{FAQ in self-play on the PD game with different values for $\tau$ ($1$, $10$, $20$) or $M$ ($1$, $10^{-1}$, $20^{-1}$) equivalently; $\theta = 5 \cdot 10^{-3}$; for 10 different initialisations.
    (See figure \ref{fig:PD_MBL-DPU_high_mut} for a detailed explanation of the graphs.)
    }
    \label{fig:PD_FAQ_high_mut}
\end{figure}
\begin{figure}[p] %
    \begin{center}
    	\begin{subfigure}[b]{0.3\linewidth}
    	\centering
    	\includegraphics[width=\textwidth, trim={0 770bp 0 0}, clip ]{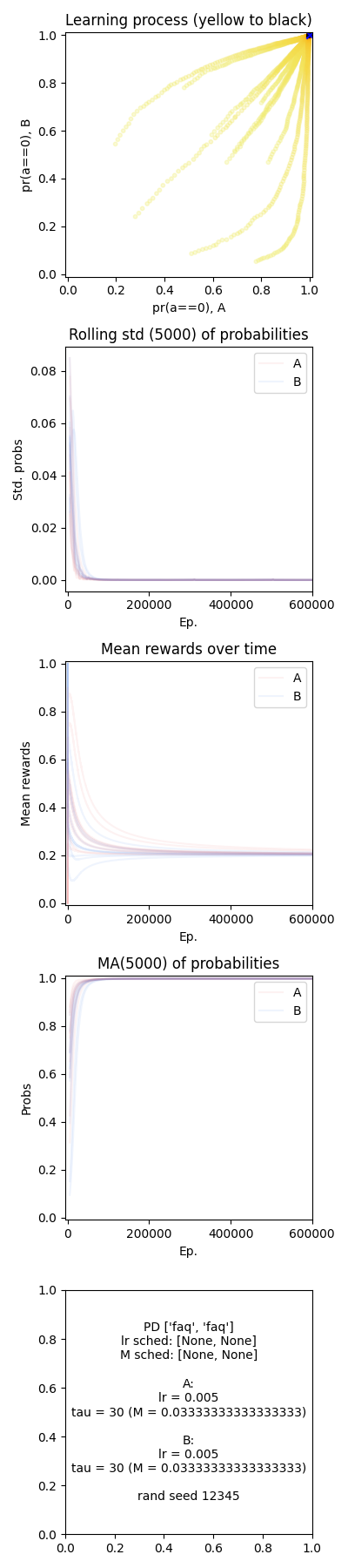}
    	\caption{$\tau = 30$, $M = 30^{-1}$}
    	\end{subfigure}
    	\hfill
    	\begin{subfigure}[b]{0.3\linewidth}
    	\centering
    	\includegraphics[width=\textwidth, trim={0 770bp 0 0}, clip ]{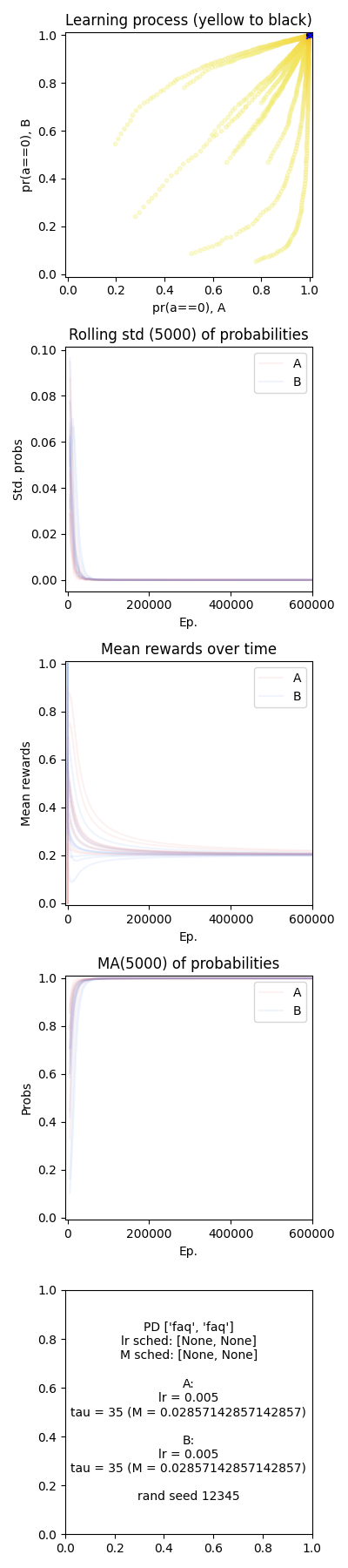}
        \caption{$\tau = 35$, $M = 35^{-1}$}
    	\end{subfigure}
    	\hfill
    	\begin{subfigure}[b]{0.3\linewidth}
    	\centering
    	\includegraphics[width=\textwidth, trim={0 770bp 0 0}, clip ]{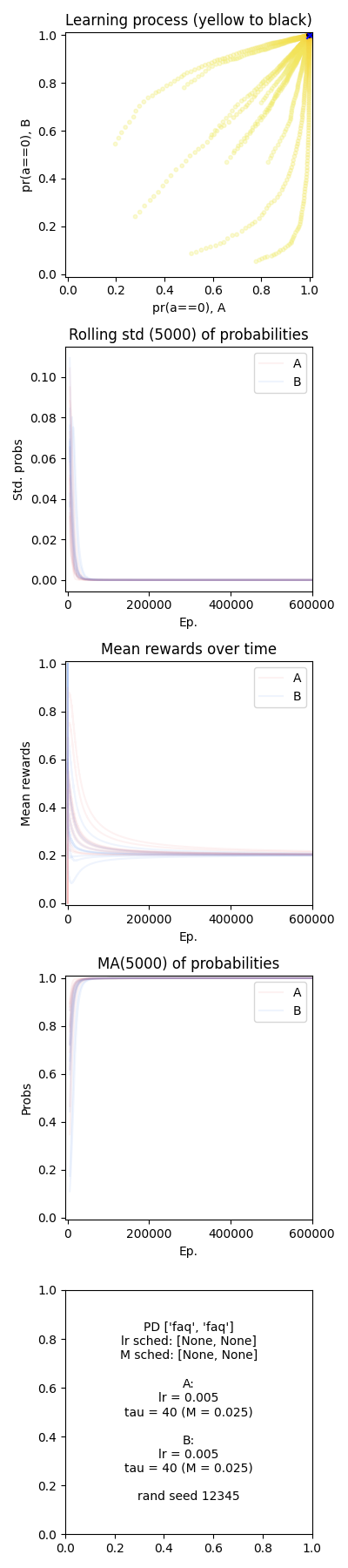}
    	\caption{$\tau = 40$, $M = 40^{-1}$}
    	\end{subfigure}
    \end{center}
    \caption[FAQ in self-play on the PD game.]{FAQ in self-play on the PD game with different values for $\tau$ ($30$, $35$, $40$) or $M$ ($30^{-1}$, $35^{-1}$, $40^{-1}$) equivalently; $\theta = 5 \cdot 10^{-3}$; for 10 different initialisations.
    (See figure \ref{fig:PD_MBL-DPU_high_mut} for a detailed explanation of the graphs.)
    }
    \label{fig:PD_FAQ_low_mut}
\end{figure}

\clearpage

\begin{figure}[!ht] %
    \begin{center}
    	\begin{subfigure}[b]{0.3\linewidth}
    	\centering
    	\includegraphics[width=\textwidth, trim={0 770bp 0 0}, clip ]{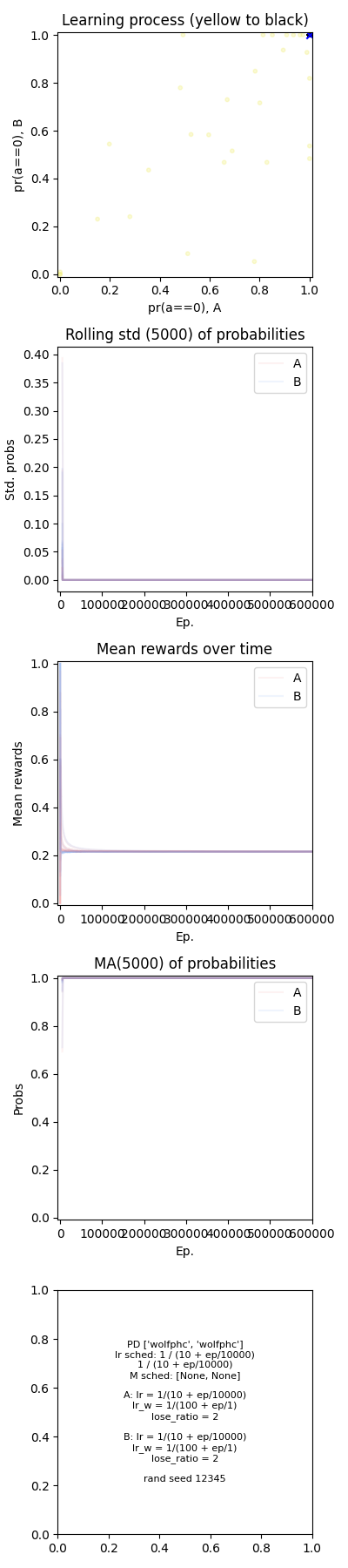}
    	\caption{Initial learning rate $10^{-1}$ for $Q$. Win learning rate $10^{-2}$.}
    	\end{subfigure}
    	\hfill
    	\begin{subfigure}[b]{0.3\linewidth}
    	\centering
    	\includegraphics[width=\textwidth, trim={0 770bp 0 0}, clip ]{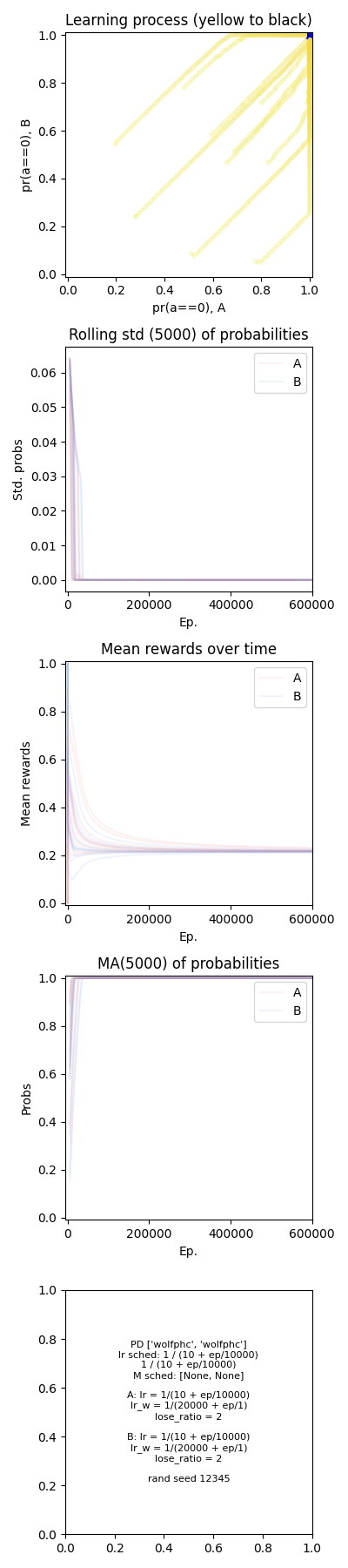}
        \caption{Initial learning rate $10^{-1}$ for $Q$. Win learning rate $1/2 \cdot 10^{-4}$.}
    	\end{subfigure}
    	\hfill
    	\begin{subfigure}[b]{0.3\linewidth}
    	\centering
    	\includegraphics[width=\textwidth, trim={0 770bp 0 0}, clip ]{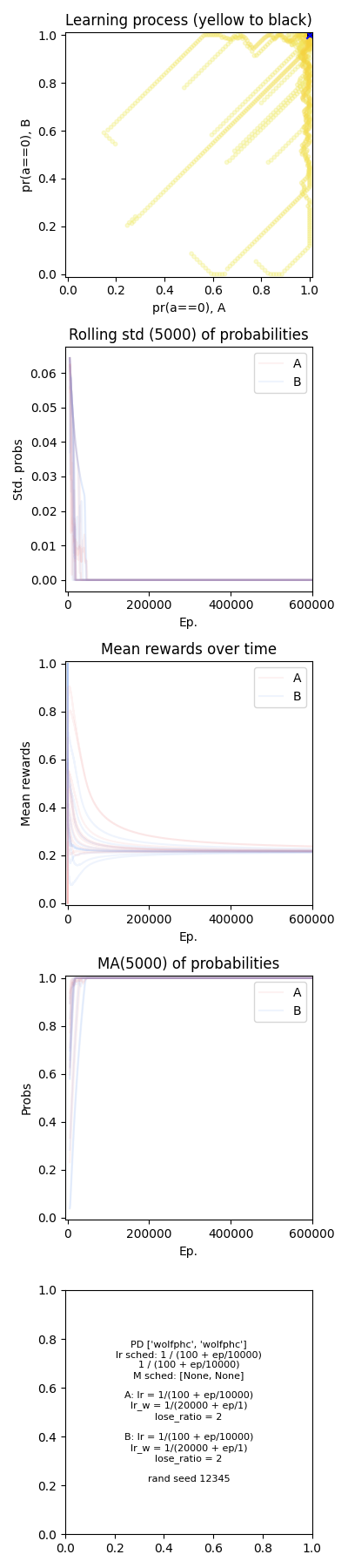}
    	\caption{Initial learning rate $10^{-2}$ for $Q$. Win learning rate $1/2 \cdot 10^{-4}$.}
    	\end{subfigure}
    \end{center}
    \caption[WoLF-PHC in self-play on the PD game.]{WoLF-PHC in self-play on the PD game with different learning schedules; for 10 different initialisations. Subgraph (a) has a high convergence speed such that only disconnected points can be seen.
    (See figure \ref{fig:PD_MBL-DPU_high_mut} for a detailed explanation of the graphs.)
    }
    \label{fig:PD_WoLF-PHC}
\end{figure}

\subsection{Zero-sum games}

For two-player zero-sum games, we have preliminary results showing that the Nash equilibrium is an attracting mutation limit. While RD (and Cross learning) would not converge to interior equilibria (with Cross learning eventually approaching the boundary), \ref{eq:RMD} converges to the mutation equilibrium for every choice of mutation probabilities, $c \in \setint{\mathcal{D}}$ and $M > 0$, and \todo{[C] Add reference to result.}{so does MBL-DPU.}\ 
Stability is induced by the perturbative terms and their varying strengths have two effects which have to be weighed against each other. We demonstrate the general idea in the simple situation of the Matching Pennies (MP) game. Further, we illustrate the changing behaviour when we grow the strategy space by considering different versions of the Rock-Paper-Scissors game, RPS-$n$, with $n=3,5,9$, where $n$ denotes the number of strategies available to each player.

\subsubsection{Matching Pennies}\label{app:exp_spec_MP}
The experimental results for the Matching Pennies game are based on the following payoff structure:
\begin{align*}
R_1 = 
\begin{pmatrix}
  1 & -23/10 \\
 -4/10 &  1
\end{pmatrix}
& &
R_2 =
\begin{pmatrix}
 -23/10 &  1 \\
  1 & -4/10
\end{pmatrix}
\end{align*}

Nash equilibrium $x^*$ at:
\begin{align*}
x^*_1 = &
\begin{pmatrix}
14/47 & 33/47
\end{pmatrix}^T
&
x^*_2 = &
\begin{pmatrix}
33/47 & 14/47
\end{pmatrix}^T
\end{align*}

The MP game is a particularly simple case of a zero-sum game and hence provides an informative perspective on the basic characteristics of the different algorithms.
In general, we see that the location of the mutation equilibrium depends on the mutation strength $M$, while convergence is slower for lower values of $M$ creating a trade-off between these.

\paragraph{MBL-DPU and MBL-LC.}
Comparing MBL-DPU and MBL-LC, we see again that the LC-variant (figures \ref{fig:MP_MBL-LC_high_mut}, \ref{fig:MP_MBL-LC_low_mut}) approaches the mutation equilibrium more quickly than the DPU-variant (figures \ref{fig:MP_MBL-DPU_high_mut}, \ref{fig:MP_MBL-DPU_low_mut}). However, we see that the DPU-variant exhibits a much smaller variance, more precisely standard deviation, in the vicinity of the mutation equilibrium due to its slower change, with both variants roughly differing by a factor between 5 and 10 (for $M=40^{-1}$).
This illustrates the stronger effect that single larger payoffs have on the LC-variant, producing a larger variance near the mutation equilibrium.

\paragraph{FAQ-learning.}
For FAQ-learning (figures \ref{fig:MP_FAQ_high_mut}, \ref{fig:MP_FAQ_low_mut}) we see a similar behaviour as MBL-LC, however with a smaller variance near the equilibrium for weaker perturbation (figure \ref{fig:MP_FAQ_low_mut}). As with the MBL variants, FAQ exhibits slower convergence for weaker perturbation with larger variance near its (apparently asymptotically stable) equilibrium. However, we also observe that with FAQ, solutions can get trapped near the boundary (note the trapped solution in the upper left corner in figure \ref{fig:MP_FAQ_low_mut}), which we do not observe for the MBL variants and \todo{[C] Add reference to result.}{ have proved not to be the case for MBL-DPU.}

\paragraph{WoLF-PHC.}
Similar to the other algorithms, WoLF-PHC (figure \ref{app:fig:MP_WoLF-PHC}) follows spiral-like trajectories towards a region close to the Nash equilibrium. It also shows a lower variance near the (apparently asymptotically stable) equilibrium. However, WoLF-PHC employs a learning rate schedule which reduces the learning rate over time and thus reduces variance.%
\footnote{It would be possible to evaluate WoLF-PHC with a fixed learning rate or use a reduction schedule for the other algorithms. However, the former would be a deviation from the canonical formulation of WoLF-PHC while the latter would not be based on a principled approach. Hence, this heterogeneous situation is an appropriate base scenario.} 
One should note that WoLF-PHC is considerably more complicated as it relies on a reliable way to estimate action-values as well as a long-term population average. It is clear that a player would require more resources for implementing WoLF-PHC than for the other algorithms.

\begin{figure}[htb] %
    \vspace{3em}
    \begin{center}
    	\begin{subfigure}[b]{0.3\linewidth}
    	\centering
    	\includegraphics[width=\textwidth, trim={0 770bp 0 0}, clip ]{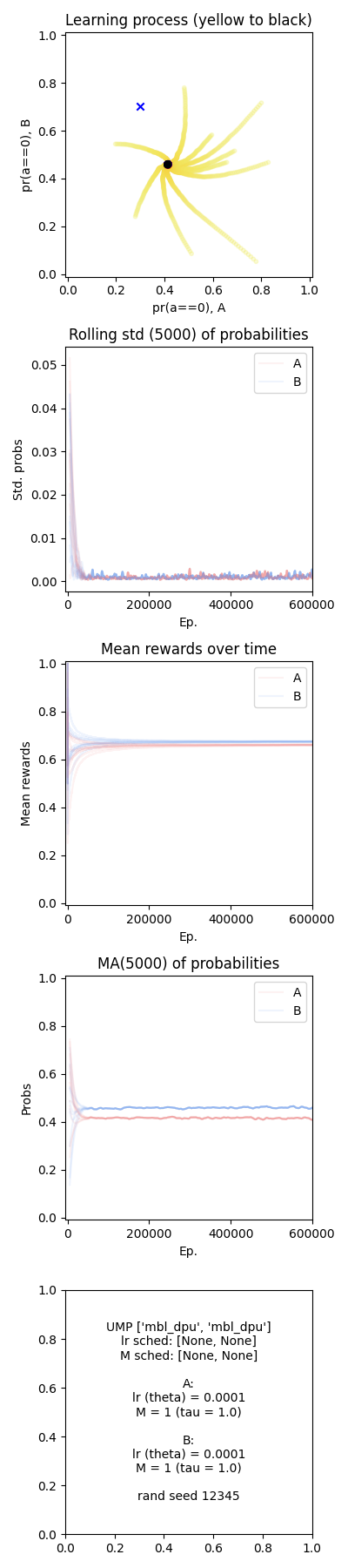}
    	\caption{$\tau = 1$, $M = 1^{-1}$}
    	\end{subfigure}
    	\hfill
    	\begin{subfigure}[b]{0.3\linewidth}
    	\centering
    	\includegraphics[width=\textwidth, trim={0 770bp 0 0}, clip ]{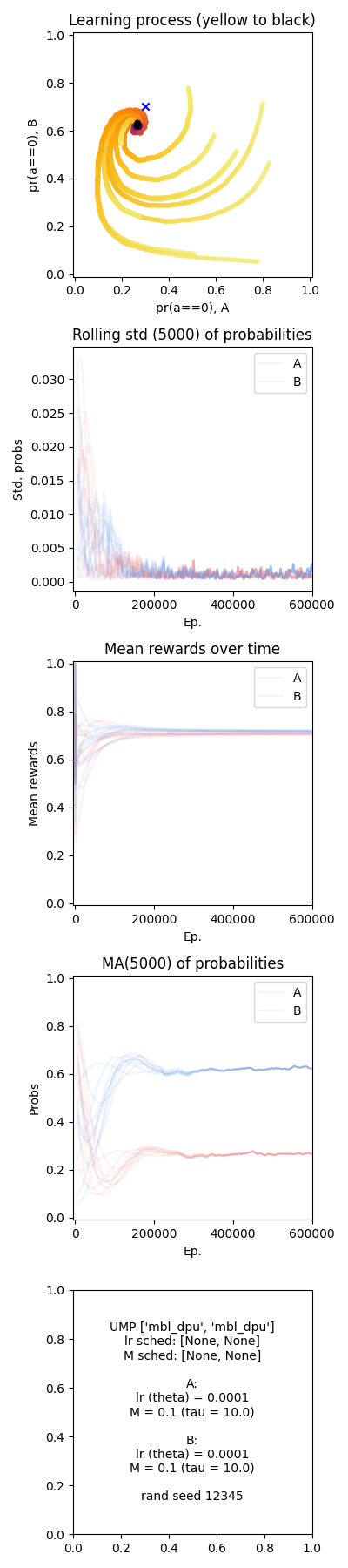}
    	\caption{$\tau = 10$, $M = 10^{-1}$}
    	\end{subfigure}
    	\hfill
    	\begin{subfigure}[b]{0.3\linewidth}
    	\centering
    	\includegraphics[width=\textwidth, trim={0 770bp 0 0}, clip ]{figures/results-solon/UMP/mbl_dpu_mbl_dpu/fixed_M_fixed_lr/viz/600000_eps_20200420_1526_3cfbf0f79404__combo.png}
        \caption{$\tau = 20$, $M = 20^{-1}$}
    	\end{subfigure}
    \end{center}
    \caption[MBL-DPU in self-play on the MP game.]{MBL-DPU in self-play on the MP game with different values for $\tau$ ($1$, $10$, $20$) or $M$ ($1$, $10^{-1}$, $20^{-1}$) equivalently; $\theta = 10^{-4}$; for 10 different initialisations.
    (See figure \ref{fig:PD_MBL-DPU_high_mut} for a detailed explanation of the graphs.)
    }
    \label{fig:MP_MBL-DPU_high_mut}
\end{figure}
\begin{figure}[ht] %
    \begin{center}
    	\begin{subfigure}[b]{0.3\linewidth}
    	\centering
    	\includegraphics[width=\textwidth, trim={0 770bp 0 0}, clip ]{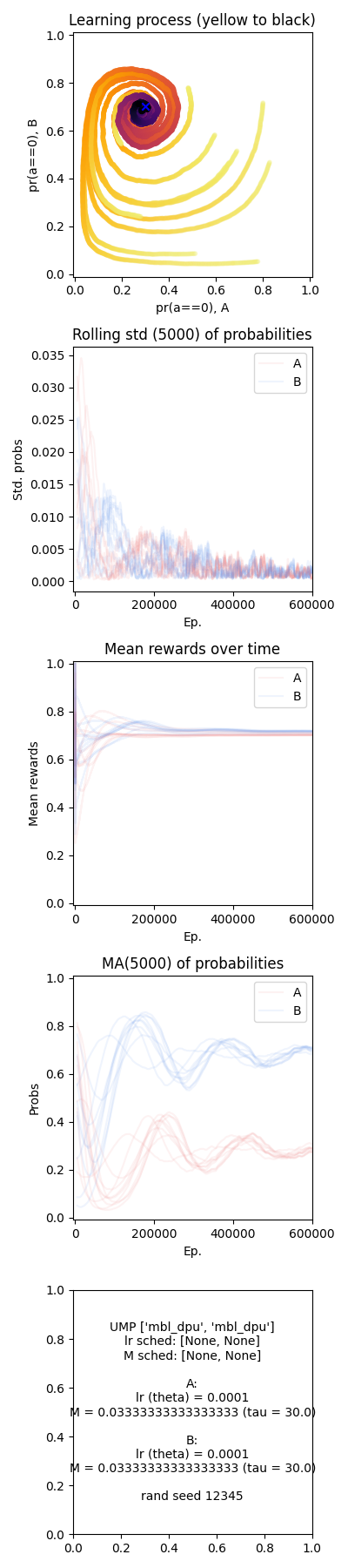}
        \caption{$\tau = 30$, $M = 30^{-1}$}
    	\end{subfigure}
    	\hfill
    	\begin{subfigure}[b]{0.3\linewidth}
    	\centering
    	\includegraphics[width=\textwidth, trim={0 770bp 0 0}, clip ]{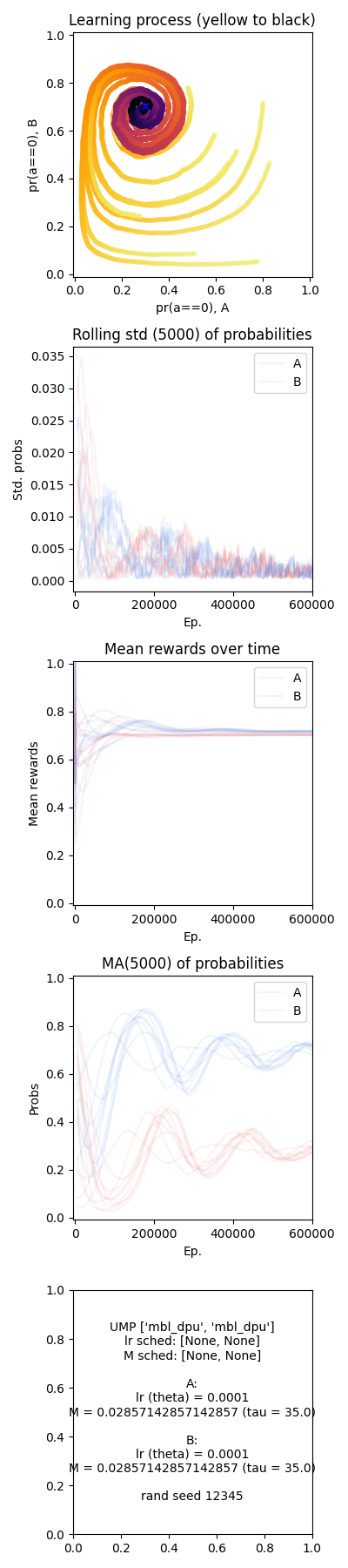}
    	\caption{$\tau = 35$, $M = 35^{-1}$}
    	\end{subfigure}
    	\hfill
    	\begin{subfigure}[b]{0.3\linewidth}
    	\centering
    	\includegraphics[width=\textwidth, trim={0 770bp 0 0}, clip ]{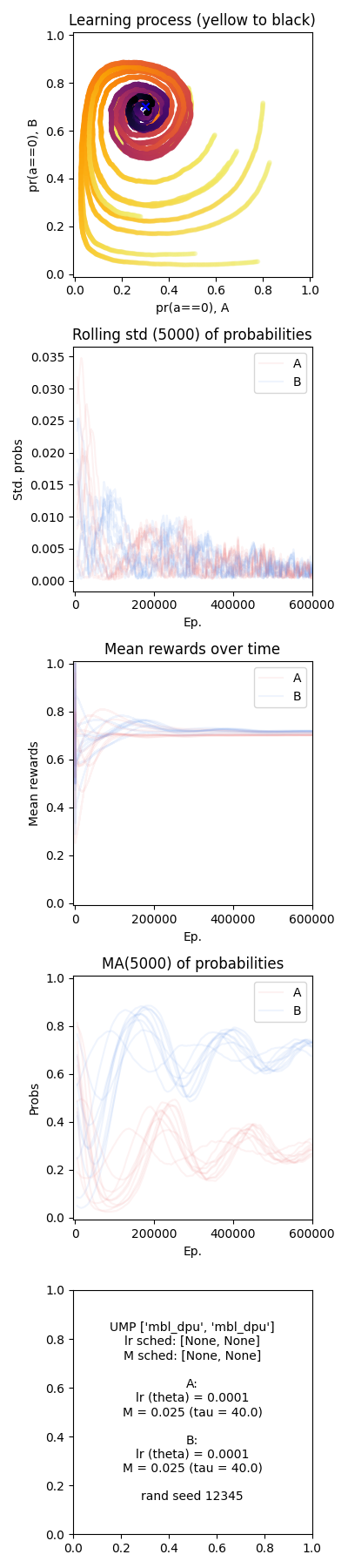}
    	\caption{$\tau = 40$, $M = 40^{-1}$}
    	\end{subfigure}
    \end{center}
    \caption[MBL-DPU in self-play on the MP game.]{MBL-DPU in self-play on the MP game with different values for $\tau$ ($30$, $35$, $40$) or $M$ ($30^{-1}$, $35^{-1}$, $40^{-1}$) equivalently; $\theta = 10^{-4}$; for 10 different initialisations.
    (See figure \ref{fig:PD_MBL-DPU_high_mut} for a detailed explanation of the graphs.)
    }
    \label{fig:MP_MBL-DPU_low_mut}
\end{figure}
\begin{figure}[ht] %
    \begin{center}
    	\begin{subfigure}[b]{0.3\linewidth}
    	\centering
    	\includegraphics[width=\textwidth, trim={0 770bp 0 0}, clip ]{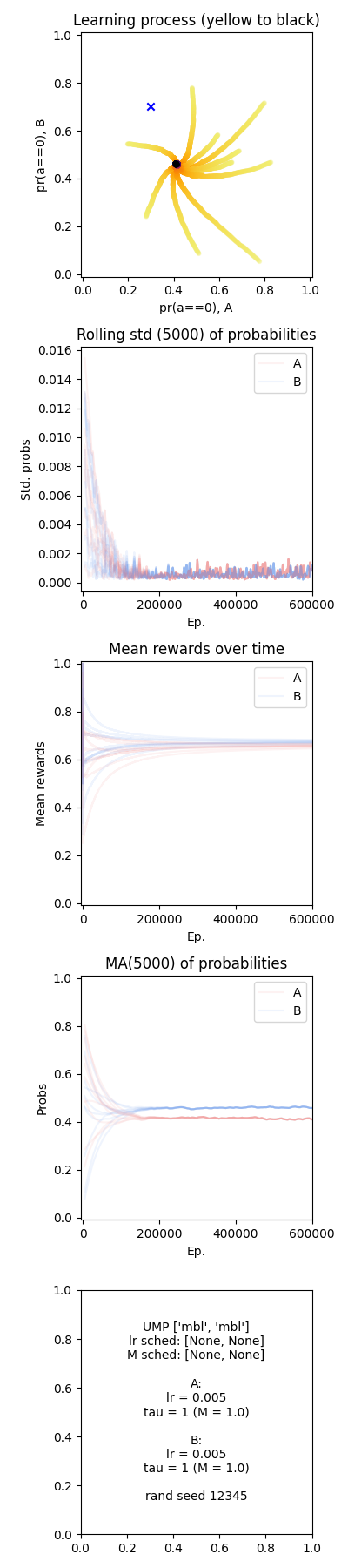}
        \caption{$\tau = 1$, $M = 1^{-1}$}
    	\end{subfigure}
    	\hfill
    	\begin{subfigure}[b]{0.3\linewidth}
    	\centering
    	\includegraphics[width=\textwidth, trim={0 770bp 0 0}, clip ]{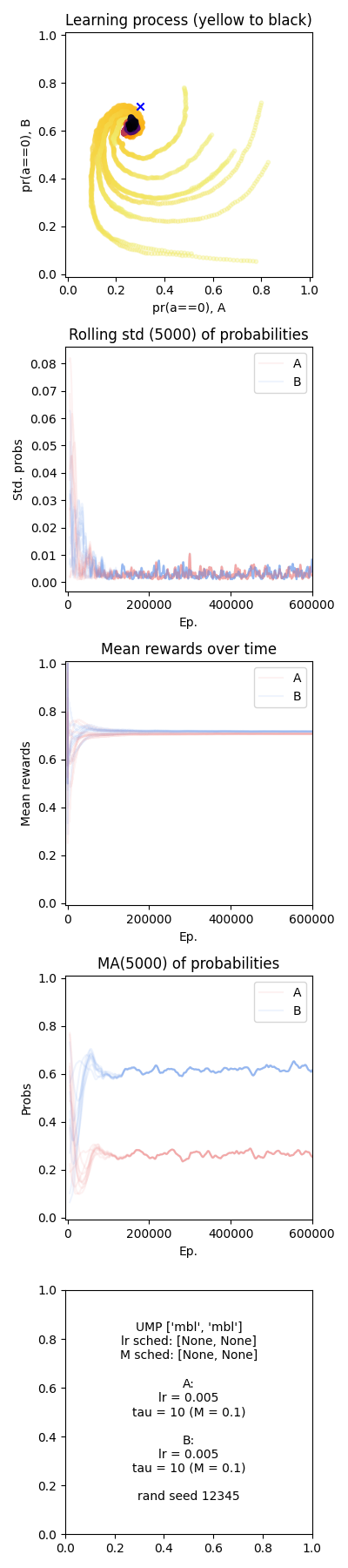}
    	\caption{$\tau = 10$, $M = 10^{-1}$}
    	\end{subfigure}
    	\hfill
    	\begin{subfigure}[b]{0.3\linewidth}
    	\centering
    	\includegraphics[width=\textwidth, trim={0 770bp 0 0}, clip ]{figures/results-solon/UMP/mbl_mbl/fixed_M_fixed_lr/viz/600000_eps_20200420_1533_4c2b89b5ad40__combo.png}
    	\caption{$\tau = 20$, $M = 20^{-1}$}
    	\end{subfigure}
    \end{center}
    \caption[MBL-LC in self-play on the MP game.]{MBL-LC in self-play on the MP game with different values for $\tau$ ($1$, $10$, $20$) or $M$ ($1$, $10^{-1}$, $20^{-1}$) equivalently; $\theta = 5 \cdot 10^{-3}$; for 10 different initialisations.
    (See figure \ref{fig:PD_MBL-DPU_high_mut} for a detailed explanation of the graphs.)
    }
    \label{fig:MP_MBL-LC_high_mut}
\end{figure}
\begin{figure}[ht] %
    \begin{center}
    	\begin{subfigure}[b]{0.3\linewidth}
    	\centering
    	\includegraphics[width=\textwidth, trim={0 770bp 0 0}, clip ]{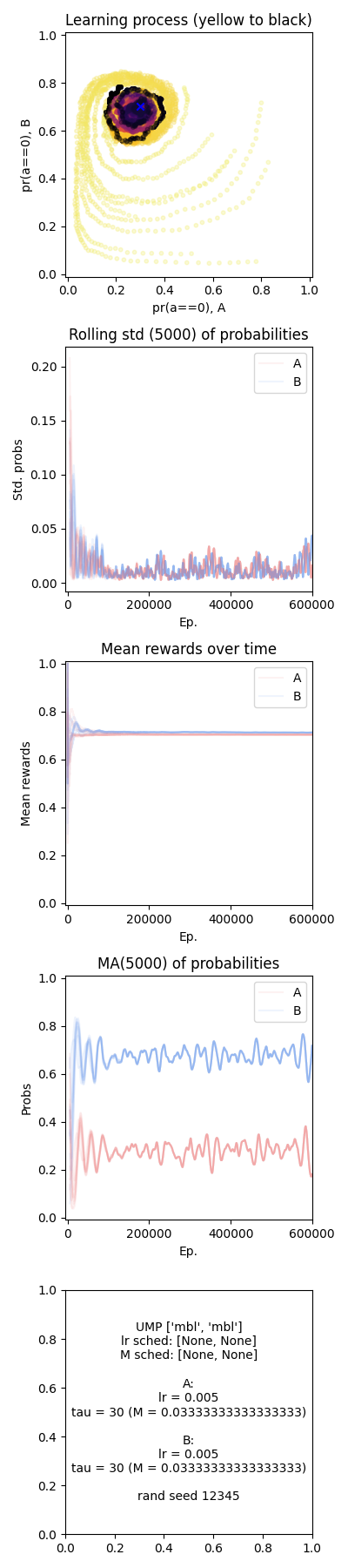}
    	\caption{$\tau = 30$, $M = 30^{-1}$}
    	\end{subfigure}
    	\hfill
    	\begin{subfigure}[b]{0.3\linewidth}
    	\centering
    	\includegraphics[width=\textwidth, trim={0 770bp 0 0}, clip ]{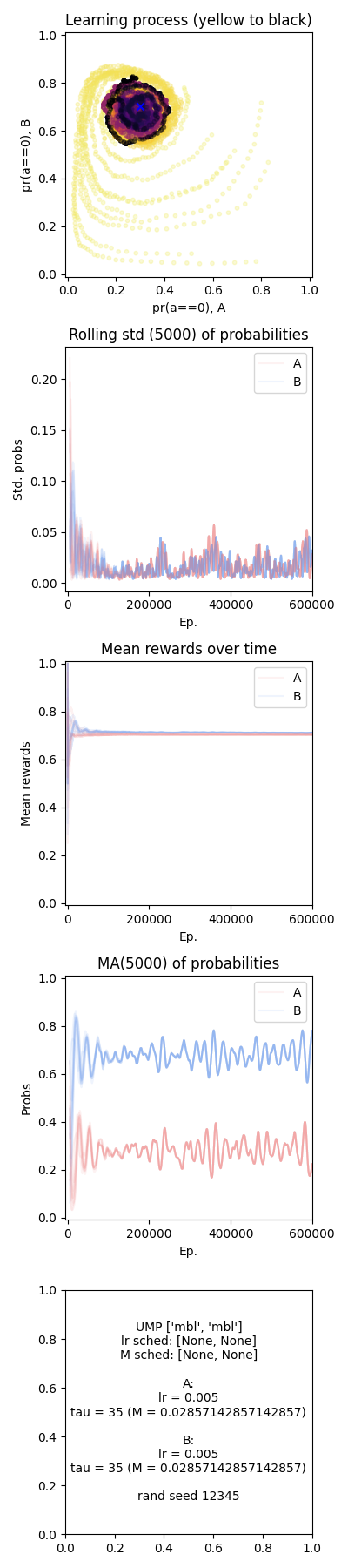}
        \caption{$\tau = 35$, $M = 35^{-1}$}
    	\end{subfigure}
    	\hfill
    	\begin{subfigure}[b]{0.3\linewidth}
    	\centering
    	\includegraphics[width=\textwidth, trim={0 770bp 0 0}, clip ]{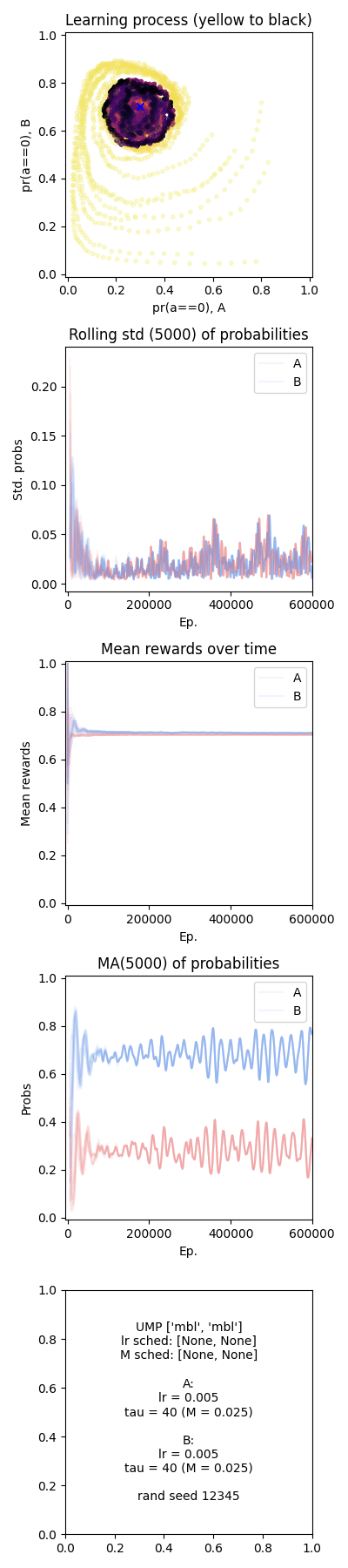}
    	\caption{$\tau = 40$, $M = 40^{-1}$}
    	\end{subfigure}
    \end{center}
    \caption[MBL-LC in self-play on the MP game.]{MBL-LC in self-play on the MP game with different values for $\tau$ ($30$, $35$, $40$) or $M$ ($30^{-1}$, $35^{-1}$, $40^{-1}$) equivalently; $\theta = 5 \cdot 10^{-3}$; for 10 different initialisations.
    (See figure \ref{fig:PD_MBL-DPU_high_mut} for a detailed explanation of the graphs.)
    }
    \label{fig:MP_MBL-LC_low_mut}
\end{figure}
\begin{figure}[ht] %
    \begin{center}
    	\begin{subfigure}[b]{0.3\linewidth}
    	\centering
    	\includegraphics[width=\textwidth, trim={0 770bp 0 0}, clip ]{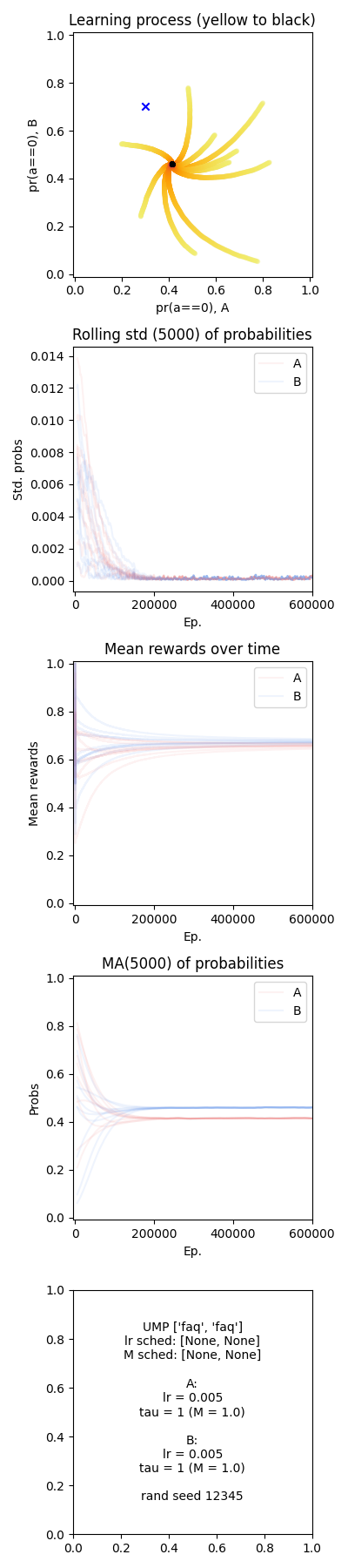}
    	\caption{$\tau = 1$, $M = 1^{-1}$}
    	\end{subfigure}
    	\hfill
    	\begin{subfigure}[b]{0.3\linewidth}
    	\centering
    	\includegraphics[width=\textwidth, trim={0 770bp 0 0}, clip ]{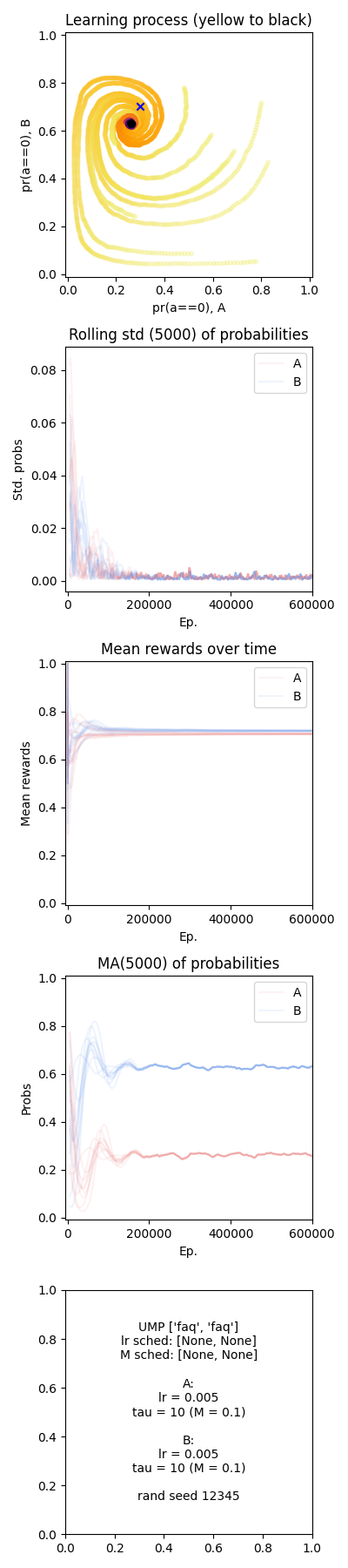}
        \caption{$\tau = 10$, $M = 10^{-1}$}
    	\end{subfigure}
    	\hfill
    	\begin{subfigure}[b]{0.3\linewidth}
    	\centering
    	\includegraphics[width=\textwidth, trim={0 770bp 0 0}, clip ]{figures/results-solon/UMP/faq_faq/fixed_M_fixed_lr/viz/600000_eps_20200420_1621_f232beab088b__combo.png}
    	\caption{$\tau = 20$, $M = 20^{-1}$}
    	\end{subfigure}
    \end{center}
    \caption[FAQ in self-play on the MP game.]{FAQ in self-play on the MP game with different values for $\tau$ ($1$, $10$, $20$) or $M$ ($1$, $10^{-1}$, $20^{-1}$) equivalently; $\theta = 5 \cdot 10^{-3}$; for 10 different initialisations.
    (See figure \ref{fig:PD_MBL-DPU_high_mut} for a detailed explanation of the graphs.)
    }
    \label{fig:MP_FAQ_high_mut}
\end{figure}
\begin{figure}[ht] %
    \begin{center}
    	\begin{subfigure}[b]{0.3\linewidth}
    	\centering
    	\includegraphics[width=\textwidth, trim={0 770bp 0 0}, clip ]{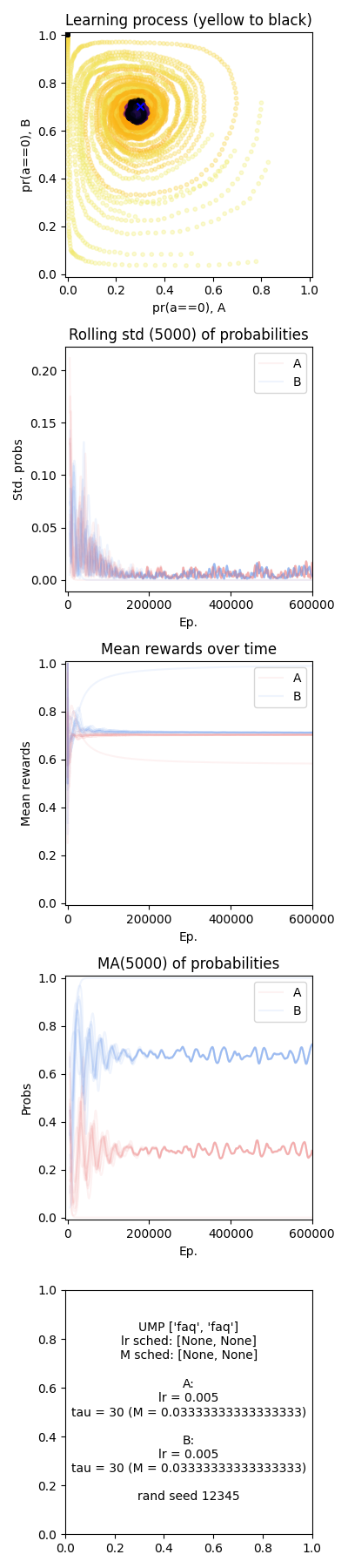}
    	\caption{$\tau = 30$, $M = 30^{-1}$}
    	\end{subfigure}
    	\hfill
    	\begin{subfigure}[b]{0.3\linewidth}
    	\centering
    	\includegraphics[width=\textwidth, trim={0 770bp 0 0}, clip ]{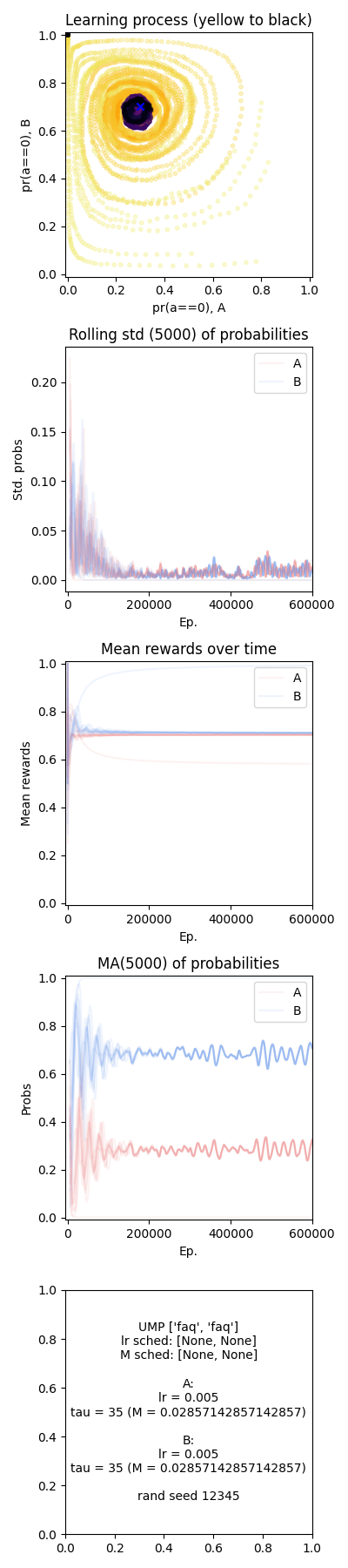}
        \caption{$\tau = 35$, $M = 35^{-1}$}
    	\end{subfigure}
    	\hfill
    	\begin{subfigure}[b]{0.3\linewidth}
    	\centering
    	\includegraphics[width=\textwidth, trim={0 770bp 0 0}, clip ]{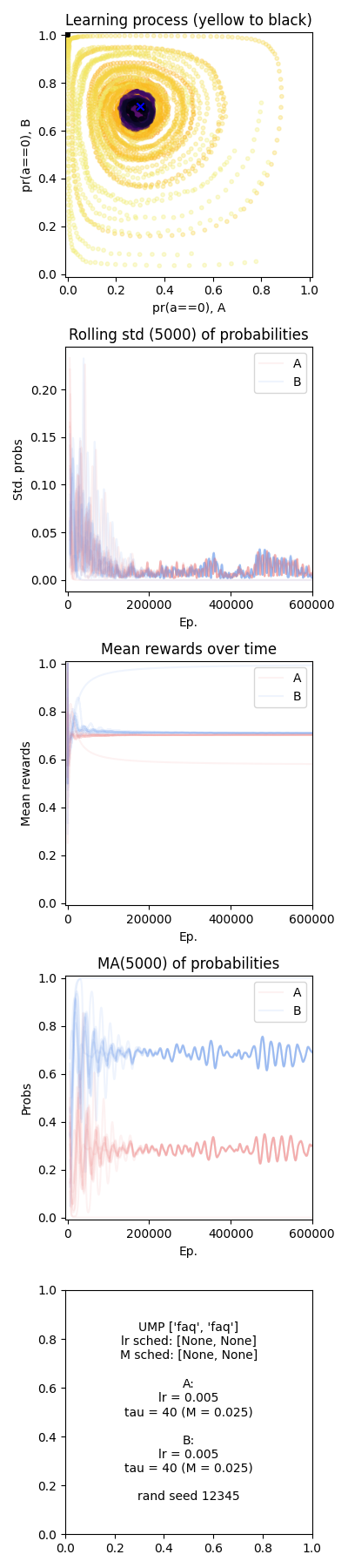}
    	\caption{$\tau = 40$, $M = 40^{-1}$}
    	\end{subfigure}
    \end{center}
    \caption[FAQ in self-play on the MP game.]{FAQ in self-play on the MP game with different values for $\tau$ ($30$, $35$, $40$) or $M$ ($30^{-1}$, $35^{-1}$, $40^{-1}$) equivalently; $\theta = 5 \cdot 10^{-3}$; for 10 different initialisations.
    (See figure \ref{fig:PD_MBL-DPU_high_mut} for a detailed explanation of the graphs.)
    }
    \label{fig:MP_FAQ_low_mut}
\end{figure}
\begin{figure}[ht] %
    \begin{center}
    	\begin{subfigure}[b]{0.3\linewidth}
    	\centering
    	\includegraphics[width=\textwidth, trim={0 770bp 0 0}, clip ]{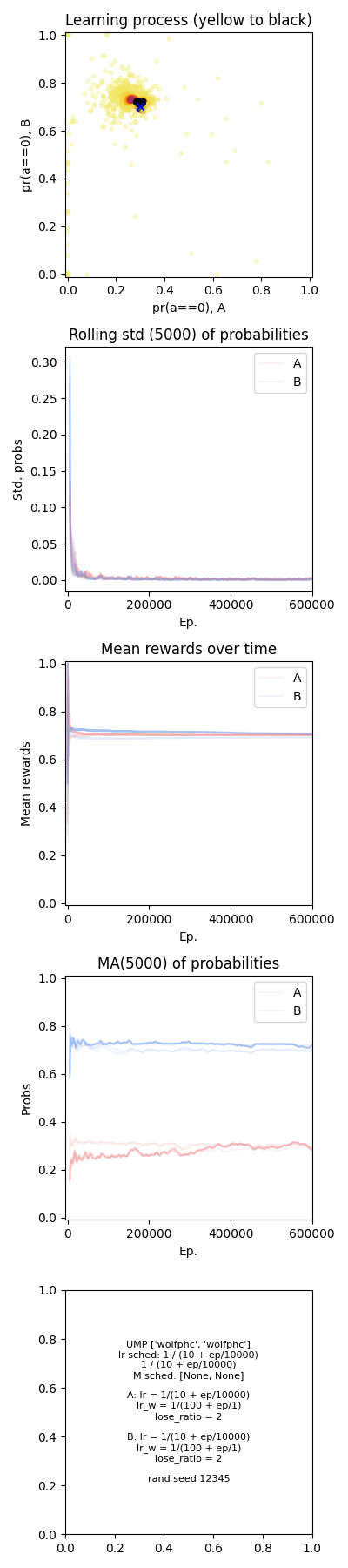}
    	\caption{Initial learning rate $10^{-1}$ for $Q$. Win learning rate $10^{-2}$.}
    	\end{subfigure}
    	\hfill
    	\begin{subfigure}[b]{0.3\linewidth}
    	\centering
    	\includegraphics[width=\textwidth, trim={0 770bp 0 0}, clip ]{figures/results-solon/UMP/wolfphc_wolfphc/fixed_M_reducing_lr/viz/600000_eps_20201109_1502_0a3084819347__combo.png}
        \caption{Initial learning rate $10^{-1}$ for $Q$. Win learning rate $1/2 \cdot 10^{-4}$.}
    	\end{subfigure}
    	\hfill
    	\begin{subfigure}[b]{0.3\linewidth}
    	\centering
    	\includegraphics[width=\textwidth, trim={0 770bp 0 0}, clip ]{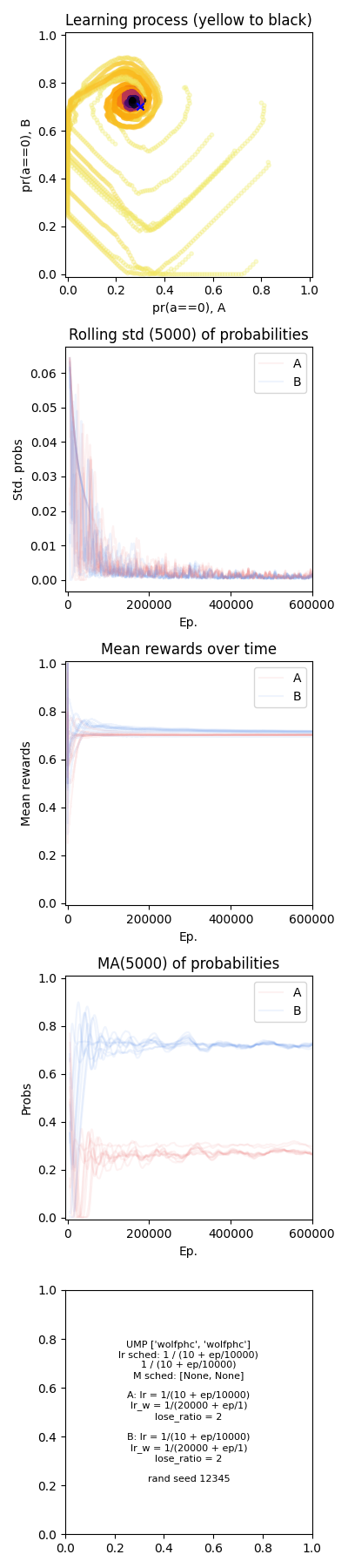}
    	\caption{Initial learning rate $10^{-2}$ for $Q$. Win learning rate $1/2 \cdot 10^{-4}$.}
    	\end{subfigure}
    \end{center}
    \caption[WoLF-PHC in self-play on the MP game.]{WoLF-PHC in self-play on the MP game with different learning schedules; for 10 different initialisations.
    (See figure \ref{fig:PD_MBL-DPU_high_mut} for a detailed explanation of the graphs.)
    }
    \label{app:fig:MP_WoLF-PHC}
\end{figure}

\clearpage

\subsubsection{Zero-sum games with larger action spaces}\label{app:exp_spec_RPS}

The experimental results for the RPS-$n$ games are based on the following payoff structures.

\paragraph{RPS-3.}%
\begin{align*}
R_1 = 
\begin{pmatrix}
 0  & -2  &  3 \\
 2  &  0  & -2 \\
-1  &  2  &  0
\end{pmatrix}
& &
R_2 = -R_1
\end{align*}

Nash equilibrium $x^*$ at:
\begin{align*}
x^*_1 = &
\begin{pmatrix}
2/7 & 11/35 & 2/5
\end{pmatrix}^T
&
x^*_2 = &
\begin{pmatrix}
2/5 & 11/35 & 2/7
\end{pmatrix}^T
\end{align*}

\paragraph{RPS-5.}%
\begin{align*}
R_1 = 
\begin{pmatrix}
 0 &  4 & -2 &  2 & -2 \\
-4 &  0 &  2 & -1 &  1 \\
 2 & -4 &  0 &  4 & -1 \\
-4 &  1 & -4 &  0 &  2 \\
 2 & -1 &  1 & -2 &  0
\end{pmatrix}
& &
R_2 = -R_1
\end{align*}

Nash equilibrium $x^*$ at:
\begin{align*}
x^*_1 = &
\begin{pmatrix}
11/61 & 510/2989 & 8/61 & 50/427 & 1198/2989
\end{pmatrix}^T \\
x^*_2 = &
\begin{pmatrix}
1/7 & 68/427 & 6/49 & 502/2989 & 174/427
\end{pmatrix}^T
\end{align*}

\paragraph{RPS-9.}%
\begin{align*}
R_1 = 
\begin{pmatrix}
 0 &  2 &  1 &  3 &  1 & -1 & -1 & -2 & -1  \\
-1 &  0 &  1 &  3 &  1 &  1 & -1 & -2 & -1  \\
-1 & -2 &  0 &  3 &  1 &  1 &  1 & -2 & -1  \\
-2 & -4 & -2 &  0 &  2 &  2 &  2 &  4 & -2  \\
-1 & -2 & -1 & -3 &  0 &  1 &  1 &  2 &  1  \\
 1 & -2 & -1 & -3 & -1 &  0 &  1 &  2 &  1  \\
 2 &  4 & -2 & -6 & -2 & -2 &  0 &  4 &  2  \\
 1 &  2 &  1 & -3 & -1 & -1 & -1 &  0 &  1  \\
 1 &  2 &  1 &  3 & -1 & -1 & -1 & -2 &  0
\end{pmatrix}
& &
R_2 = -R_1
\end{align*}

Nash equilibrium $x^*$ at:
\begin{align*}
x^*_1 = &
\begin{pmatrix}
1/8 & 1/8 & 1/8 & 1/16 & 1/8 & 1/8 & 1/16 & 1/8 & 1/8
\end{pmatrix}^T \\
x^*_2 = &
\begin{pmatrix}
3/22 & 3/44 & 3/22 & 1/22 & 3/22 & 3/22 & 3/22 & 3/44 & 3/22
\end{pmatrix}^T
\end{align*}

While MP is an informative illustration of the different behaviours, MP reduces to a planar dynamical system, which does not allow many complex behaviours, as exemplified by the Poincaré-Bendixson theorem, e.g., \cite[theorem 7.16]{teschl_ordinary_2012} holding for planar systems.
Hence, higher-dimensional zero-sum games allow a further understanding of the differences between the algorithms and shed light on the effect of larger state spaces while preserving the neutral stability of interior equilibria.
We consider here the Rock-Paper-Scissors game of different sizes (3, 5 and 9 actions).

\paragraph{MBL-DPU and MBL-LC.}
In RPS-3, MBL-DPU (figures \ref{fig:RPS3_MBL-DPU_high_mut}, \ref{fig:RPS3_MBL-DPU_low_mut}) shows a similar behaviour to MP with a marked dependence of the behaviour of the variance on the value of $M$.
In contrast, MBL-LC (figures \ref{fig:RPS3_MBL-LC_high_mut}, \ref{fig:RPS3_MBL-LC_low_mut}) shows a much quicker convergence, with the variance dropping after similar numbers of episodes (around $10^5$) for all values of $M$. As with MBL-DPU, the residual variance increases with weaker mutation. This is in accordance with the neutral stability of the Nash equilibrium, allowing for larger fluctuations.

In RPS-5, both MBL variants (figures \ref{fig:RPS5_MBL-DPU_high_mut}, \ref{fig:RPS5_MBL-DPU_low_mut} for MBL-DPU and figures \ref{fig:RPS5_MBL-LC_high_mut}, \ref{fig:RPS5_MBL-LC_low_mut} for MBL-LC) show behaviours similar to their RPS-3 counterparts.
In RPS-9, MBL-DPU (figures \ref{fig:RPS9_MBL-DPU_high_mut}, \ref{fig:RPS9_MBL-DPU_low_mut}) again shows similar behaviour, with slower convergence compared to its RPS-3 and RPS-5 counterparts.
Interestingly, MBL-LC (figures \ref{fig:RPS9_MBL-LC_high_mut}, \ref{fig:RPS9_MBL-LC_low_mut}) seems to have two distinct regions to which trajectories evolve, suggesting a potentially stronger sensitivity to the choice of $\theta$.

\paragraph{FAQ-learning.}
Like for MP, we see a quicker convergence for FAQ in RPS-3 (figures \ref{fig:RPS3_FAQ_high_mut}, \ref{fig:RPS3_FAQ_low_mut}) compared to the MBL variants, but with trajectories similar to those of MBL-LC when considering low values of $M$, in which case the replicator dynamics makes a stronger contribution to the trajectories.
Similar to MBL-LC, but already in RPS-5, FAQ shows two distinct regions to which trajectories evolve when perturbation is weak (figures \ref{fig:RPS5_FAQ_high_mut}, \ref{fig:RPS5_FAQ_low_mut}), whereas the former does not show such a split for RPS-5.
In RPS-9, FAQ shows such a split for stronger perturbation levels already and shows even three distinct such regions for weaker perturbation (figures \ref{fig:RPS9_FAQ_high_mut}, \ref{fig:RPS9_FAQ_low_mut}).

\paragraph{WoLF-PHC.}
For WoLF-PHC, we see a still quicker convergence in RPS-3 (figure \ref{app:fig:RPS3_WoLF-PHC}) than for the other algorithms, similar to the MP case. However, the behaviour is much less clear in RPS-5 (figure \ref{app:fig:RPS5_WoLF-PHC}). Here, trajectories do not consistently approach a specific region. It is possible that the reduction schedules for the learning rates, which force each trajectory to converge, lead to trajectories stalling prematurely.
This becomes even more pronounced in RPS-9 (figure \ref{fig:app_RPS9_WoLF-PHC}), where WoLF-PHC seems to initially move away from the Nash equilibrium and to get stuck along the boundaries of $\mathcal{D}$.

\begin{figure}[hbt] %
    \vspace{5\baselineskip}
	\hfill
   	\begin{subfigure}[b]{0.3\linewidth}
   	\centering
   	\includegraphics[width=\textwidth, trim={0 770bp 0 0}, clip ]{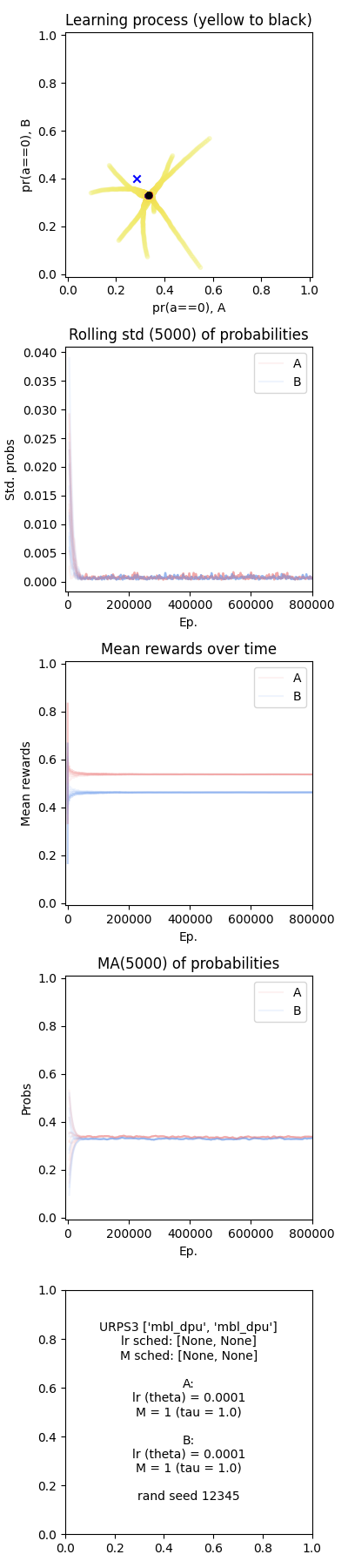}
   	\caption{$\tau = 1$, $M = 1^{-1}$}
   	\end{subfigure}
   	\hfill
   	\begin{subfigure}[b]{0.3\linewidth}
   	\centering
   	\includegraphics[width=\textwidth, trim={0 770bp 0 0}, clip ]{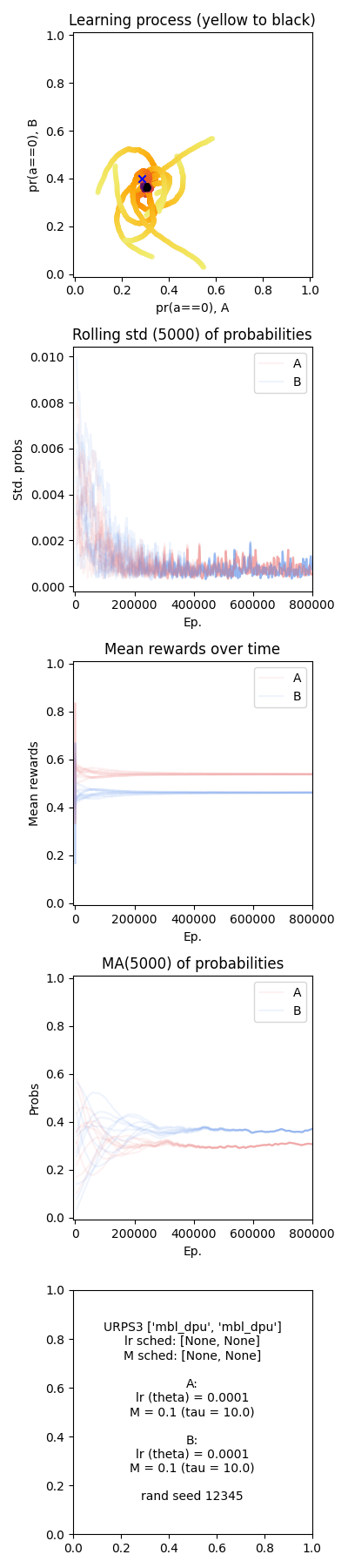}
   	\caption{$\tau = 10$, $M = 10^{-1}$}
   	\end{subfigure}
   	\hfill
   	\begin{subfigure}[b]{0.3\linewidth}
   	\centering
   	\includegraphics[width=\textwidth, trim={0 770bp 0 0}, clip ]{figures/results-solon/URPS3/mbl_dpu_mbl_dpu/fixed_M_fixed_lr/viz/800000_eps_20200420_1746_bfdbb7dcc89f__combo.png}
    \caption{$\tau = 20$, $M = 20^{-1}$}
   	\end{subfigure}
    \hfill
    \caption[MBL-DPU in self-play on the RPS-3 game.]{MBL-DPU in self-play on the RPS-3 game with different values for $\tau$ ($1$, $10$, $20$) or $M$ ($1$, $10^{-1}$, $20^{-1}$) equivalently; $\theta = 10^{-4}$; for 10 different initialisations.
    (See figure \ref{fig:PD_MBL-DPU_high_mut} for a detailed explanation of the graphs.)
    }
    \label{fig:RPS3_MBL-DPU_high_mut}
\end{figure}
\begin{figure}[h] %
    \hfill
   	\begin{subfigure}[b]{0.3\linewidth}
   	\centering
   	\includegraphics[width=\textwidth, trim={0 770bp 0 0}, clip ]{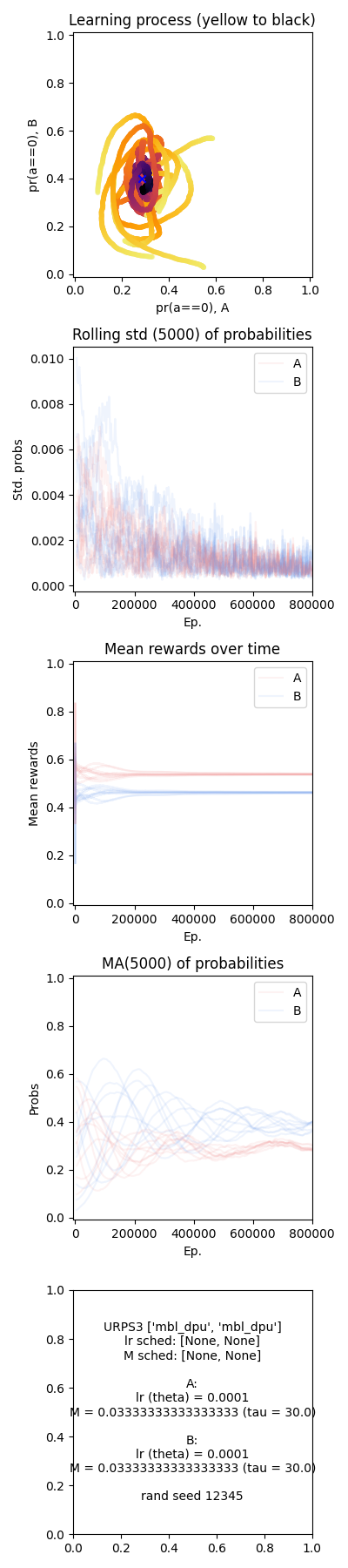}
    \caption{$\tau = 30$, $M = 30^{-1}$}
   	\end{subfigure}
   	\hfill
   	\begin{subfigure}[b]{0.3\linewidth}
   	\centering
   	\includegraphics[width=\textwidth, trim={0 770bp 0 0}, clip ]{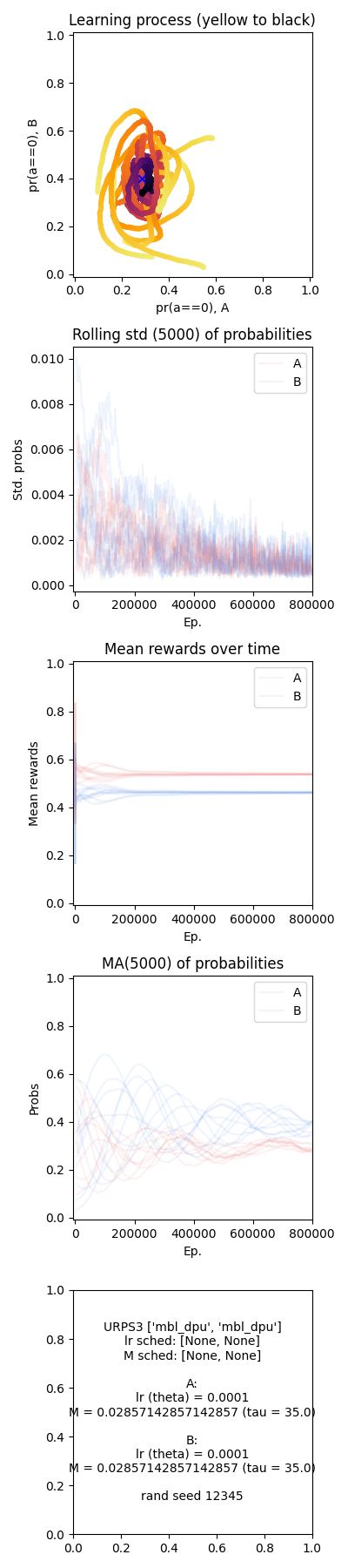}
   	\caption{$\tau = 35$, $M = 35^{-1}$}
   	\end{subfigure}
   	\hfill
   	\begin{subfigure}[b]{0.3\linewidth}
   	\centering
   	\includegraphics[width=\textwidth, trim={0 770bp 0 0}, clip ]{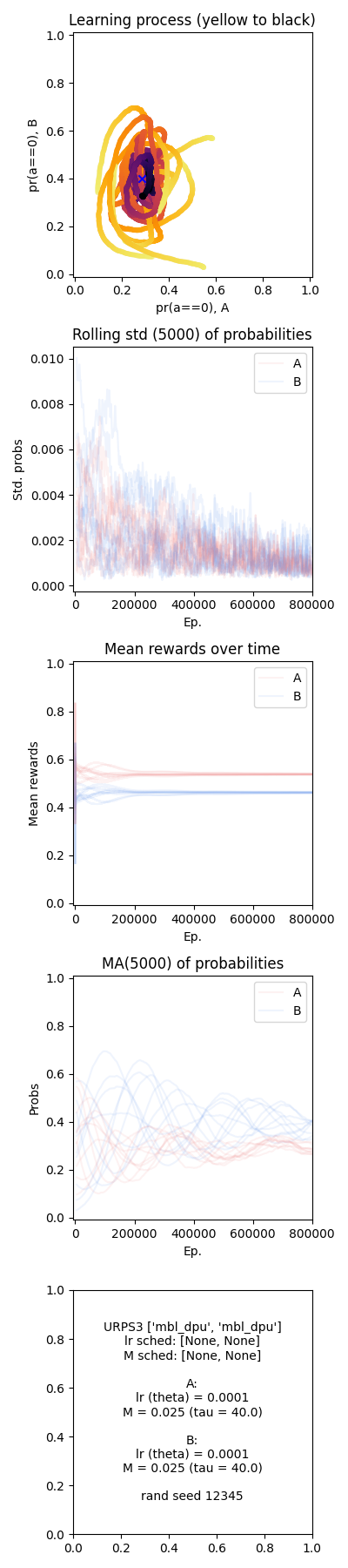}
   	\caption{$\tau = 40$, $M = 40^{-1}$}
   	\end{subfigure}
    \hfill
    \caption[MBL-DPU in self-play on the RPS-3 game.]{MBL-DPU in self-play on the RPS-3 game with different values for $\tau$ ($30$, $35$, $40$) or $M$ ($30^{-1}$, $35^{-1}$, $40^{-1}$) equivalently; $\theta = 10^{-4}$; for 10 different initialisations.
    (See figure \ref{fig:PD_MBL-DPU_high_mut} for a detailed explanation of the graphs.)
    }
    \label{fig:RPS3_MBL-DPU_low_mut}    
\end{figure}
\begin{figure}[h] %
    \begin{center}
    	\begin{subfigure}[b]{0.3\linewidth}
    	\centering
    	\includegraphics[width=\textwidth, trim={0 770bp 0 0}, clip ]{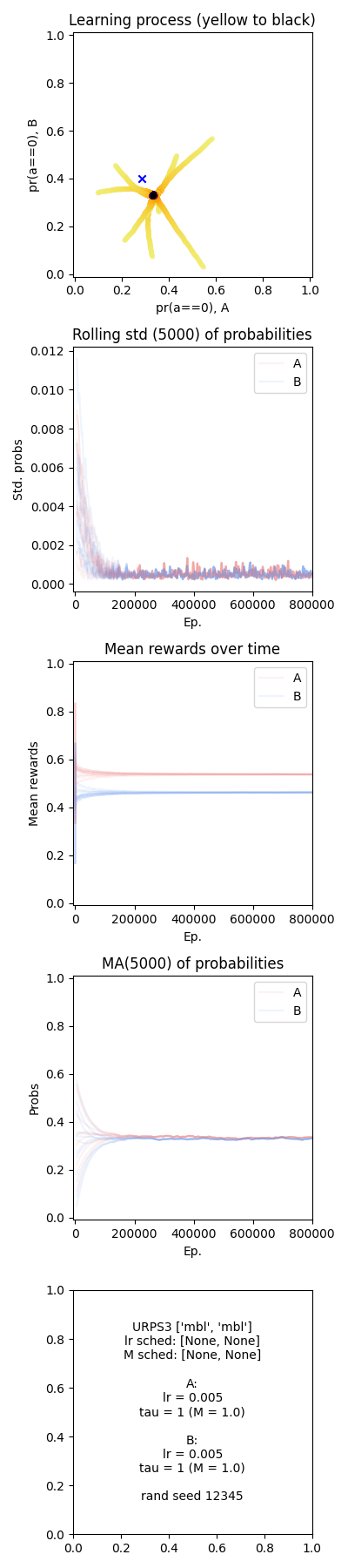}
        \caption{$\tau = 1$, $M = 1^{-1}$}
    	\end{subfigure}
    	\hfill
    	\begin{subfigure}[b]{0.3\linewidth}
    	\centering
    	\includegraphics[width=\textwidth, trim={0 770bp 0 0}, clip ]{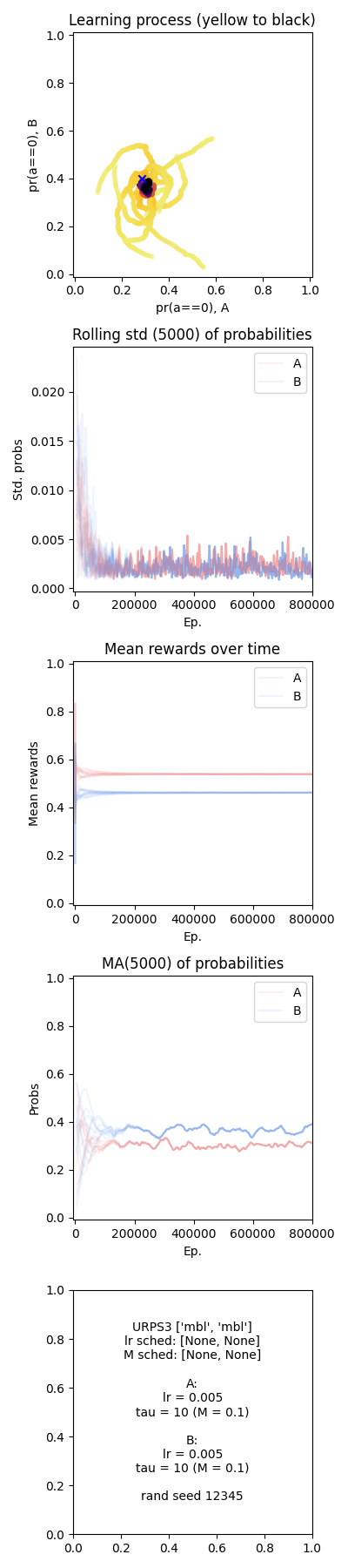}
    	\caption{$\tau = 10$, $M = 10^{-1}$}
    	\end{subfigure}
    	\hfill
    	\begin{subfigure}[b]{0.3\linewidth}
    	\centering
    	\includegraphics[width=\textwidth, trim={0 770bp 0 0}, clip ]{figures/results-solon/URPS3/mbl_mbl/fixed_M_fixed_lr/viz/800000_eps_20200420_1755_b8ffa87dcbe3__combo.png}
    	\caption{$\tau = 20$, $M = 20^{-1}$}
    	\end{subfigure}
    \end{center}
    \caption[MBL-LC in self-play on the RPS-3 game.]{MBL-LC in self-play on the RPS-3 game with different values for $\tau$ ($1$, $10$, $20$) or $M$ ($1$, $10^{-1}$, $20^{-1}$) equivalently; $\theta = 5 \cdot 10^{-3}$; for 10 different initialisations.
    (See figure \ref{fig:PD_MBL-DPU_high_mut} for a detailed explanation of the graphs.)
    }
    \label{fig:RPS3_MBL-LC_high_mut}
\end{figure}
\begin{figure}[h] %
    \begin{center}
    	\begin{subfigure}[b]{0.3\linewidth}
    	\centering
    	\includegraphics[width=\textwidth, trim={0 770bp 0 0}, clip ]{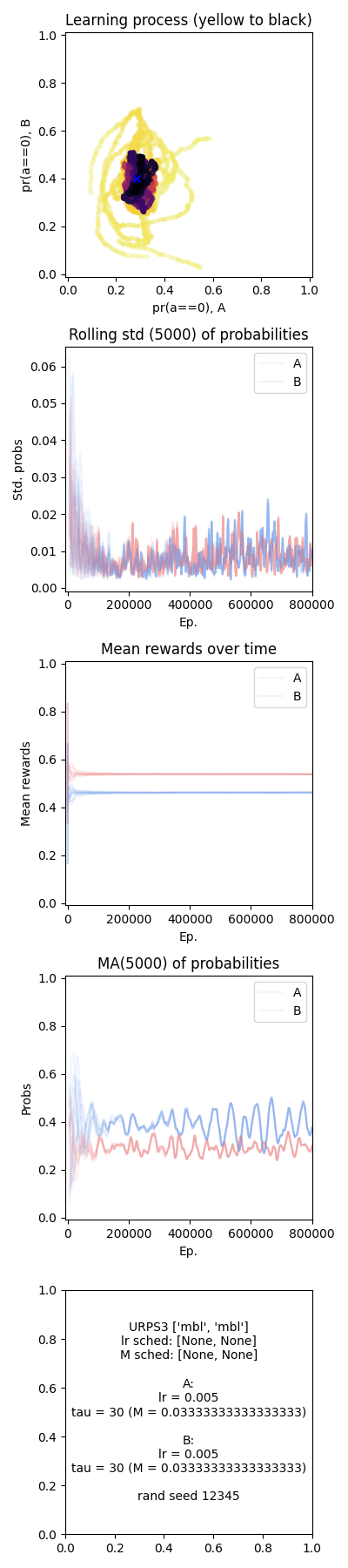}
    	\caption{$\tau = 30$, $M = 30^{-1}$}
    	\end{subfigure}
    	\hfill
    	\begin{subfigure}[b]{0.3\linewidth}
    	\centering
    	\includegraphics[width=\textwidth, trim={0 770bp 0 0}, clip ]{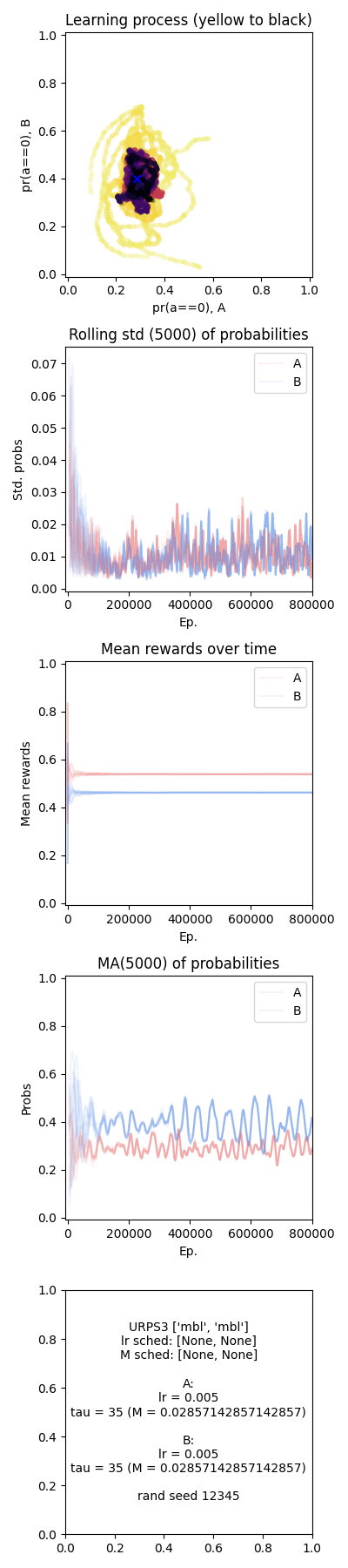}
        \caption{$\tau = 35$, $M = 35^{-1}$}
    	\end{subfigure}
    	\hfill
    	\begin{subfigure}[b]{0.3\linewidth}
    	\centering
    	\includegraphics[width=\textwidth, trim={0 770bp 0 0}, clip ]{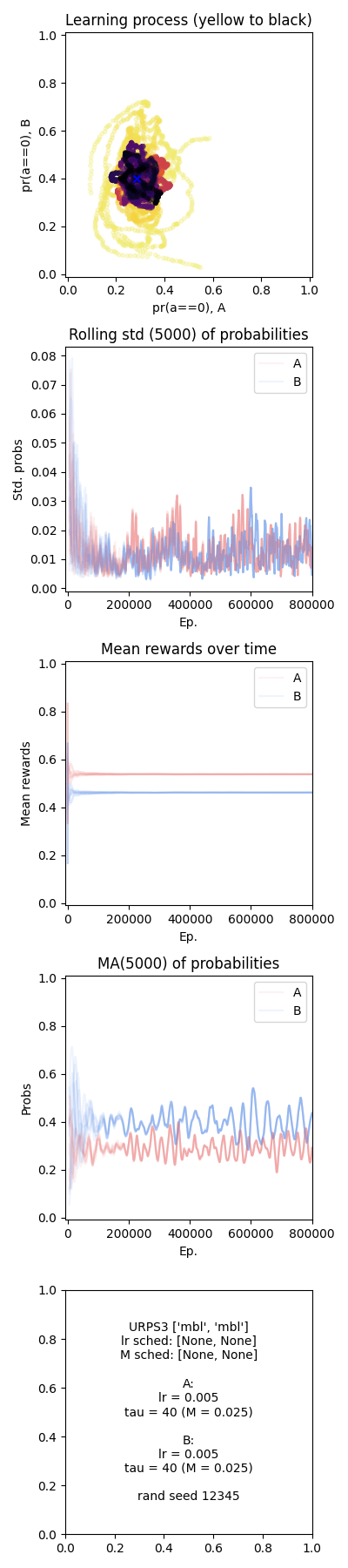}
    	\caption{$\tau = 40$, $M = 40^{-1}$}
    	\end{subfigure}
    \end{center}
    \caption[MBL-LC in self-play on the RPS-3 game.]{MBL-LC in self-play on the RPS-3 game with different values for $\tau$ ($30$, $35$, $40$) or $M$ ($30^{-1}$, $35^{-1}$, $40^{-1}$) equivalently; $\theta = 5 \cdot 10^{-3}$; for 10 different initialisations.
    (See figure \ref{fig:PD_MBL-DPU_high_mut} for a detailed explanation of the graphs.)
    }
    \label{fig:RPS3_MBL-LC_low_mut}
\end{figure}
\begin{figure}[h] %
    \begin{center}
    	\begin{subfigure}[b]{0.3\linewidth}
    	\centering
    	\includegraphics[width=\textwidth, trim={0 770bp 0 0}, clip ]{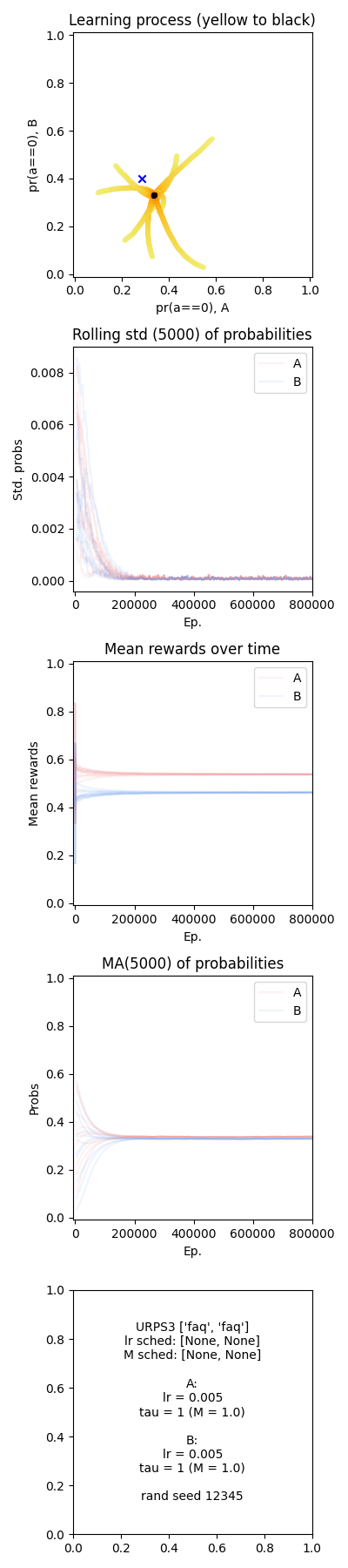}
    	\caption{$\tau = 1$, $M = 1^{-1}$}
    	\end{subfigure}
    	\hfill
    	\begin{subfigure}[b]{0.3\linewidth}
    	\centering
    	\includegraphics[width=\textwidth, trim={0 770bp 0 0}, clip ]{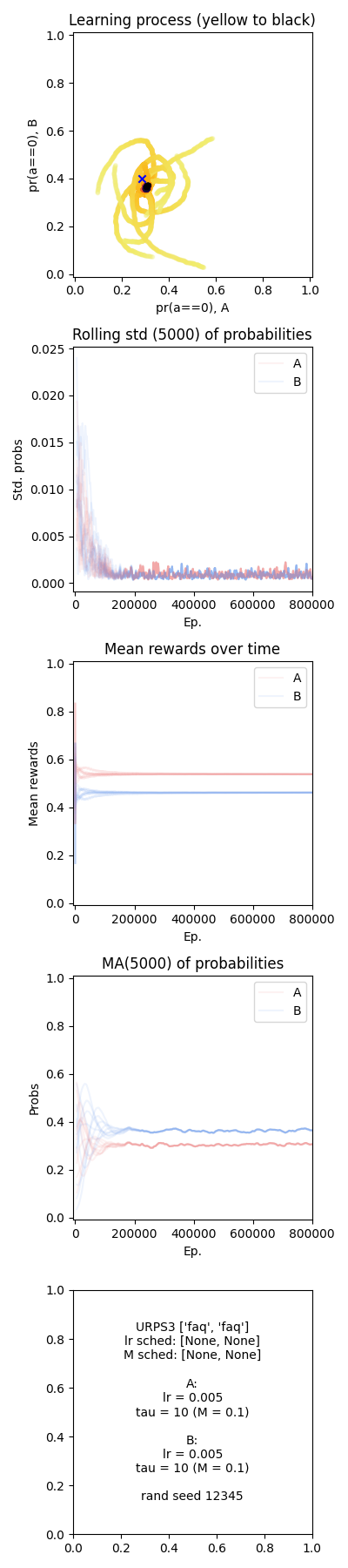}
        \caption{$\tau = 10$, $M = 10^{-1}$}
    	\end{subfigure}
    	\hfill
    	\begin{subfigure}[b]{0.3\linewidth}
    	\centering
    	\includegraphics[width=\textwidth, trim={0 770bp 0 0}, clip ]{figures/results-solon/URPS3/faq_faq/fixed_M_fixed_lr/viz/800000_eps_20200420_1830_beda32fb5550__combo.png}
    	\caption{$\tau = 20$, $M = 20^{-1}$}
    	\end{subfigure}
    \end{center}
    \caption[FAQ in self-play on the RPS-3 game.]{FAQ in self-play on the RPS-3 game with different values for $\tau$ ($1$, $10$, $20$) or $M$ ($1$, $10^{-1}$, $20^{-1}$) equivalently; $\theta = 5 \cdot 10^{-3}$; for 10 different initialisations.
    (See figure \ref{fig:PD_MBL-DPU_high_mut} for a detailed explanation of the graphs.)
    }
    \label{fig:RPS3_FAQ_high_mut}
\end{figure}
\begin{figure}[h] %
    \begin{center}
    	\begin{subfigure}[b]{0.3\linewidth}
    	\centering
    	\includegraphics[width=\textwidth, trim={0 770bp 0 0}, clip ]{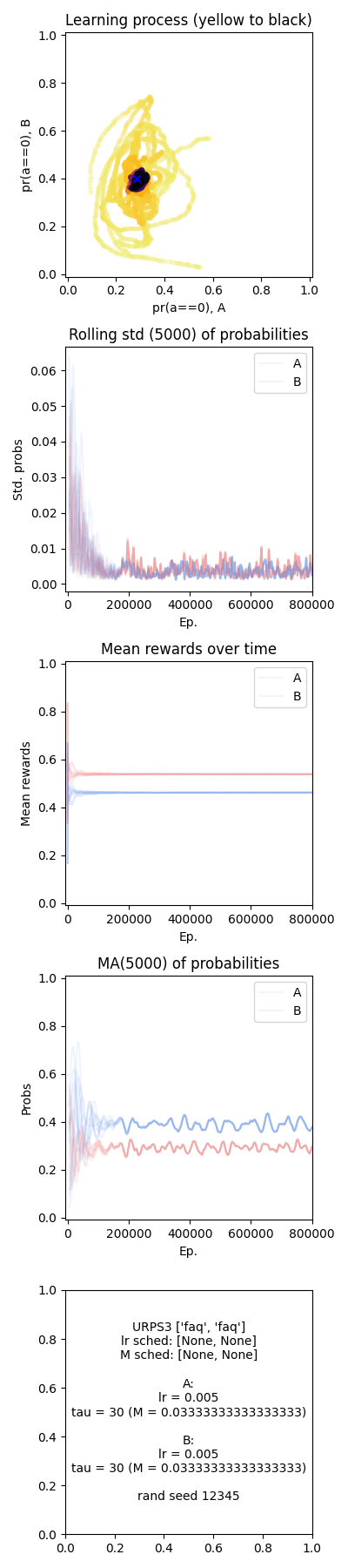}
    	\caption{$\tau = 30$, $M = 30^{-1}$}
    	\end{subfigure}
    	\hfill
    	\begin{subfigure}[b]{0.3\linewidth}
    	\centering
    	\includegraphics[width=\textwidth, trim={0 770bp 0 0}, clip ]{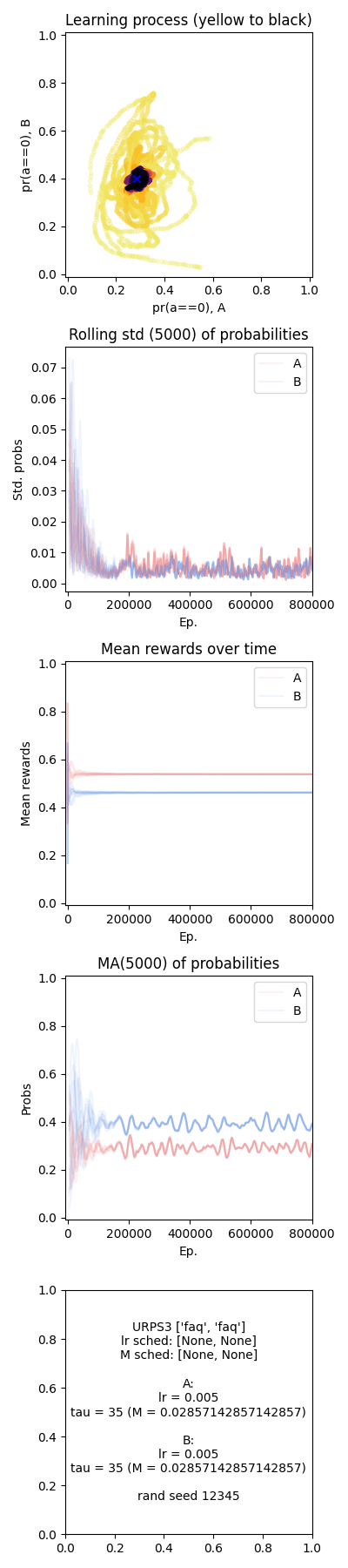}
        \caption{$\tau = 35$, $M = 35^{-1}$}
    	\end{subfigure}
    	\hfill
    	\begin{subfigure}[b]{0.3\linewidth}
    	\centering
    	\includegraphics[width=\textwidth, trim={0 770bp 0 0}, clip ]{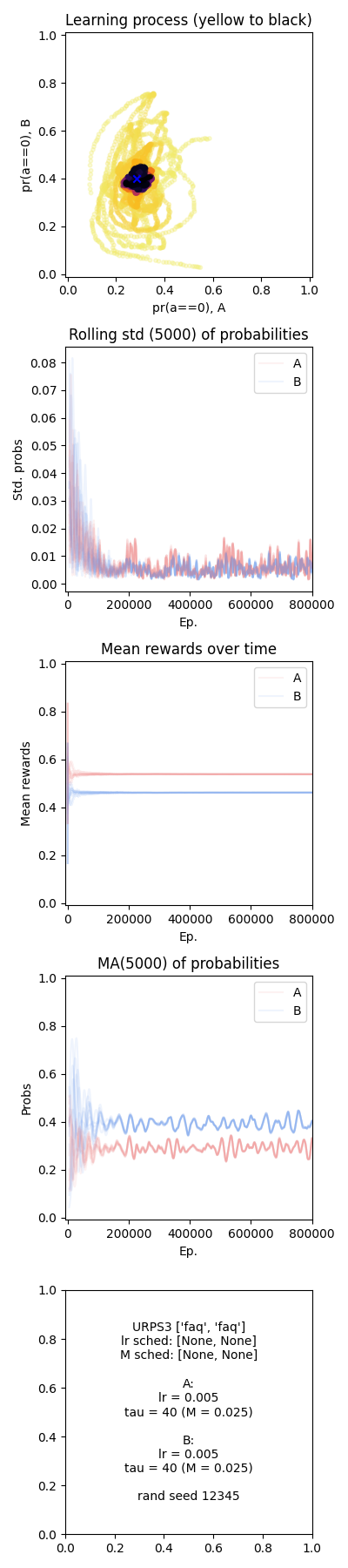}
    	\caption{$\tau = 40$, $M = 40^{-1}$}
    	\end{subfigure}
    \end{center}
    \caption[FAQ in self-play on the RPS-3 game.]{FAQ in self-play on the RPS-3 game with different values for $\tau$ ($30$, $35$, $40$) or $M$ ($30^{-1}$, $35^{-1}$, $40^{-1}$) equivalently; $\theta = 5 \cdot 10^{-3}$; for 10 different initialisations.
    (See figure \ref{fig:PD_MBL-DPU_high_mut} for a detailed explanation of the graphs.)
    }
    \label{fig:RPS3_FAQ_low_mut}
\end{figure}
\begin{figure}[h] %
    \begin{center}
    	\begin{subfigure}[b]{0.3\linewidth}
    	\centering
    	\includegraphics[width=\textwidth, trim={0 770bp 0 0}, clip ]{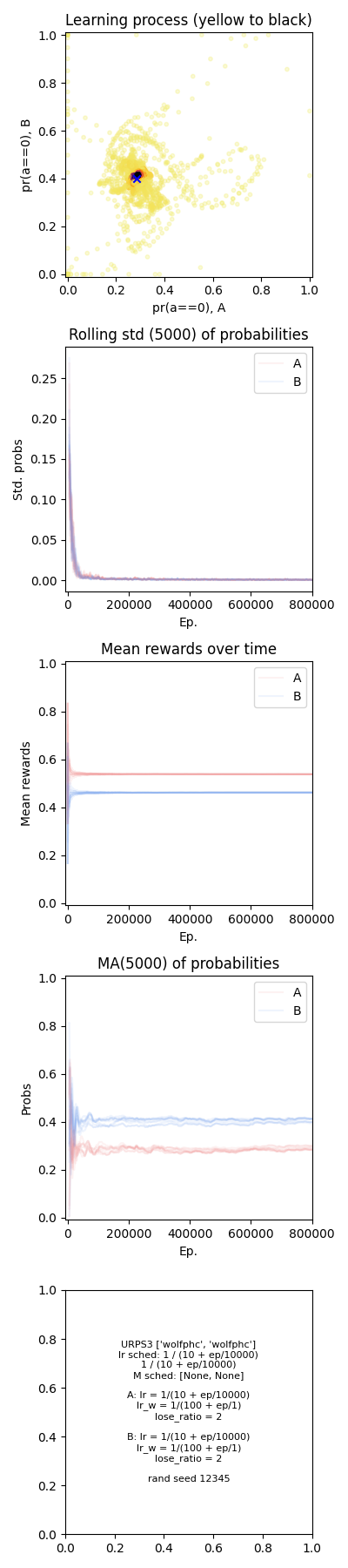}
    	\caption{Initial learning rate $10^{-1}$ for $Q$. Win learning rate $10^{-2}$.}
    	\end{subfigure}
    	\hfill
    	\begin{subfigure}[b]{0.3\linewidth}
    	\centering
    	\includegraphics[width=\textwidth, trim={0 770bp 0 0}, clip ]{figures/results-solon/URPS3/wolfphc_wolfphc/fixed_M_reducing_lr/viz/800000_eps_20201110_1849_9a085005141e__combo.png}
        \caption{Initial learning rate $10^{-1}$ for $Q$. Win learning rate $1/2 \cdot 10^{-4}$.}
    	\end{subfigure}
    	\hfill
    	\begin{subfigure}[b]{0.3\linewidth}
    	\centering
    	\includegraphics[width=\textwidth, trim={0 770bp 0 0}, clip ]{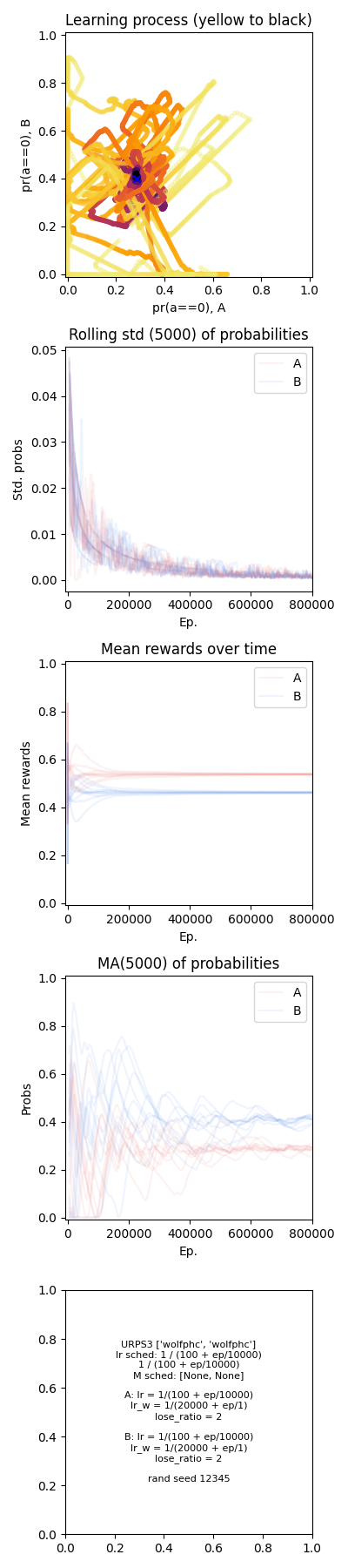}
    	\caption{Initial learning rate $10^{-2}$ for $Q$. Win learning rate $1/2 \cdot 10^{-4}$.}
    	\end{subfigure}
    \end{center}
    \caption[WoLF-PHC in self-play on the RPS-3 game.]{WoLF-PHC in self-play on the RPS-3 game with different learning schedules; for 10 different initialisations.
    (See figure \ref{fig:PD_MBL-DPU_high_mut} for a detailed explanation of the graphs.)
    }
    \label{app:fig:RPS3_WoLF-PHC}
\end{figure}
\begin{figure}[h] %
    \begin{center}
    	\begin{subfigure}[b]{0.3\linewidth}
    	\centering
    	\includegraphics[width=\textwidth, trim={0 770bp 0 0}, clip ]{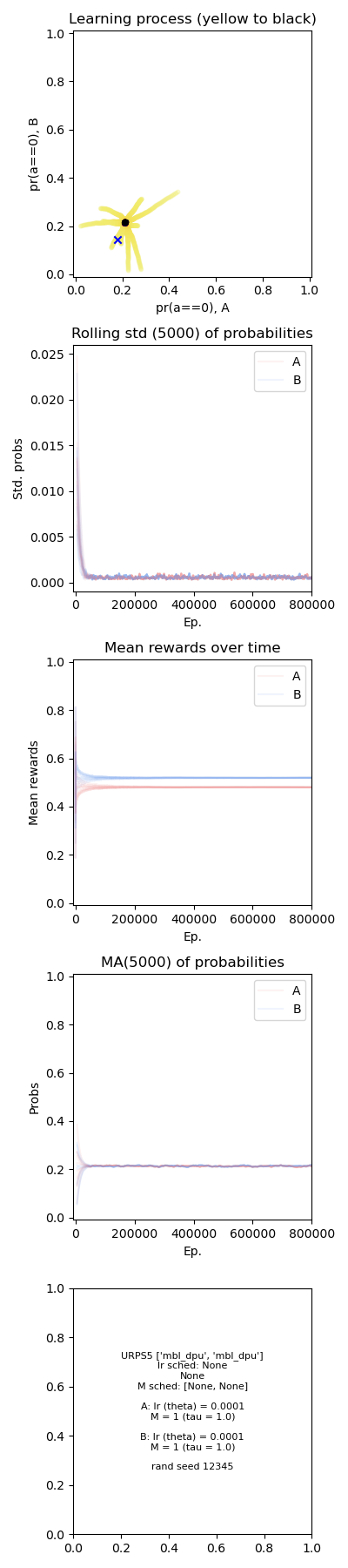}
    	\caption{$\tau = 1$, $M = 1^{-1}$}
    	\end{subfigure}
    	\hfill
    	\begin{subfigure}[b]{0.3\linewidth}
    	\centering
    	\includegraphics[width=\textwidth, trim={0 770bp 0 0}, clip ]{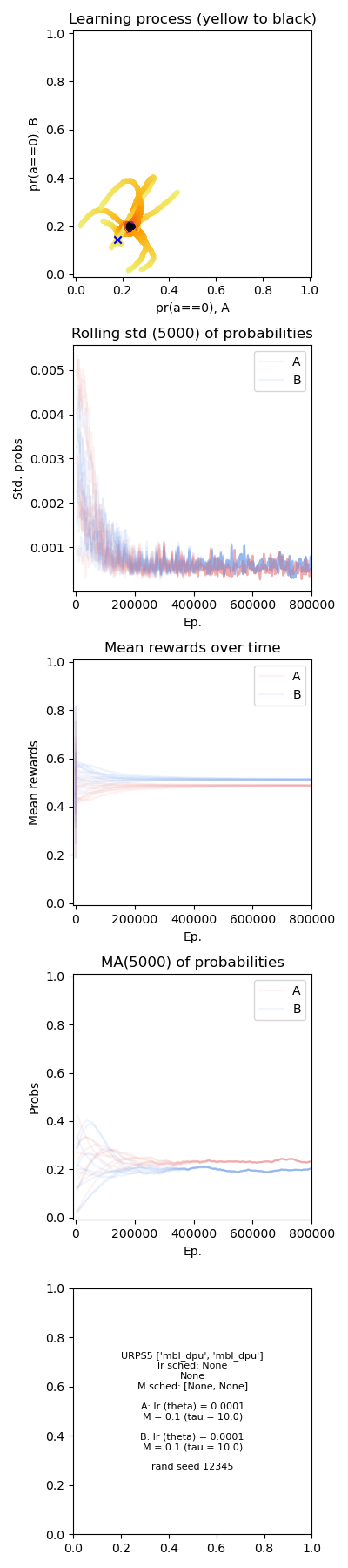}
    	\caption{$\tau = 10$, $M = 10^{-1}$}
    	\end{subfigure}
    	\hfill
    	\begin{subfigure}[b]{0.3\linewidth}
    	\centering
    	\includegraphics[width=\textwidth, trim={0 770bp 0 0}, clip ]{figures/results-solon/URPS5/mbl_dpu_mbl_dpu/fixed_M_fixed_lr/viz/800000_eps_20201111_0952_d788550ec584__combo.png}
        \caption{$\tau = 20$, $M = 20^{-1}$}
    	\end{subfigure}
    \end{center}
    \caption[MBL-DPU in self-play on the RPS-5 game.]{MBL-DPU in self-play on the RPS-5 game with different values for $\tau$ ($1$, $10$, $20$) or $M$ ($1$, $10^{-1}$, $20^{-1}$) equivalently; $\theta = 10^{-4}$; for 10 different initialisations.
    (See figure \ref{fig:PD_MBL-DPU_high_mut} for a detailed explanation of the graphs.)
    }
    \label{fig:RPS5_MBL-DPU_high_mut}
\end{figure}
\begin{figure}[h] %
    \begin{center}
    	\begin{subfigure}[b]{0.3\linewidth}
    	\centering
    	\includegraphics[width=\textwidth, trim={0 770bp 0 0}, clip ]{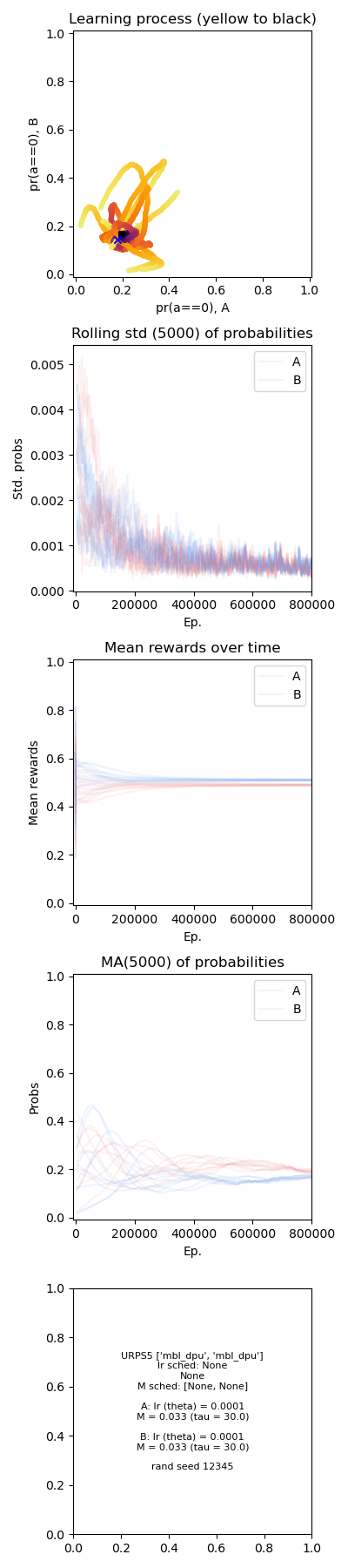}
        \caption{$\tau = 30$, $M = 30^{-1}$}
    	\end{subfigure}
    	\hfill
    	\begin{subfigure}[b]{0.3\linewidth}
    	\centering
    	\includegraphics[width=\textwidth, trim={0 770bp 0 0}, clip ]{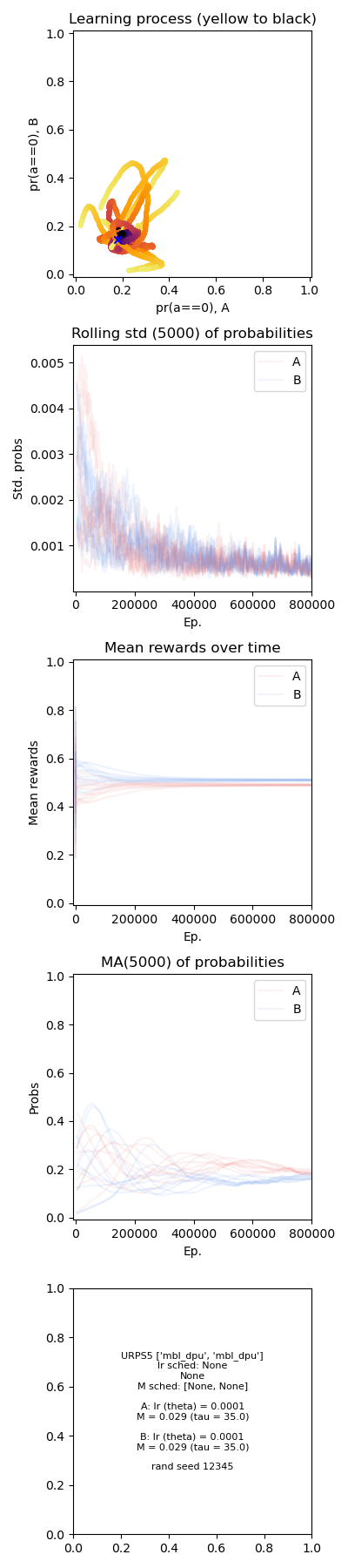}
    	\caption{$\tau = 35$, $M = 35^{-1}$}
    	\end{subfigure}
    	\hfill
    	\begin{subfigure}[b]{0.3\linewidth}
    	\centering
    	\includegraphics[width=\textwidth, trim={0 770bp 0 0}, clip ]{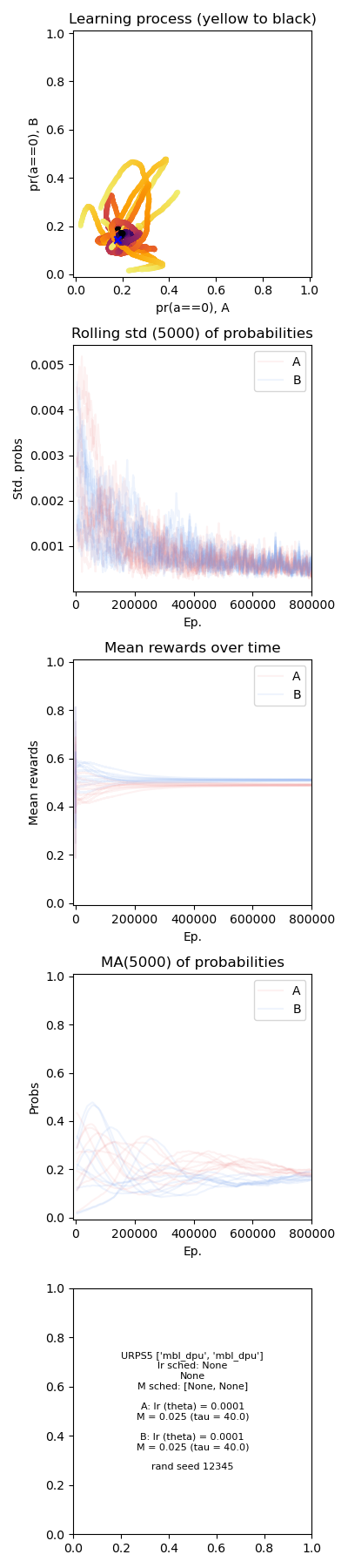}
    	\caption{$\tau = 40$, $M = 40^{-1}$}
    	\end{subfigure}
    \end{center}
    \caption[MBL-DPU in self-play on the RPS-5 game.]{MBL-DPU in self-play on the RPS-5 game with different values for $\tau$ ($30$, $35$, $40$) or $M$ ($30^{-1}$, $35^{-1}$, $40^{-1}$) equivalently; $\theta = 10^{-4}$; for 10 different initialisations.
    (See figure \ref{fig:PD_MBL-DPU_high_mut} for a detailed explanation of the graphs.)
    }
    \label{fig:RPS5_MBL-DPU_low_mut}
\end{figure}
\begin{figure}[h] %
    \begin{center}
    	\begin{subfigure}[b]{0.3\linewidth}
    	\centering
    	\includegraphics[width=\textwidth, trim={0 770bp 0 0}, clip ]{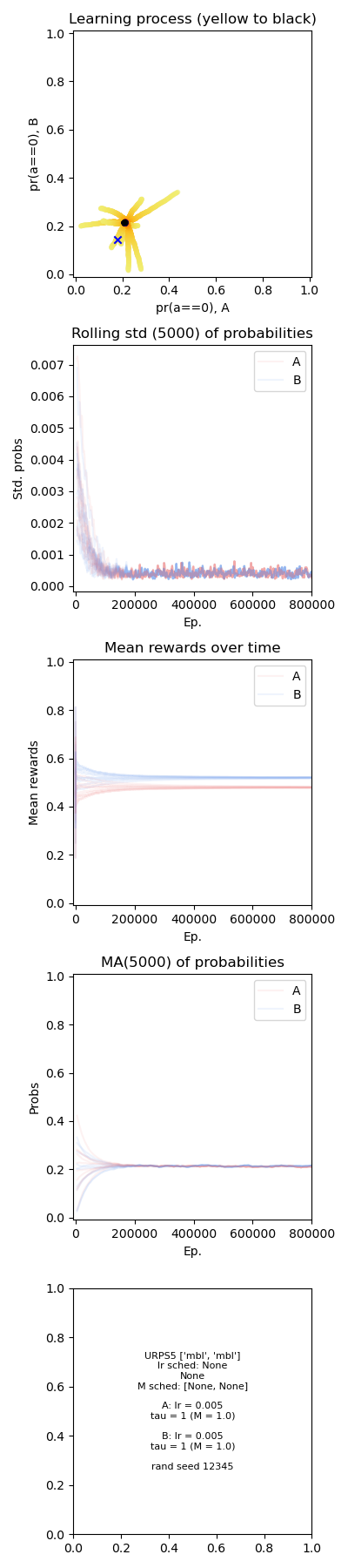}
        \caption{$\tau = 1$, $M = 1^{-1}$}
    	\end{subfigure}
    	\hfill
    	\begin{subfigure}[b]{0.3\linewidth}
    	\centering
    	\includegraphics[width=\textwidth, trim={0 770bp 0 0}, clip ]{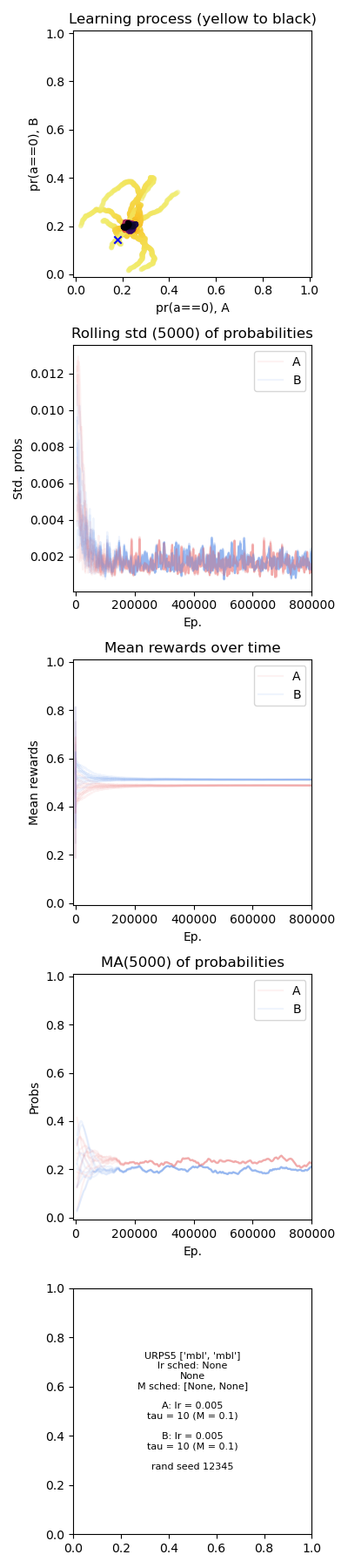}
    	\caption{$\tau = 10$, $M = 10^{-1}$}
    	\end{subfigure}
    	\hfill
    	\begin{subfigure}[b]{0.3\linewidth}
    	\centering
    	\includegraphics[width=\textwidth, trim={0 770bp 0 0}, clip ]{figures/results-solon/URPS5/mbl_mbl/fixed_M_fixed_lr/viz/800000_eps_20201111_1001_94e6d5a73c69__combo.png}
    	\caption{$\tau = 20$, $M = 20^{-1}$}
    	\end{subfigure}
    \end{center}
    \caption[MBL-LC in self-play on the RPS-5 game.]{MBL-LC in self-play on the RPS-5 game with different values for $\tau$ ($1$, $10$, $20$) or $M$ ($1$, $10^{-1}$, $20^{-1}$) equivalently; $\theta = 5 \cdot 10^{-3}$; for 10 different initialisations.
    (See figure \ref{fig:PD_MBL-DPU_high_mut} for a detailed explanation of the graphs.)
    }
    \label{fig:RPS5_MBL-LC_high_mut}
\end{figure}
\begin{figure}[h] %
    \begin{center}
    	\begin{subfigure}[b]{0.3\linewidth}
    	\centering
    	\includegraphics[width=\textwidth, trim={0 770bp 0 0}, clip ]{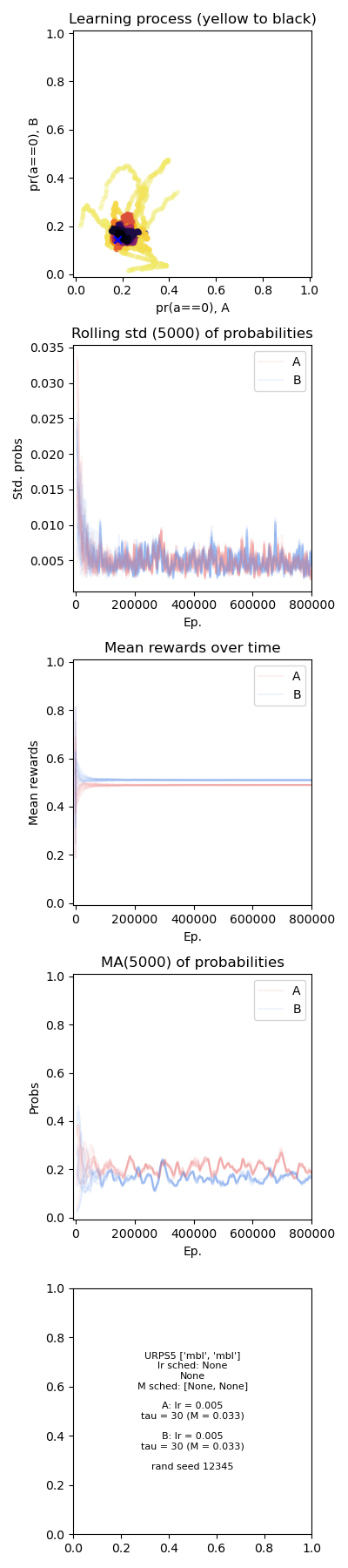}
    	\caption{$\tau = 30$, $M = 30^{-1}$}
    	\end{subfigure}
    	\hfill
    	\begin{subfigure}[b]{0.3\linewidth}
    	\centering
    	\includegraphics[width=\textwidth, trim={0 770bp 0 0}, clip ]{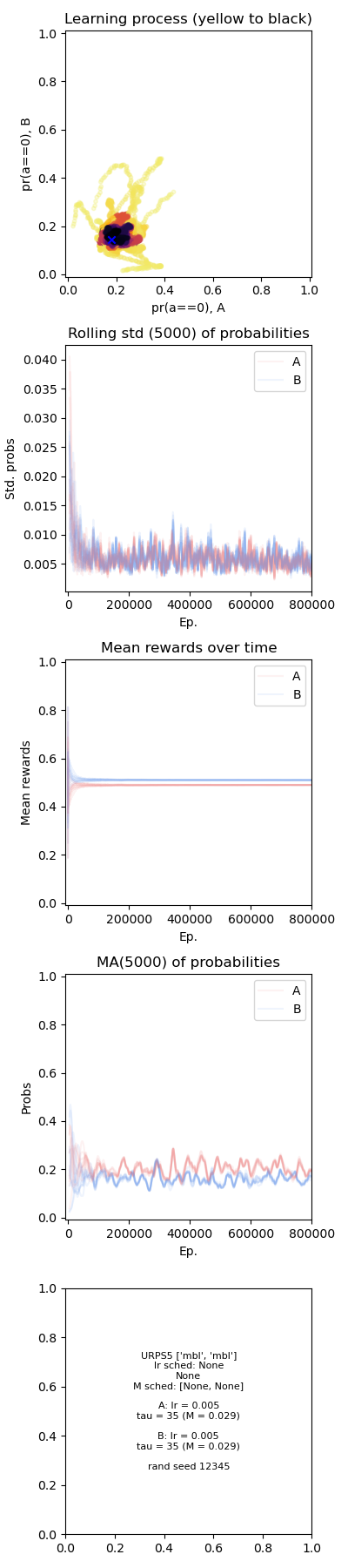}
        \caption{$\tau = 35$, $M = 35^{-1}$}
    	\end{subfigure}
    	\hfill
    	\begin{subfigure}[b]{0.3\linewidth}
    	\centering
    	\includegraphics[width=\textwidth, trim={0 770bp 0 0}, clip ]{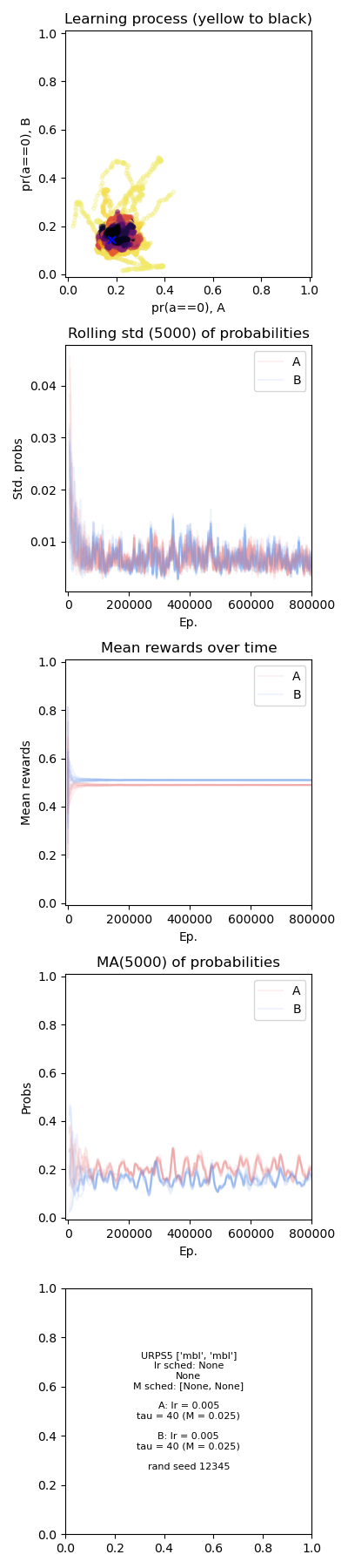}
    	\caption{$\tau = 40$, $M = 40^{-1}$}
    	\end{subfigure}
    \end{center}
    \caption[MBL-LC in self-play on the RPS-5 game.]{MBL-LC in self-play on the RPS-5 game with different values for $\tau$ ($30$, $35$, $40$) or $M$ ($30^{-1}$, $35^{-1}$, $40^{-1}$) equivalently; $\theta = 5 \cdot 10^{-3}$; for 10 different initialisations.
    (See figure \ref{fig:PD_MBL-DPU_high_mut} for a detailed explanation of the graphs.)
    }
    \label{fig:RPS5_MBL-LC_low_mut}
\end{figure}
\begin{figure}[h] %
    \begin{center}
    	\begin{subfigure}[b]{0.3\linewidth}
    	\centering
    	\includegraphics[width=\textwidth, trim={0 770bp 0 0}, clip ]{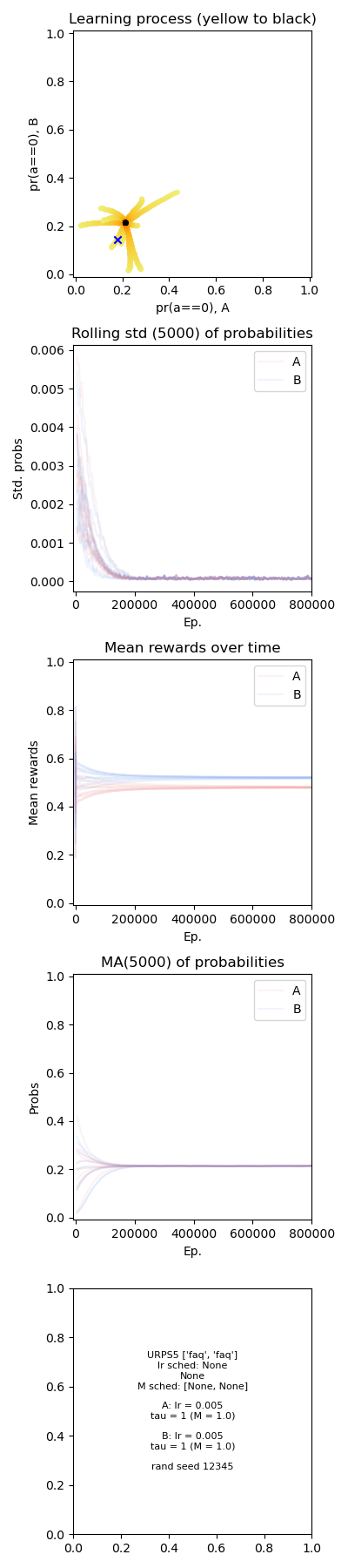}
    	\caption{$\tau = 1$, $M = 1^{-1}$}
    	\end{subfigure}
    	\hfill
    	\begin{subfigure}[b]{0.3\linewidth}
    	\centering
    	\includegraphics[width=\textwidth, trim={0 770bp 0 0}, clip ]{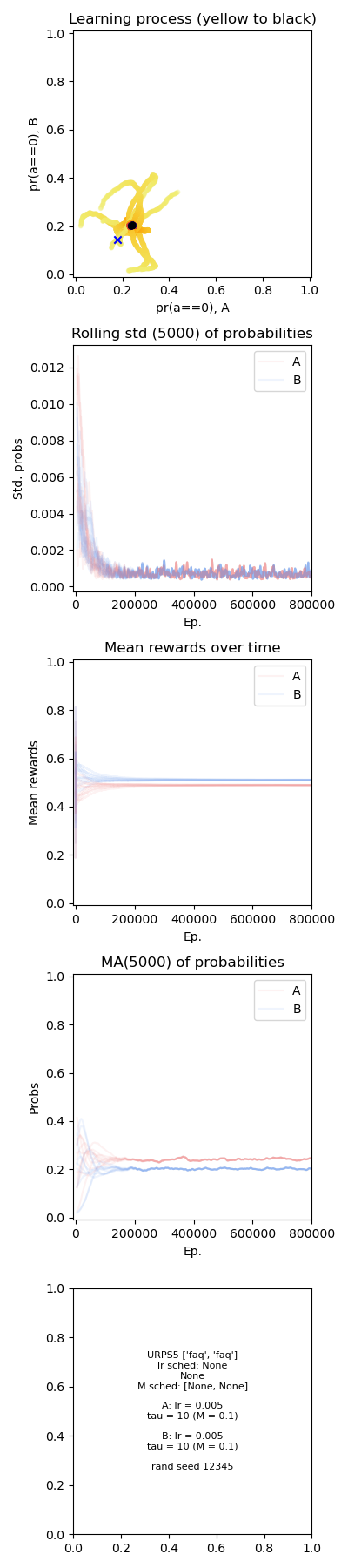}
        \caption{$\tau = 10$, $M = 10^{-1}$}
    	\end{subfigure}
    	\hfill
    	\begin{subfigure}[b]{0.3\linewidth}
    	\centering
    	\includegraphics[width=\textwidth, trim={0 770bp 0 0}, clip ]{figures/results-solon/URPS5/faq_faq/fixed_M_fixed_lr/viz/800000_eps_20201111_1244_38f8b175cdf4__combo.png}
    	\caption{$\tau = 20$, $M = 20^{-1}$}
    	\end{subfigure}
    \end{center}
    \caption[FAQ in self-play on the RPS-5 game.]{FAQ in self-play on the RPS-5 game with different values for $\tau$ ($1$, $10$, $20$) or $M$ ($1$, $10^{-1}$, $20^{-1}$) equivalently; $\theta = 5 \cdot 10^{-3}$; for 10 different initialisations.
    (See figure \ref{fig:PD_MBL-DPU_high_mut} for a detailed explanation of the graphs.)
    }
    \label{fig:RPS5_FAQ_high_mut}
\end{figure}
\begin{figure}[h] %
    \begin{center}
    	\begin{subfigure}[b]{0.3\linewidth}
    	\centering
    	\includegraphics[width=\textwidth, trim={0 770bp 0 0}, clip ]{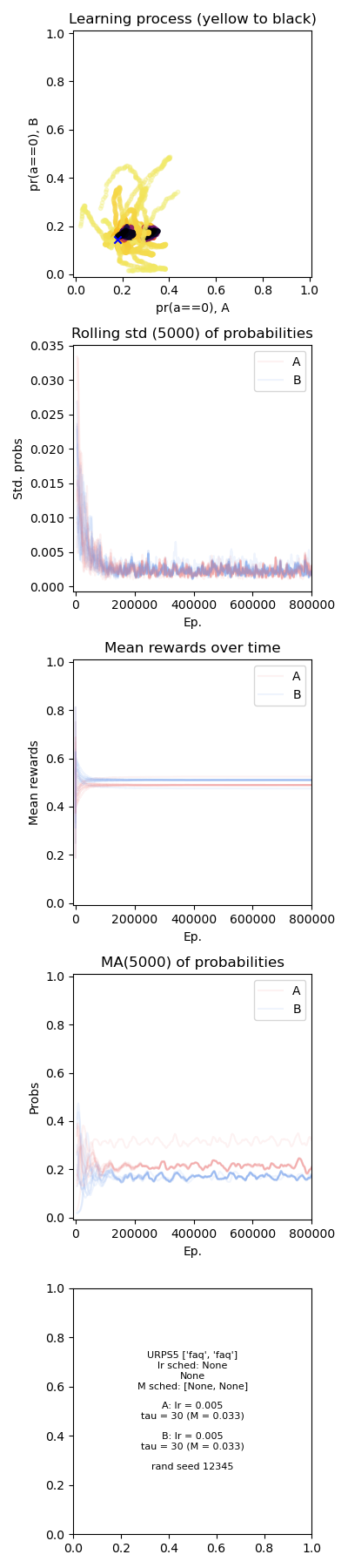}
    	\caption{$\tau = 30$, $M = 30^{-1}$}
    	\end{subfigure}
    	\hfill
    	\begin{subfigure}[b]{0.3\linewidth}
    	\centering
    	\includegraphics[width=\textwidth, trim={0 770bp 0 0}, clip ]{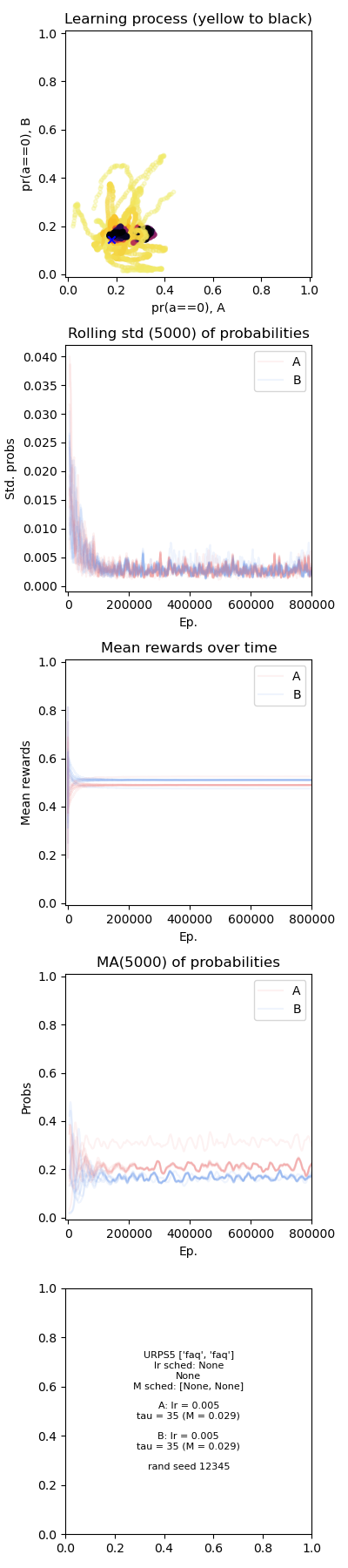}
        \caption{$\tau = 35$, $M = 35^{-1}$}
    	\end{subfigure}
    	\hfill
    	\begin{subfigure}[b]{0.3\linewidth}
    	\centering
    	\includegraphics[width=\textwidth, trim={0 770bp 0 0}, clip ]{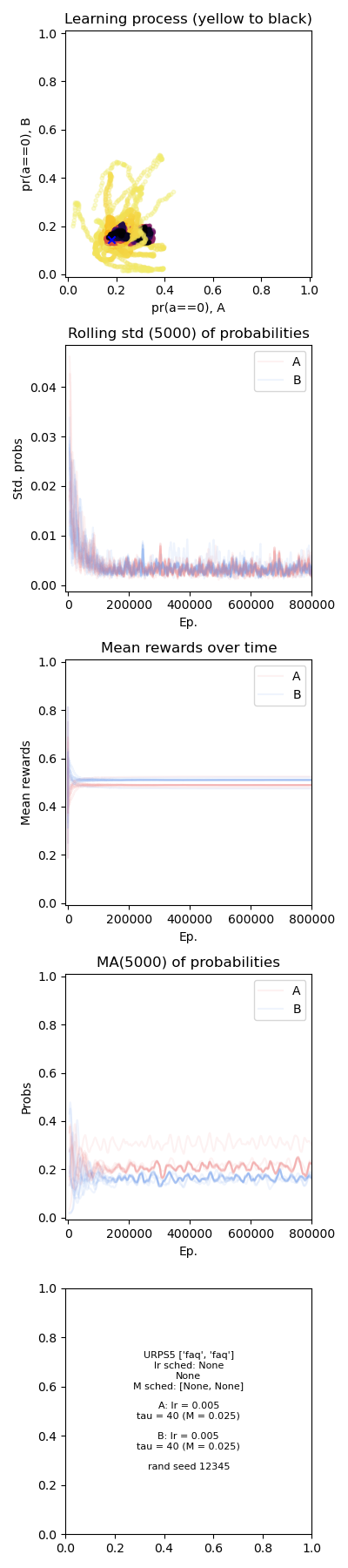}
    	\caption{$\tau = 40$, $M = 40^{-1}$}
    	\end{subfigure}
    \end{center}
    \caption[FAQ in self-play on the RPS-5 game.]{FAQ in self-play on the RPS-5 game with different values for $\tau$ ($30$, $35$, $40$) or $M$ ($30^{-1}$, $35^{-1}$, $40^{-1}$) equivalently; $\theta = 5 \cdot 10^{-3}$; for 10 different initialisations.
    (See figure \ref{fig:PD_MBL-DPU_high_mut} for a detailed explanation of the graphs.)
    }
    \label{fig:RPS5_FAQ_low_mut}
\end{figure}
\begin{figure}[h] %
    \begin{center}
    	\begin{subfigure}[b]{0.3\linewidth}
    	\centering
    	\includegraphics[width=\textwidth, trim={0 770bp 0 0}, clip ]{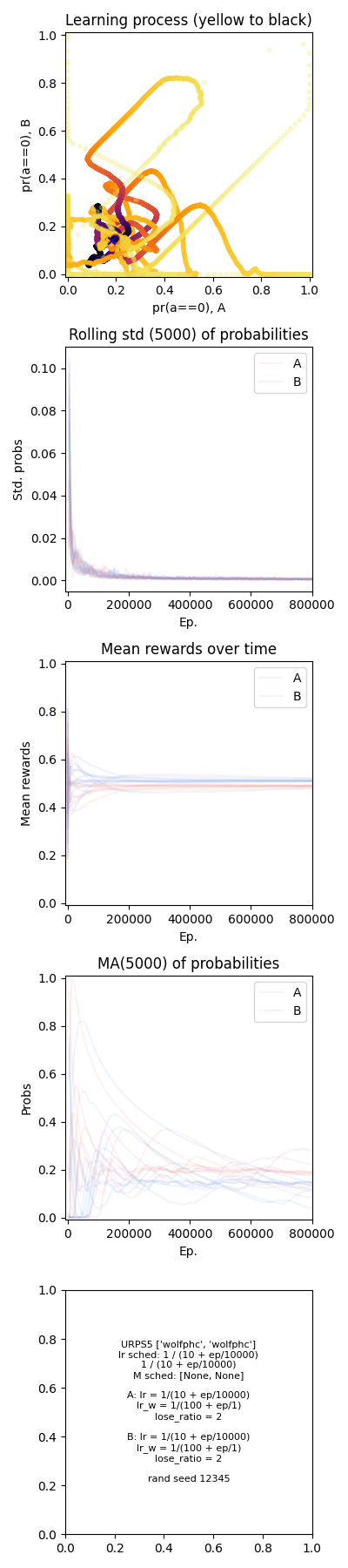}
    	\caption{Initial learning rate $10^{-1}$ for $Q$. Win learning rate $10^{-2}$.}
    	\end{subfigure}
    	\hfill
    	\begin{subfigure}[b]{0.3\linewidth}
    	\centering
    	\includegraphics[width=\textwidth, trim={0 770bp 0 0}, clip ]{figures/results-solon/URPS5/wolfphc_wolfphc/fixed_M_reducing_lr/viz/800000_eps_20201111_1258_4e99262d6dc5__combo.png}
        \caption{Initial learning rate $10^{-1}$ for $Q$. Win learning rate $1/2 \cdot 10^{-4}$.}
    	\end{subfigure}
    	\hfill
    	\begin{subfigure}[b]{0.3\linewidth}
    	\centering
    	\includegraphics[width=\textwidth, trim={0 770bp 0 0}, clip ]{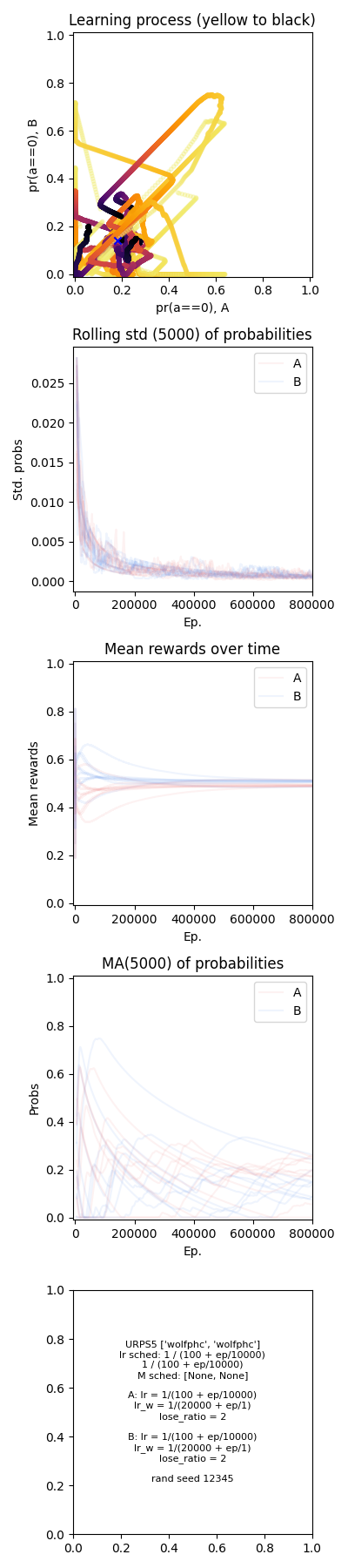}
    	\caption{Initial learning rate $10^{-2}$ for $Q$. Win learning rate $1/2 \cdot 10^{-4}$.}
    	\end{subfigure}
    \end{center}
    \caption[WoLF-PHC in self-play on the RPS-5 game.]{WoLF-PHC in self-play on the RPS-5 game with different learning schedules; for 10 different initialisations.
    (See figure \ref{fig:PD_MBL-DPU_high_mut} for a detailed explanation of the graphs.)
    }
    \label{app:fig:RPS5_WoLF-PHC}
\end{figure}
\begin{figure}[h] %
    \begin{center}
    	\begin{subfigure}[b]{0.3\linewidth}
    	\centering
    	\includegraphics[width=\textwidth, trim={0 770bp 0 0}, clip ]{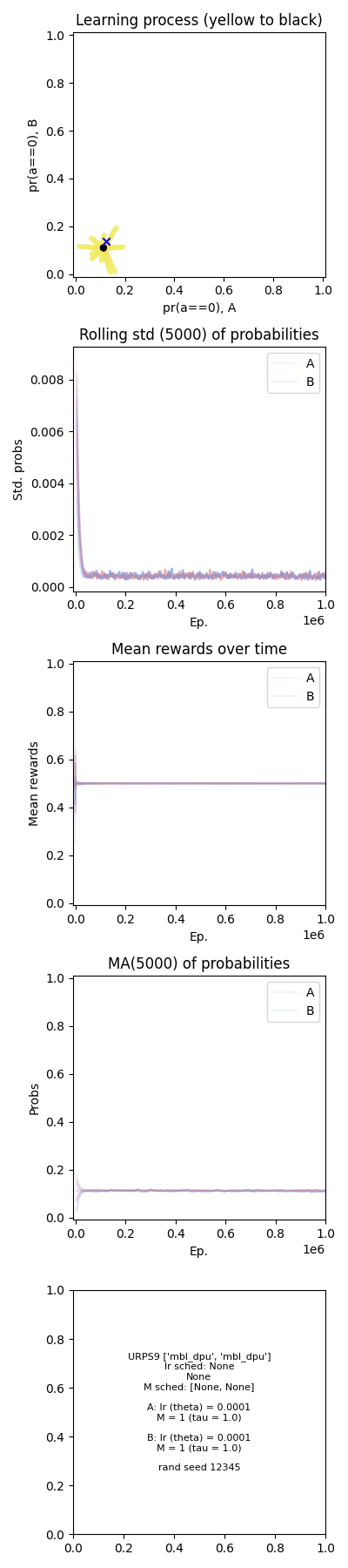}
    	\caption{$\tau = 1$, $M = 1^{-1}$}
    	\end{subfigure}
    	\hfill
    	\begin{subfigure}[b]{0.3\linewidth}
    	\centering
    	\includegraphics[width=\textwidth, trim={0 770bp 0 0}, clip ]{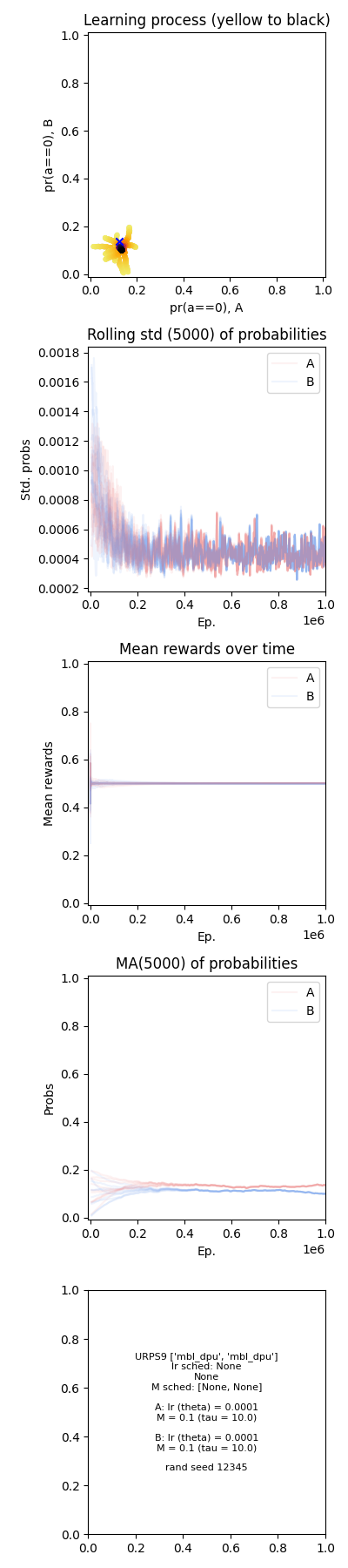}
    	\caption{$\tau = 10$, $M = 10^{-1}$}
    	\end{subfigure}
    	\hfill
    	\begin{subfigure}[b]{0.3\linewidth}
    	\centering
    	\includegraphics[width=\textwidth, trim={0 770bp 0 0}, clip ]{figures/results-solon/URPS9/mbl_dpu_mbl_dpu/fixed_M_fixed_lr/viz/1000000_eps_20201119_1559_ef9bc3e12b18__combo.png}
        \caption{$\tau = 20$, $M = 20^{-1}$}
    	\end{subfigure}
    \end{center}
    \caption[MBL-DPU in self-play on the RPS-9 game.]{MBL-DPU in self-play on the RPS-9 game with different values for $\tau$ ($1$, $10$, $20$) or $M$ ($1$, $10^{-1}$, $20^{-1}$) equivalently; $\theta = 10^{-4}$; for 10 different initialisations.
    (See figure \ref{fig:PD_MBL-DPU_high_mut} for a detailed explanation of the graphs.)
    }
    \label{fig:RPS9_MBL-DPU_high_mut}
\end{figure}
\begin{figure}[h] %
    \begin{center}
    	\begin{subfigure}[b]{0.3\linewidth}
    	\centering
    	\includegraphics[width=\textwidth, trim={0 770bp 0 0}, clip ]{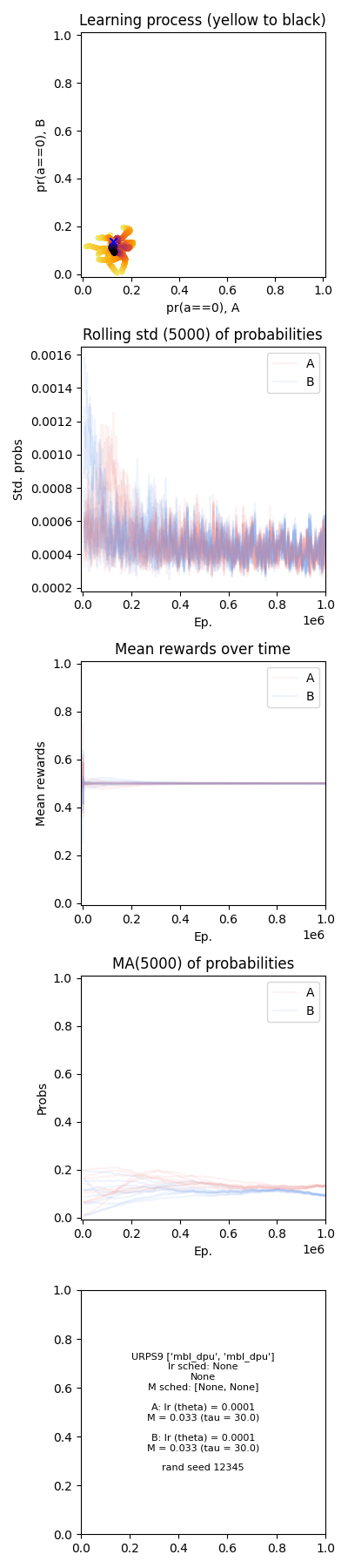}
        \caption{$\tau = 30$, $M = 30^{-1}$}
    	\end{subfigure}
    	\hfill
    	\begin{subfigure}[b]{0.3\linewidth}
    	\centering
    	\includegraphics[width=\textwidth, trim={0 770bp 0 0}, clip ]{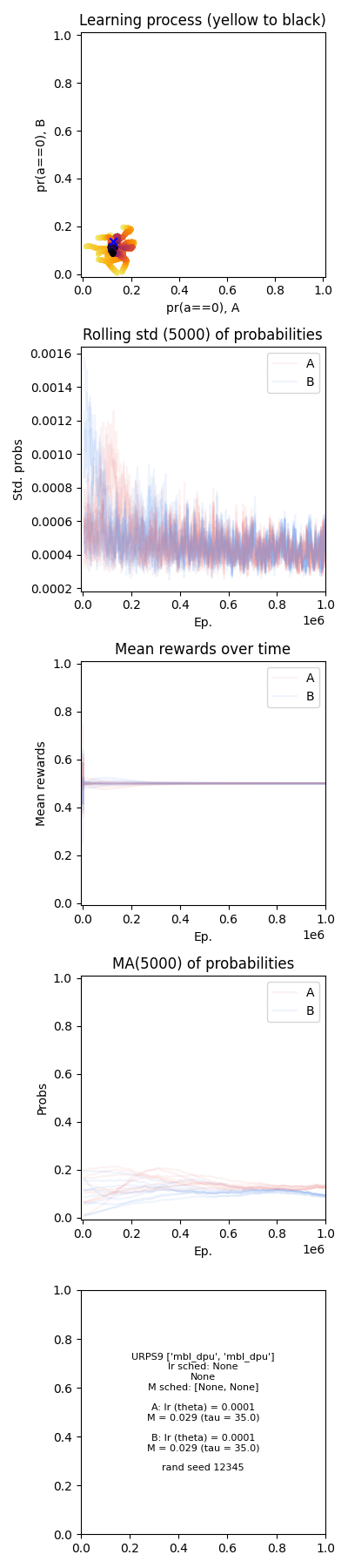}
    	\caption{$\tau = 35$, $M = 35^{-1}$}
    	\end{subfigure}
    	\hfill
    	\begin{subfigure}[b]{0.3\linewidth}
    	\centering
    	\includegraphics[width=\textwidth, trim={0 770bp 0 0}, clip ]{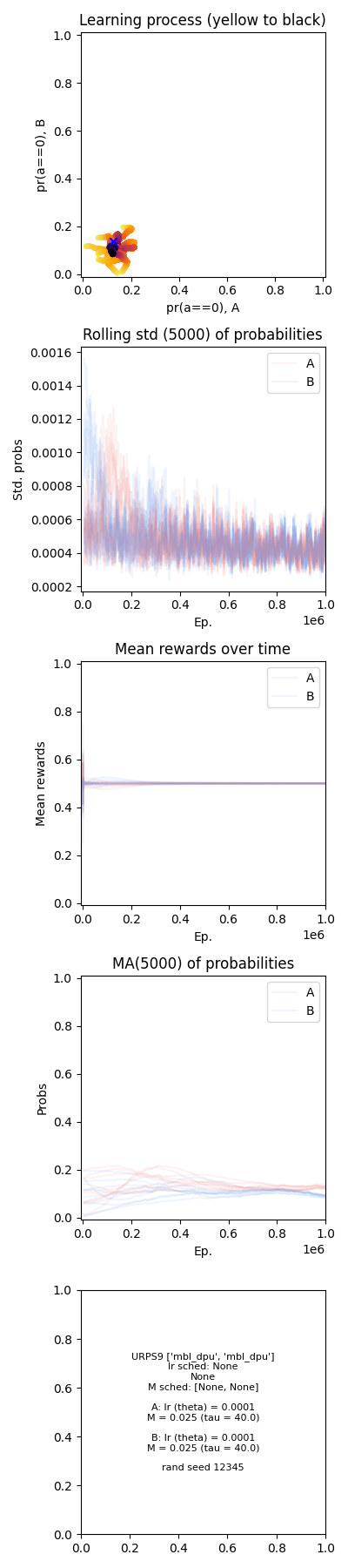}
    	\caption{$\tau = 40$, $M = 40^{-1}$}
    	\end{subfigure}
    \end{center}
    \caption[MBL-DPU in self-play on the RPS-9 game.]{MBL-DPU in self-play on the RPS-9 game with different values for $\tau$ ($30$, $35$, $40$) or $M$ ($30^{-1}$, $35^{-1}$, $40^{-1}$) equivalently; $\theta = 10^{-4}$; for 10 different initialisations.
    (See figure \ref{fig:PD_MBL-DPU_high_mut} for a detailed explanation of the graphs.)
    }
    \label{fig:RPS9_MBL-DPU_low_mut}
\end{figure}
\begin{figure}[h] %
    \begin{center}
    	\begin{subfigure}[b]{0.3\linewidth}
    	\centering
    	\includegraphics[width=\textwidth, trim={0 770bp 0 0}, clip ]{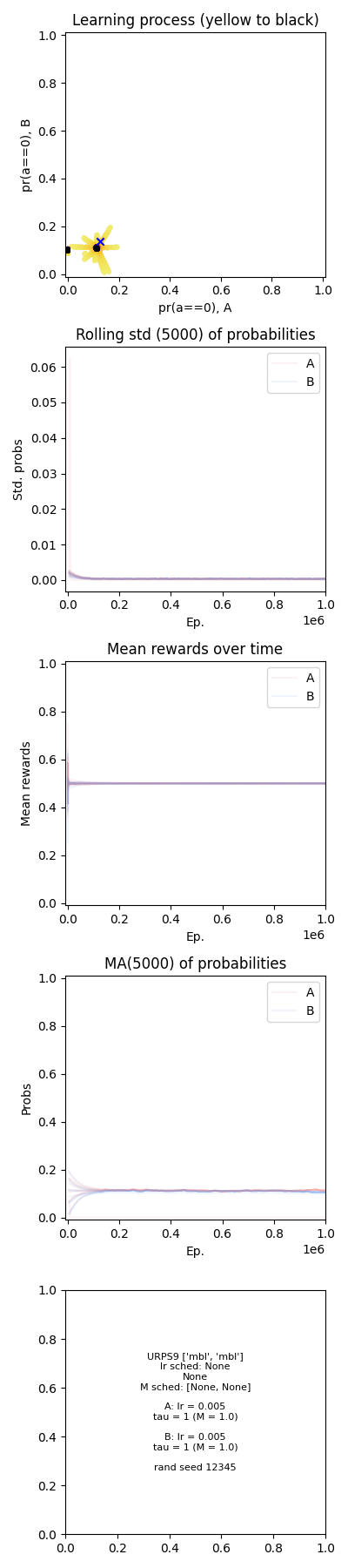}
        \caption{$\tau = 1$, $M = 1^{-1}$}
    	\end{subfigure}
    	\hfill
    	\begin{subfigure}[b]{0.3\linewidth}
    	\centering
    	\includegraphics[width=\textwidth, trim={0 770bp 0 0}, clip ]{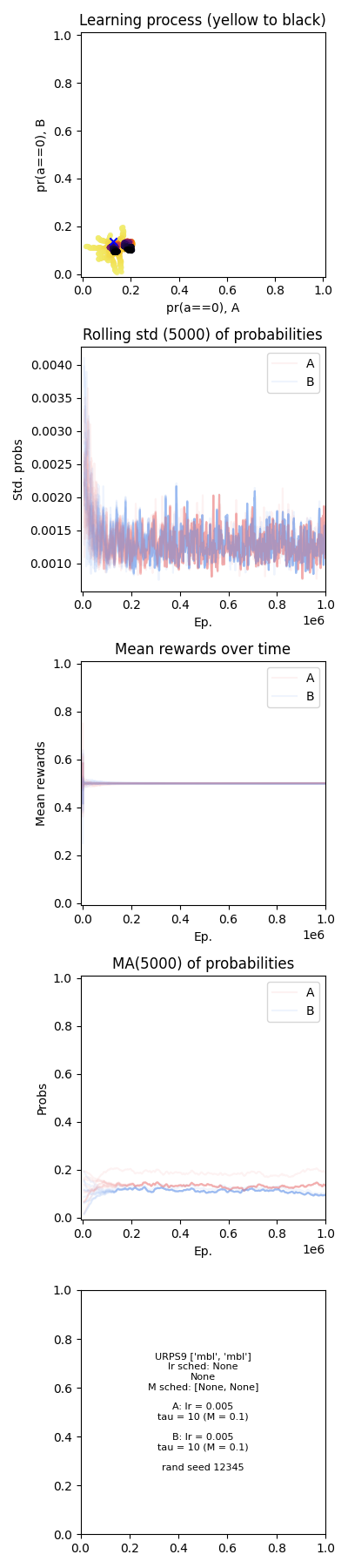}
    	\caption{$\tau = 10$, $M = 10^{-1}$}
    	\end{subfigure}
    	\hfill
    	\begin{subfigure}[b]{0.3\linewidth}
    	\centering
    	\includegraphics[width=\textwidth, trim={0 770bp 0 0}, clip ]{figures/results-solon/URPS9/mbl_mbl/fixed_M_fixed_lr/viz/1000000_eps_20201119_1323_954d43cf7216__combo.png}
    	\caption{$\tau = 20$, $M = 20^{-1}$}
    	\end{subfigure}
    \end{center}
    \caption[MBL-LC in self-play on the RPS-9 game.]{MBL-LC in self-play on the RPS-9 game with different values for $\tau$ ($1$, $10$, $20$) or $M$ ($1$, $10^{-1}$, $20^{-1}$) equivalently; $\theta = 5 \cdot 10^{-3}$; for 10 different initialisations.
    (See figure \ref{fig:PD_MBL-DPU_high_mut} for a detailed explanation of the graphs.)
    }
    \label{fig:RPS9_MBL-LC_high_mut}
\end{figure}
\begin{figure}[h] %
    \begin{center}
    	\begin{subfigure}[b]{0.3\linewidth}
    	\centering
    	\includegraphics[width=\textwidth, trim={0 770bp 0 0}, clip ]{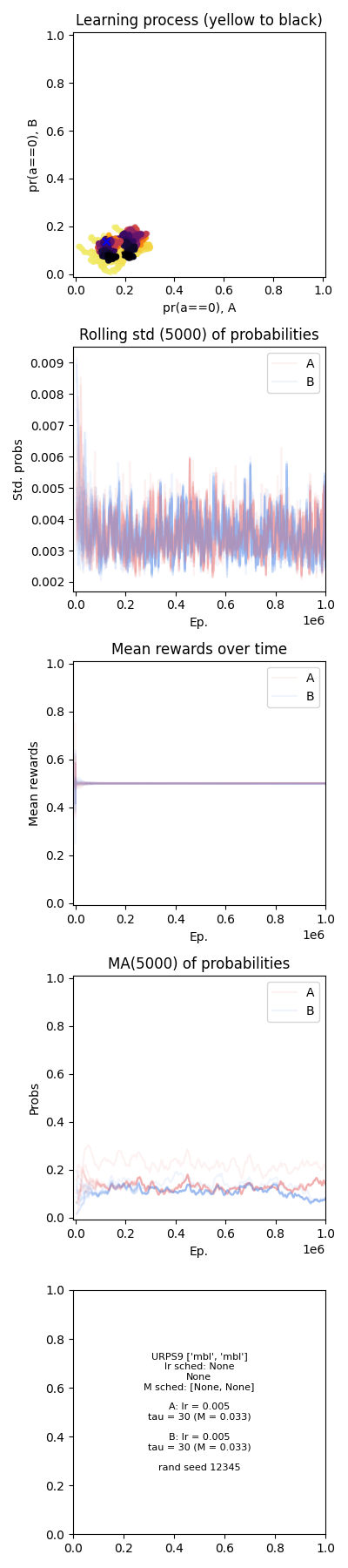}
    	\caption{$\tau = 30$, $M = 30^{-1}$}
    	\end{subfigure}
    	\hfill
    	\begin{subfigure}[b]{0.3\linewidth}
    	\centering 
    	\includegraphics[width=\textwidth, trim={0 770bp 0 0}, clip ]{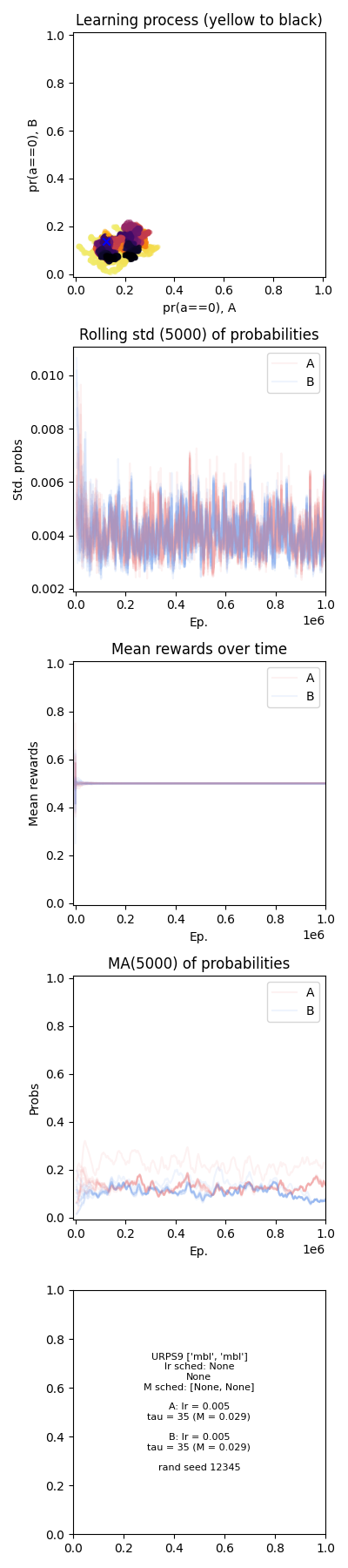}
        \caption{$\tau = 35$, $M = 35^{-1}$}
    	\end{subfigure}
    	\hfill
    	\begin{subfigure}[b]{0.3\linewidth}
    	\centering
    	\includegraphics[width=\textwidth, trim={0 770bp 0 0}, clip ]{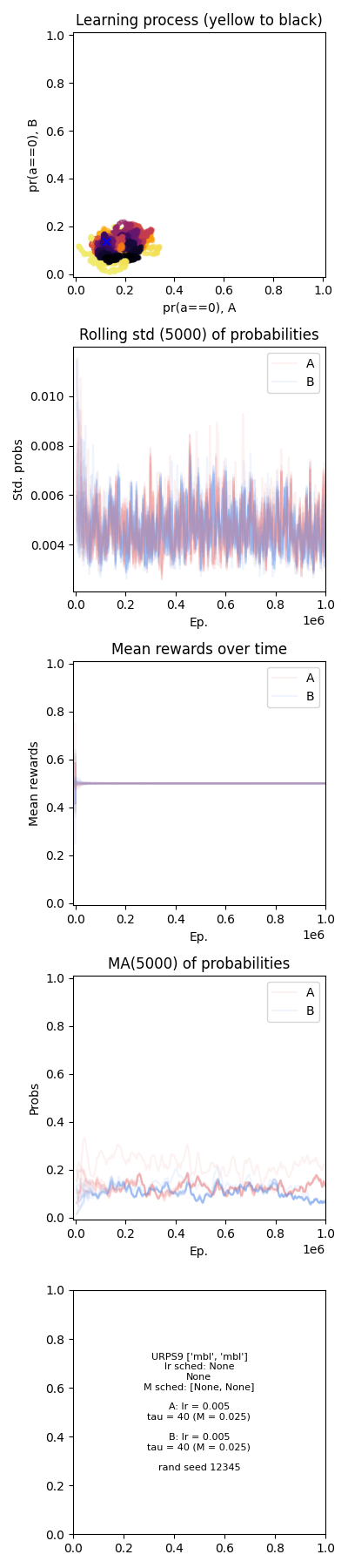}
    	\caption{$\tau = 40$, $M = 40^{-1}$}
    	\end{subfigure}
    \end{center}
    \caption[MBL-LC in self-play on the RPS-9 game.]{MBL-LC in self-play on the RPS-9 game with different values for $\tau$ ($30$, $35$, $40$) or $M$ ($30^{-1}$, $35^{-1}$, $40^{-1}$) equivalently; $\theta = 5 \cdot 10^{-3}$; for 10 different initialisations.
    (See figure \ref{fig:PD_MBL-DPU_high_mut} for a detailed explanation of the graphs.)
    }
    \label{fig:RPS9_MBL-LC_low_mut}
\end{figure}
\begin{figure}[h] %
    \begin{center}
    	\begin{subfigure}[b]{0.3\linewidth}
    	\centering
    	\includegraphics[width=\textwidth, trim={0 770bp 0 0}, clip ]{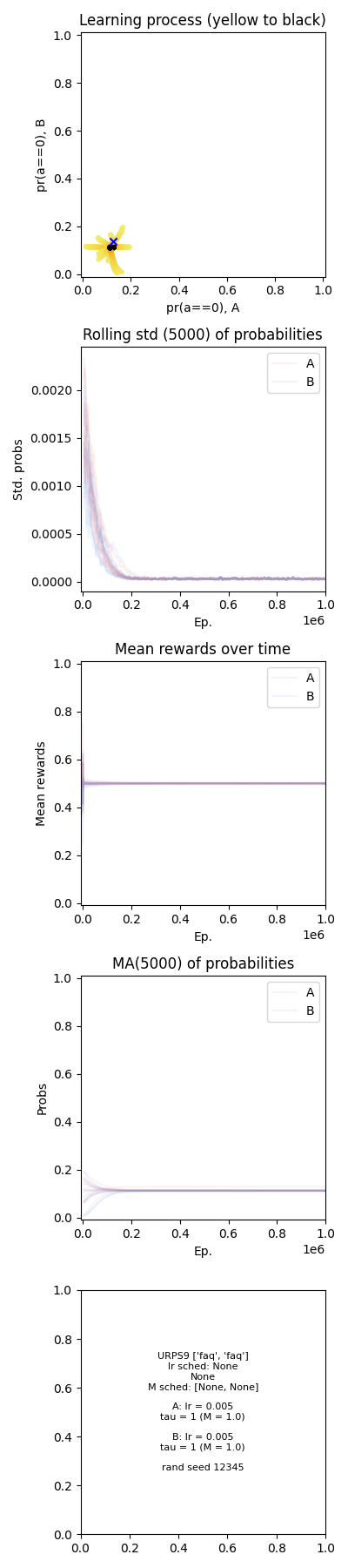}
    	\caption{$\tau = 1$, $M = 1^{-1}$}
    	\end{subfigure}
    	\hfill
    	\begin{subfigure}[b]{0.3\linewidth}
    	\centering
    	\includegraphics[width=\textwidth, trim={0 770bp 0 0}, clip ]{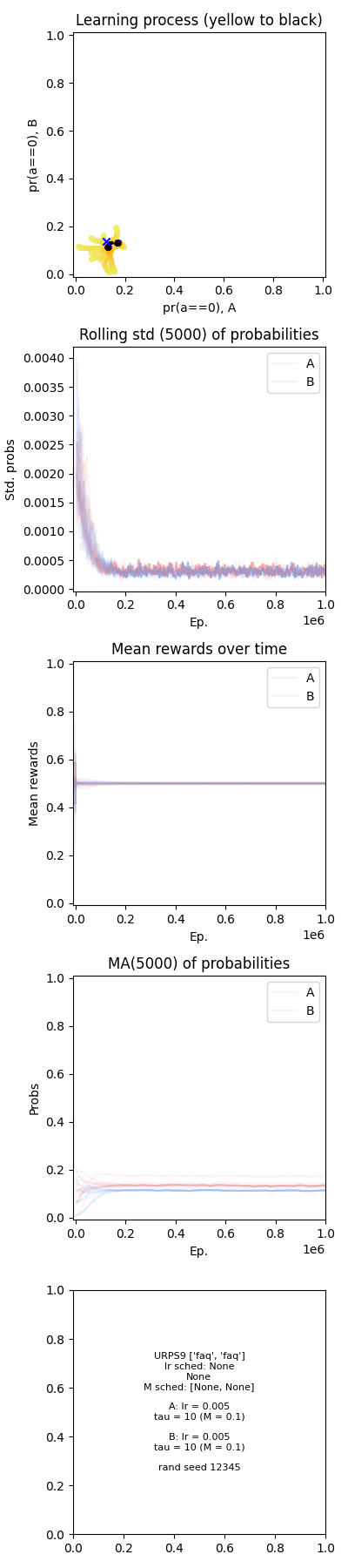}
        \caption{$\tau = 10$, $M = 10^{-1}$}
    	\end{subfigure}
    	\hfill
    	\begin{subfigure}[b]{0.3\linewidth}
    	\centering
    	\includegraphics[width=\textwidth, trim={0 770bp 0 0}, clip ]{figures/results-solon/URPS9/faq_faq/fixed_M_fixed_lr/viz/1000000_eps_20201119_1726_0ac59b5967be__combo.png}
    	\caption{$\tau = 20$, $M = 20^{-1}$}
    	\end{subfigure}
    \end{center}
    \caption[FAQ in self-play on the RPS-9 game.]{FAQ in self-play on the RPS-9 game with different values for $\tau$ ($1$, $10$, $20$) or $M$ ($1$, $10^{-1}$, $20^{-1}$) equivalently; $\theta = 5 \cdot 10^{-3}$; for 10 different initialisations.
    (See figure \ref{fig:PD_MBL-DPU_high_mut} for a detailed explanation of the graphs.)
    }
    \label{fig:RPS9_FAQ_high_mut}
\end{figure}
\begin{figure}[h] %
    \begin{center}
    	\begin{subfigure}[b]{0.3\linewidth}
    	\centering
    	\includegraphics[width=\textwidth, trim={0 770bp 0 0}, clip ]{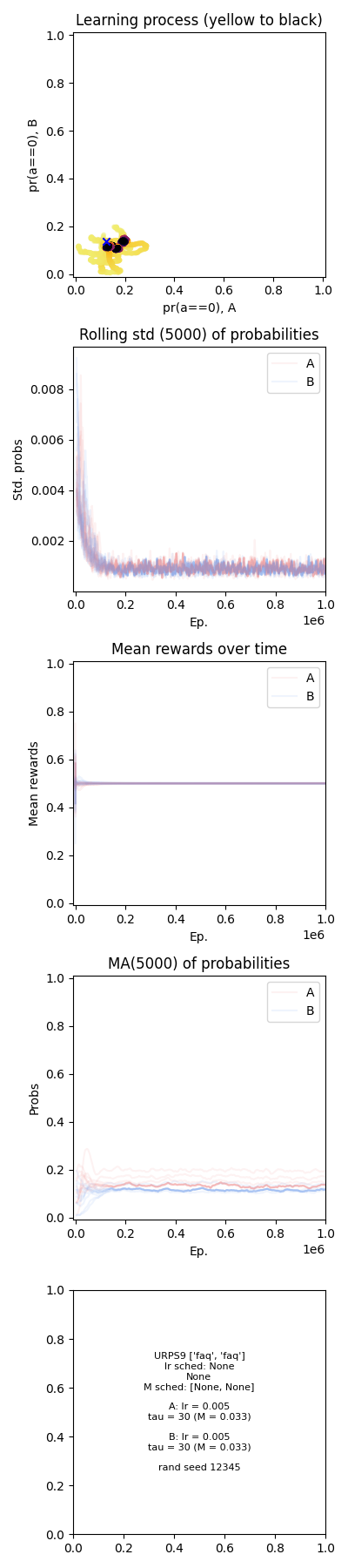}
    	\caption{$\tau = 30$, $M = 30^{-1}$}
    	\end{subfigure}
    	\hfill
    	\begin{subfigure}[b]{0.3\linewidth}
    	\centering
    	\includegraphics[width=\textwidth, trim={0 770bp 0 0}, clip ]{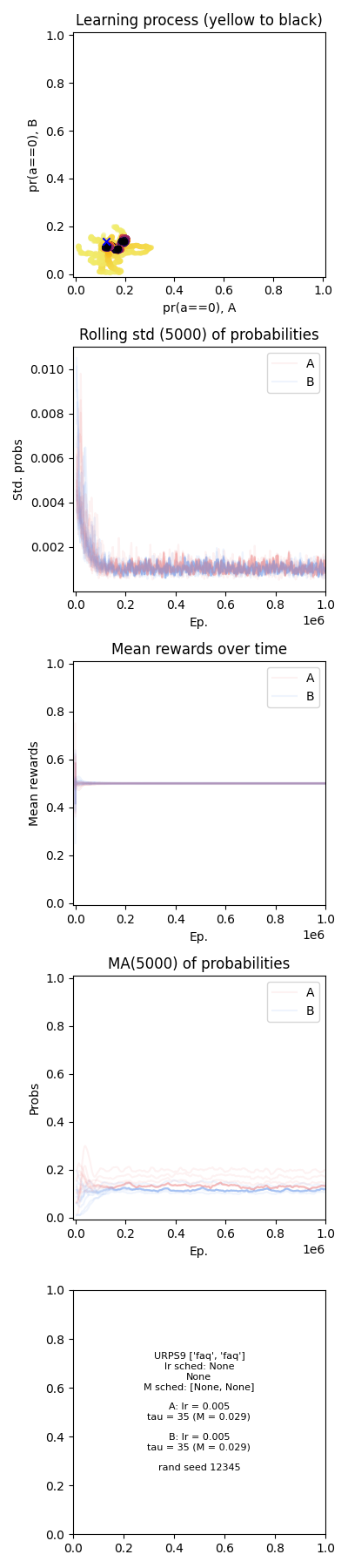}
        \caption{$\tau = 35$, $M = 35^{-1}$}
    	\end{subfigure}
    	\hfill
    	\begin{subfigure}[b]{0.3\linewidth}
    	\centering
    	\includegraphics[width=\textwidth, trim={0 770bp 0 0}, clip ]{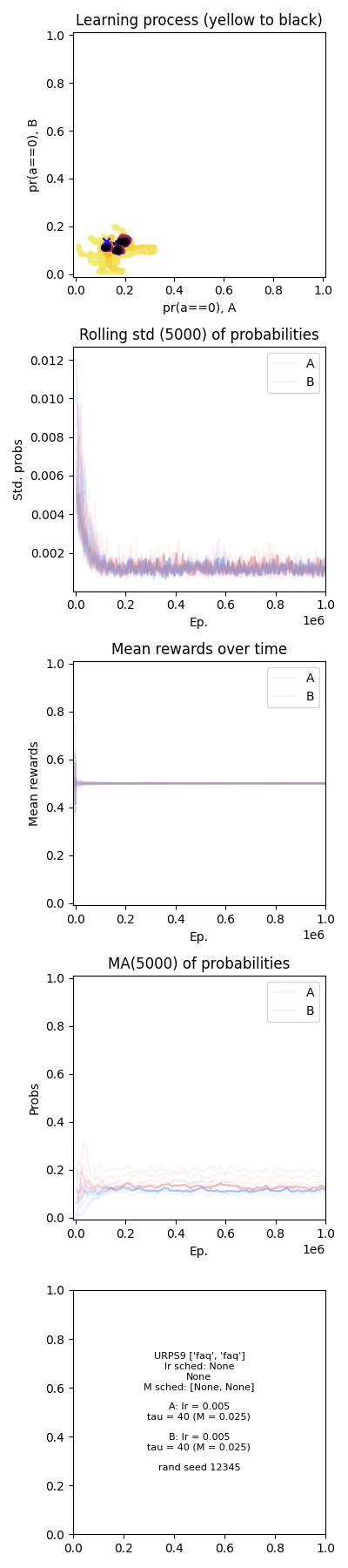}
    	\caption{$\tau = 40$, $M = 40^{-1}$}
    	\end{subfigure}
    \end{center}
    \caption[FAQ in self-play on the RPS-9 game.]{FAQ in self-play on the RPS-9 game with different values for $\tau$ ($30$, $35$, $40$) or $M$ ($30^{-1}$, $35^{-1}$, $40^{-1}$) equivalently; $\theta = 5 \cdot 10^{-3}$; for 10 different initialisations.
    (See figure \ref{fig:PD_MBL-DPU_high_mut} for a detailed explanation of the graphs.)
    }
    \label{fig:RPS9_FAQ_low_mut}
\end{figure}
\begin{figure}[h] %
    \begin{center}
    	\begin{subfigure}[b]{0.3\linewidth}
    	\centering
    	\includegraphics[width=\textwidth, trim={0 770bp 0 0}, clip ]{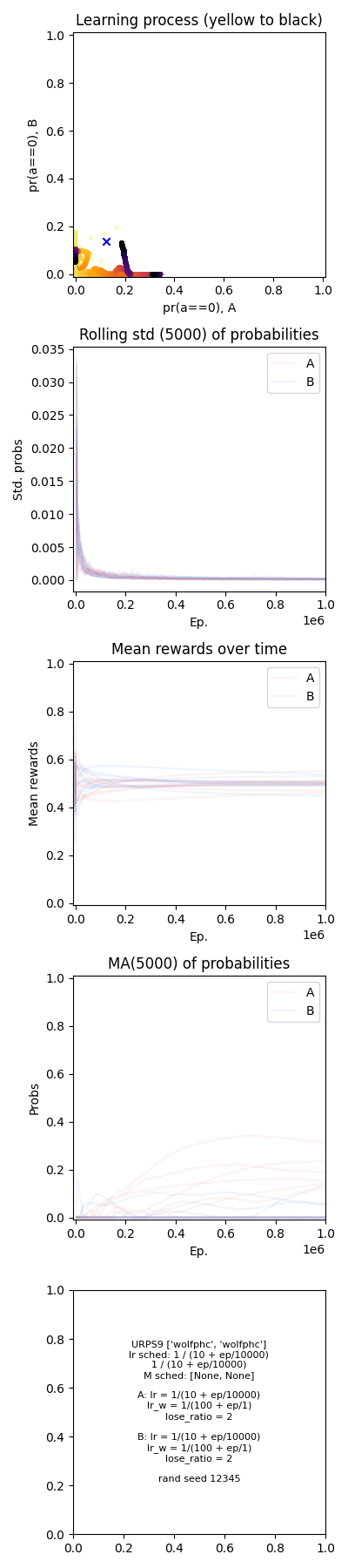}
    	\caption{Initial learning rate $10^{-1}$ for $Q$. Win learning rate $10^{-2}$.}
    	\end{subfigure}
    	\hfill
    	\begin{subfigure}[b]{0.3\linewidth}
    	\centering
    	\includegraphics[width=\textwidth, trim={0 770bp 0 0}, clip ]{figures/results-solon/URPS9/wolfphc_wolfphc/fixed_M_reducing_lr/viz/1000000_eps_20201118_1846_102fdd638cc6__combo.png}
        \caption{Initial learning rate $10^{-1}$ for $Q$. Win learning rate $1/2 \cdot 10^{-4}$.}
    	\end{subfigure}
    	\hfill
    	\begin{subfigure}[b]{0.3\linewidth}
    	\centering
    	\includegraphics[width=\textwidth, trim={0 770bp 0 0}, clip ]{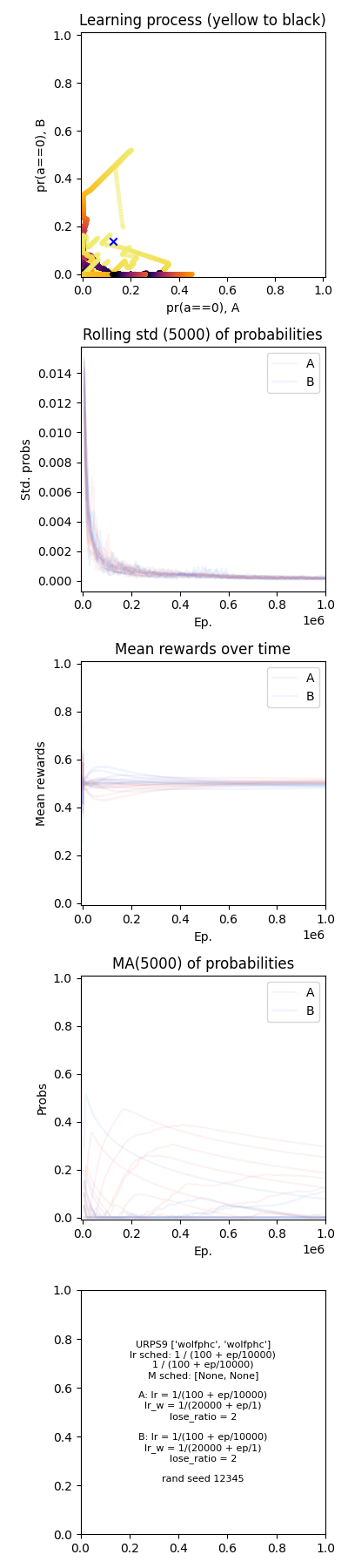}
    	\caption{Initial learning rate $10^{-2}$ for $Q$. Win learning rate $1/2 \cdot 10^{-4}$.}
    	\end{subfigure}
    \end{center}
    \caption[WoLF-PHC in self-play on the RPS-9 game.]{WoLF-PHC in self-play on the RPS-9 game with different learning schedules; for 10 different initialisations.
    (See figure \ref{fig:PD_MBL-DPU_high_mut} for a detailed explanation of the graphs.)
    }
    \label{fig:app_RPS9_WoLF-PHC}
\end{figure}

\clearpage

\subsection{Three-player Matching Pennies}\label{app:exp_spec_3MP}%

Further, we consider the behaviour of the MBL variants in comparison to FAQ learning and WoLF-PHC in a three-player Matching Pennies (3MP) game introduced in \cite{jordan_three_1993}, with payoffs as given in table \ref{tab:3MP}. The similarity to the standard MP game becomes clear when one considers that the payoff structure reflects the following idea: The first player wants to match the second player's action. The second player wants to match the third player's action. However, the third player does not want to match the first player's action. The unique Nash equilibrium for 3MP is located at the centre of $\mathcal{D}$. Note that, as initially proposed, 3MP is not a zero-sum game.

\begin{table}[h!]
\hspace{\fill}
\begin{subtable}[t]{0.45\textwidth}
\centering
\begin{tabular}[t]{c|cc}
	&	H	&	T	\\ \hline
H	&	$(1,1,-1)$ & $(-1,-1,-1)$ \\
T	&	$(-1,1,1)$ & $(1,-1,1)$
\end{tabular}
\vspace{0.5\baselineskip}
\caption{Payoffs when the third player chooses `H'.}
\end{subtable}
\hspace{\fill}
\begin{subtable}[t]{0.45\textwidth}
\centering
\begin{tabular}[t]{c|cc}
	&	H	&	T	\\ \hline
H	&	$(1,-1,1)$ & $(-1,1,1)$ \\
T	&	$(-1,-1,-1)$ & $(1,1,-1)$
\end{tabular}
\vspace{0.5\baselineskip}
\caption{Payoffs when the third player chooses `T'.}
\end{subtable}
\hspace{\fill}
\caption[Payoffs for the three-player Matching Pennies game.]{Payoff tuples for the three-player Matching Pennies (3MP) game with the first player's action determining the row, the second player's action the column, and the third player's action the table.}\label{tab:3MP}
\end{table}

In 3MP, both MBL variants (figures \ref{fig:3MP_MBL-DPU}, \ref{fig:3MP_MBL-LC}) show apparently asymptotically stable periodic limit behaviours, which approach the boundary of $\mathcal{D}$ as mutation diminishes. We further see a very similar behaviour for FAQ (figure \ref{fig:3MP_FAQ}) with $\tau^{-1}$ showing an analogous effect to $M$ in MBL, quite similar to the two-player settings. Likewise, WoLF-PHC (figure \ref{app:fig:3MP_WoLF-PHC}) exhibits apparently asymptotically stable trajectories, at least in the projection onto the first actions of the first two players. Again, WoLF-PHC shows a reduction of variance over time, presumably due to diminishing learning rates. In \cite{bowling_multiagent_2002}, the authors show that WoLF-PHC converges to the Nash equilibrium when $\delta_l / \delta_w = 3$ (as opposed to $\delta_l / \delta_w = 2$). Since there is no established ODE approximation of WoLF-PHC that we are aware of, the reasons for this remain unclear. One should also note that we have made sure that the Nash equilibrium is not located at the centre of $\mathcal{D}$ in the two-player games because the perturbation term in FAQ has its equilibrium there and convergence might easily have been coincidental. For 3MP, we have not made any such adaptations and some behaviours might change when the Nash equilibrium is moved away from the centre.

\begin{figure}[h] %
    \begin{center}
   	\begin{subfigure}[b]{0.3\linewidth}
   	\centering
   	\includegraphics[width=\textwidth, trim={0 770bp 0 0}, clip ]{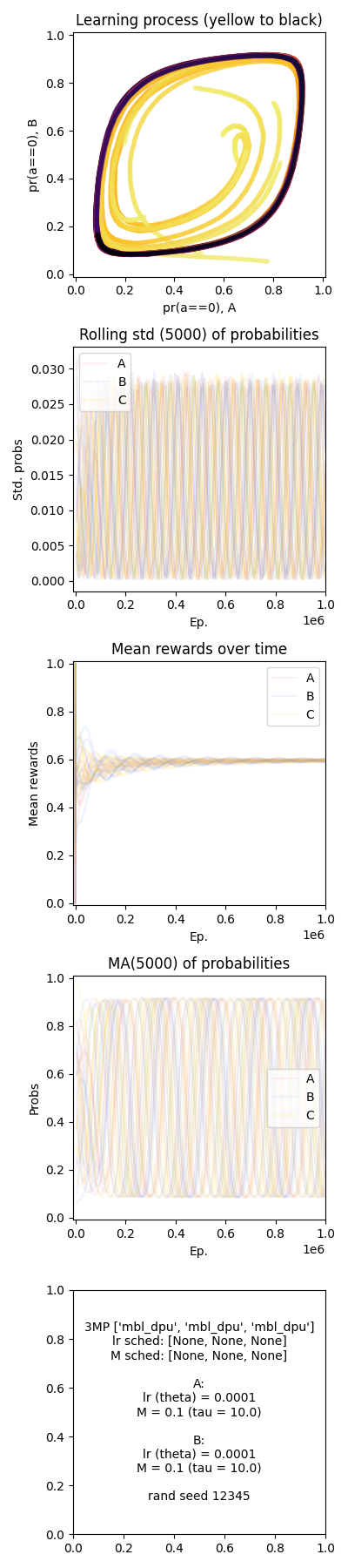}
    \caption{$\tau = 10$, $M = 10^{-1}$}
   	\end{subfigure}
   	\hfill
   	\begin{subfigure}[b]{0.3\linewidth}
   	\centering
   	\includegraphics[width=\textwidth, trim={0 770bp 0 0}, clip ]{figures/results-solon/3MP/mbl_dpu_mbl_dpu_mbl_dpu/fixed_M_fixed_lr/viz/1000000_eps_20200421_1439_33d31347cc92__combo.png}
   	\caption{$\tau = 20$, $M = 20^{-1}$}
   	\end{subfigure}
   	\hfill
   	\begin{subfigure}[b]{0.3\linewidth}
   	\centering
   	\includegraphics[width=\textwidth, trim={0 770bp 0 0}, clip ]{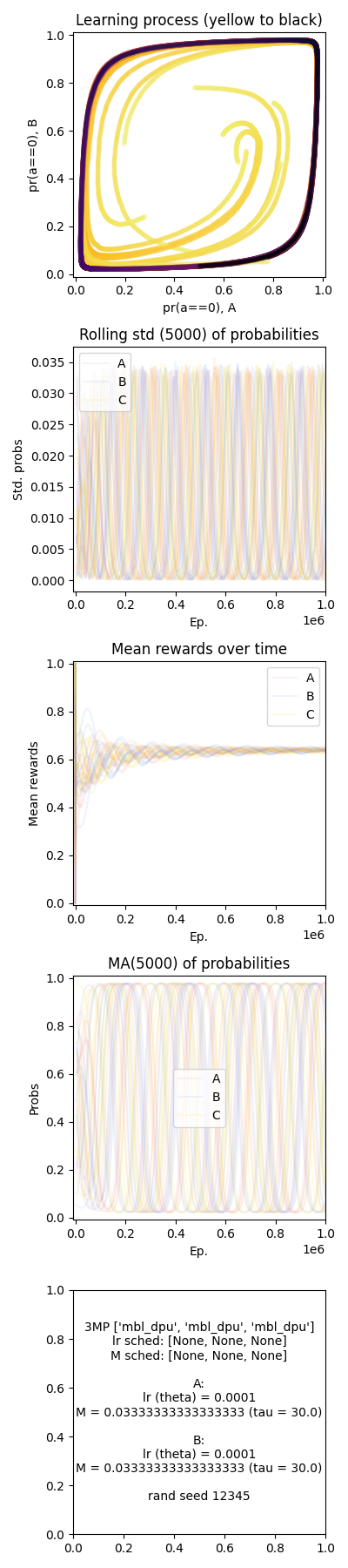}
   	\caption{$\tau = 30$, $M = 30^{-1}$}
   	\end{subfigure}
    \end{center}
    \caption[MBL-DPU in self-play on the 3MP game.]{MBL-DPU in self-play on the 3MP game with different values for $\tau$ ($10$, $20$, $30$) or $M$ ($10^{-1}$, $20^{-1}$, $30^{-1}$) equivalently; $\theta = 10^{-4}$; for 10 different initialisations.
    (See figure \ref{fig:PD_MBL-DPU_high_mut} for a detailed explanation of the graphs.)
    }
    \label{fig:3MP_MBL-DPU}
\end{figure}
\begin{figure}[h] %
    \begin{center}
    \begin{subfigure}[b]{0.3\linewidth}
        \centering
    	\includegraphics[width=\textwidth, trim={0 770bp 0 0}, clip ]{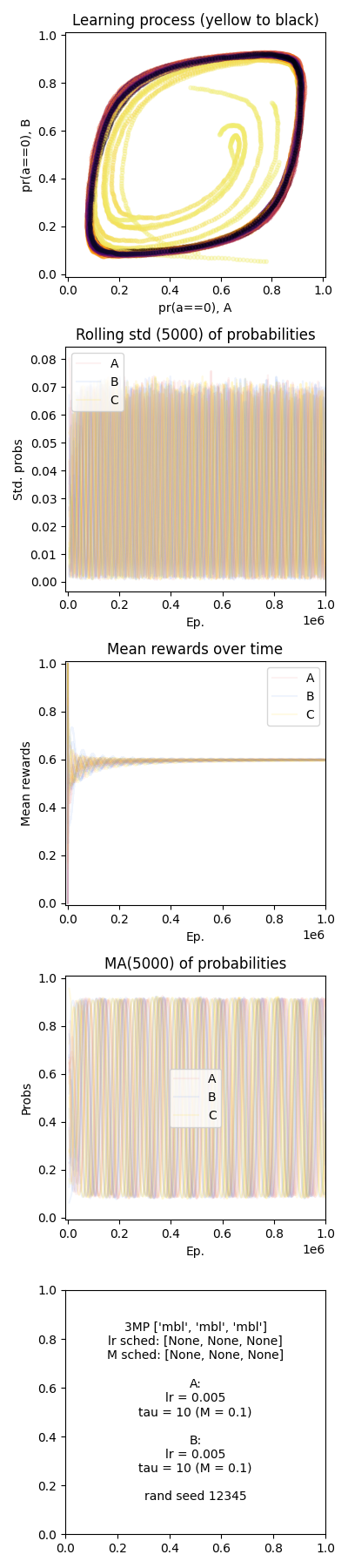}
    	\caption{$\tau = 10$, $M = 10^{-1}$}
    	\end{subfigure}
    	\hfill
    	\begin{subfigure}[b]{0.3\linewidth}
    	\centering
    	\includegraphics[width=\textwidth, trim={0 770bp 0 0}, clip ]{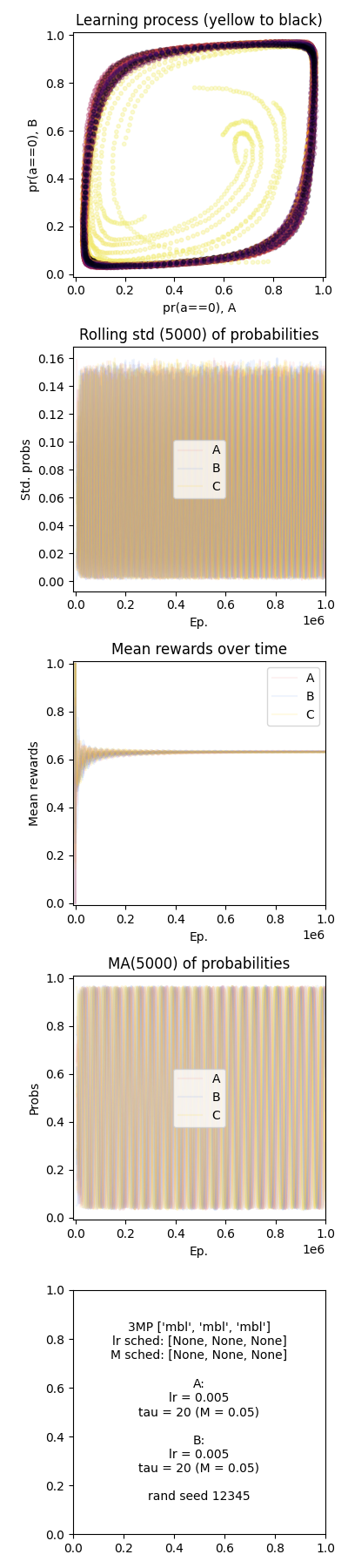}
    	\caption{$\tau = 20$, $M = 20^{-1}$}
    	\end{subfigure}
    	\hfill
    	\begin{subfigure}[b]{0.3\linewidth}
    	\centering
    	\includegraphics[width=\textwidth, trim={0 770bp 0 0}, clip ]{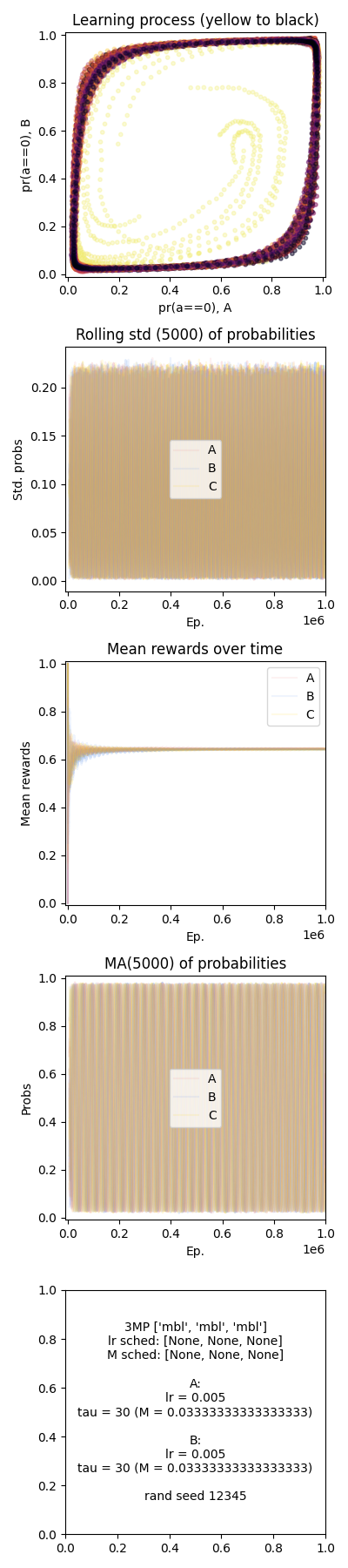}
    	\caption{$\tau = 30$, $M = 30^{-1}$}
    \end{subfigure}
    \end{center}
    \caption[MBL-LC in self-play on the 3MP game.]{MBL-LC in self-play on the 3MP game with different values for $\tau$ ($10$, $20$, $30$) or $M$ ($10^{-1}$, $20^{-1}$, $30^{-1}$) equivalently; $\theta = 10^{-4}$; for 10 different initialisations.
    (See figure \ref{fig:PD_MBL-DPU_high_mut} for a detailed explanation of the graphs.)
    }
    \label{fig:3MP_MBL-LC}
\end{figure}
\begin{figure}[h] %
    \begin{center}
    \begin{subfigure}[b]{0.3\linewidth}
   	\centering
   	\includegraphics[width=\textwidth, trim={0 770bp 0 0}, clip ]{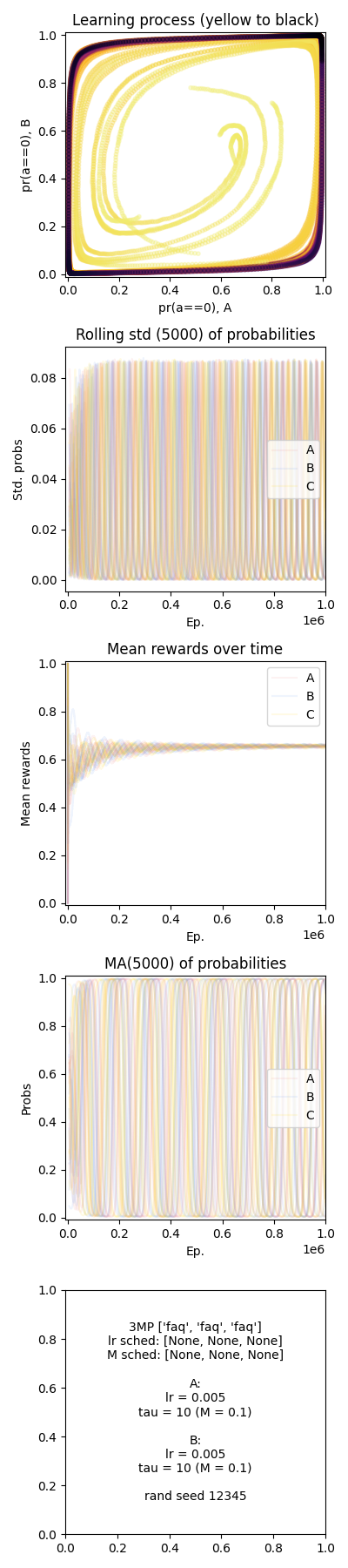}
    \caption{$\tau = 10$, $M = 10^{-1}$}
   	\end{subfigure}
   	\hfill
   	\begin{subfigure}[b]{0.3\linewidth}
   	\centering
   	\includegraphics[width=\textwidth, trim={0 770bp 0 0}, clip ]{figures/results-solon/3MP/faq_faq_faq/fixed_M_fixed_lr/viz/1000000_eps_20200421_1531_93290066331e__combo.png}
   	\caption{$\tau = 20$, $M = 20^{-1}$}
   	\end{subfigure}
   	\hfill
   	\begin{subfigure}[b]{0.3\linewidth}
   	\centering
   	\includegraphics[width=\textwidth, trim={0 770bp 0 0}, clip ]{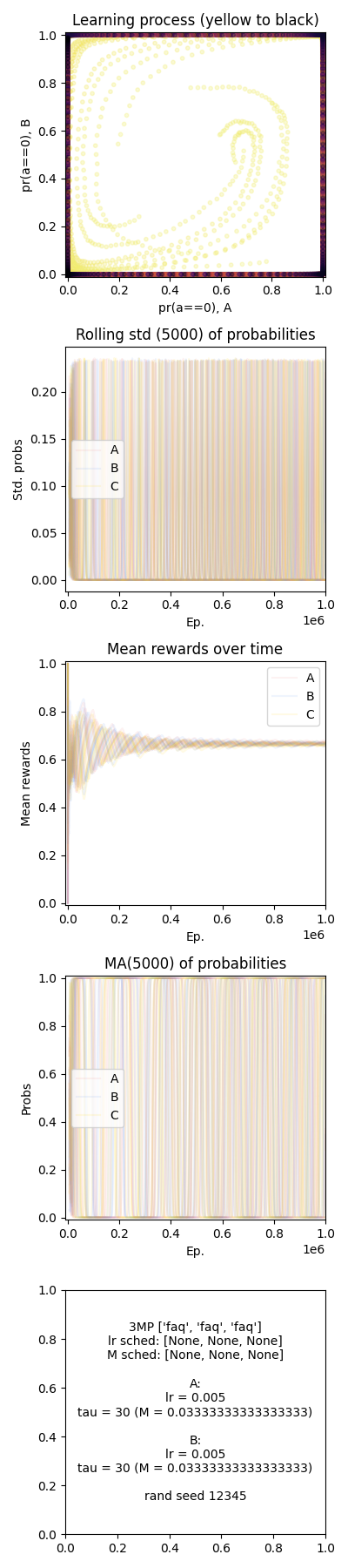}
   	\caption{$\tau = 30$, $M = 30^{-1}$}
    \end{subfigure}
    \end{center}
    \caption[FAQ in self-play on the 3MP game.]{FAQ in self-play on the 3MP game with different values for $\tau$ ($10$, $20$, $30$) or $M$ ($10^{-1}$, $20^{-1}$, $30^{-1}$) equivalently; $\theta = 10^{-4}$; for 10 different initialisations.
    (See figure \ref{fig:PD_MBL-DPU_high_mut} for a detailed explanation of the graphs.)
    }
    \label{fig:3MP_FAQ}
\end{figure}
\begin{figure}[h] %
    \begin{center}
    \begin{subfigure}[b]{0.3\linewidth}
   	\centering
   	\includegraphics[width=\textwidth, trim={0 770bp 0 0}, clip ]{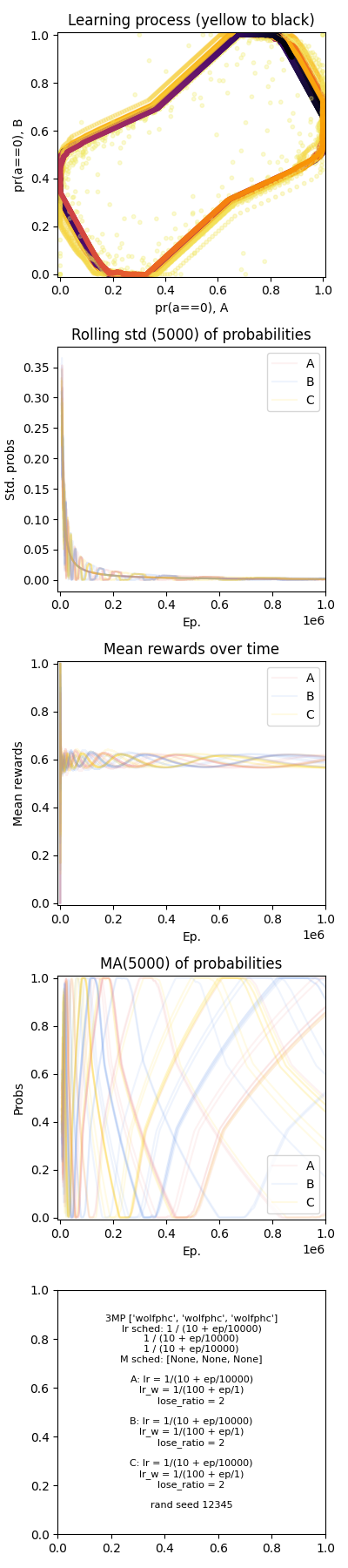}
   	\caption{Initial learning rate $10^{-1}$ for $Q$. Win learning rate $10^{-2}$.}
   	\end{subfigure}
   	\hfill
   	\begin{subfigure}[b]{0.3\linewidth}
   	\centering
   	\includegraphics[width=\textwidth, trim={0 770bp 0 0}, clip ]{figures/results-solon/3MP/wolfphc_wolfphc_wolfphc/fixed_M_reducing_lr/viz/1000000_eps_20201125_1805_fcdea3a245a6__combo.png}
    \caption{Initial learning rate $10^{-1}$ for $Q$. Win learning rate $1/2 \cdot 10^{-4}$.}
   	\end{subfigure}
   	\hfill
   	\begin{subfigure}[b]{0.3\linewidth}
   	\centering
   	\includegraphics[width=\textwidth, trim={0 770bp 0 0}, clip ]{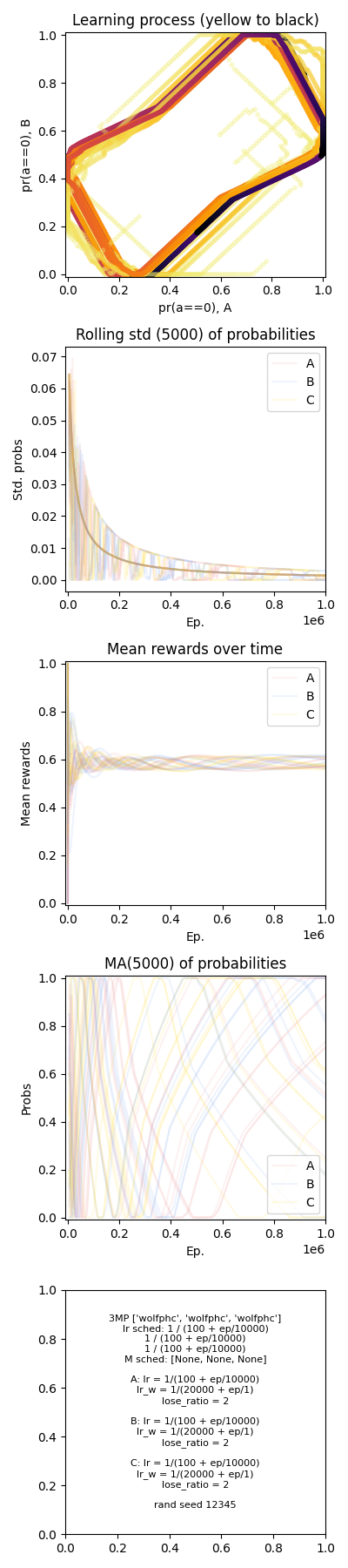}
   	\caption{Initial learning rate $10^{-2}$ for $Q$. Win learning rate $1/2 \cdot 10^{-4}$.}
    \end{subfigure}
    \end{center}
    \caption[WoLF-PHC in self-play on the 3MP game.]{WoLF-PHC in self-play on the 3MP game with different learning schedules; for 10 different initialisations.
    (See figure \ref{fig:PD_MBL-DPU_high_mut} for a detailed explanation of the graphs.)
    }
    \label{app:fig:3MP_WoLF-PHC}
\end{figure}

\end{document}